\newcommand{\noun}[1]{\textsc{#1}}
\providecommand{\tabularnewline}{\\}
\providecommand{\algorithmname}{Algorithm}
  \theoremstyle{definition}
  \newtheorem{defn}{\protect\definitionname}
 \theoremstyle{definition}
  \newtheorem{example}{\protect\examplename}
  \theoremstyle{definition}
  \newtheorem{problem}{\protect\problemname}
  \theoremstyle{plain}
  \newtheorem{assumption}{\protect\assumptionname}
\theoremstyle{plain}
\newtheorem{thm}{\protect\theoremname}
  \theoremstyle{plain}
  \newtheorem{lem}{\protect\lemmaname}
  \theoremstyle{plain}
  \newtheorem{cor}{\protect\corollaryname}
  \theoremstyle{plain}
  \newtheorem{prop}{\protect\propositionname}
\algnewcommand{\LineComment}[1]{\State \(\triangleright\) #1}
\DeclareMathOperator*{\argmax}{arg\,max}
\newtheorem{counter}{Counterexample}
\newcommand{\conf}[1]{\mathbf{#1}}
\newcommand{\EGPName}{{\sc Extend-To-Global-Projection}}
\newcommand{\ESName}{{\sc Extend-To-Subtree}}
\newcommand{\SESName}{{\sc Single-Extend-To-Subtree}}
\renewcommand*{\appendixname}{}
  \providecommand{\assumptionname}{Assumption}
  \providecommand{\definitionname}{Definition}
  \providecommand{\examplename}{Example}
  \providecommand{\lemmaname}{Lemma}
  \providecommand{\problemname}{Problem}
  \providecommand{\propositionname}{Proposition}
\providecommand{\corollaryname}{Corollary}
\providecommand{\theoremname}{Theorem}
\begin{document}

\begin{frontmatter}{}

\title{Sufficient and necessary conditions for Dynamic Programming in Valuation-Based
Systems. }

\author[iiia]{Jordi Roca-Lacostena}

\ead{jroca@iiia.csic.es}

\author[iiia]{Jesus Cerquides}

\ead{cerquide@iiia.csic.es}

\author[lucerne]{Marc Pouly}

\ead{marc.pouly@hslu.ch}

\fntext[iiia]{IIIA-CSIC, Campus UAB, 08193 Cerdanyola, Spain}

\fntext[lucerne]{Lucerne University of Applied Sciences and Arts, Technikumstrasse
21, 6048 Horw, Switzerland}
\begin{abstract}
Valuation algebras abstract a large number of formalisms for automated
reasoning and enable the definition of generic inference procedures.
Many of these formalisms provide some notion of solution. Typical
examples are satisfying assignments in constraint systems, models
in logics or solutions to linear equation systems. 

Many widely used dynamic programming algorithms for optimization problems
rely on low treewidth decompositions and can be understood as particular
cases of a single algorithmic scheme for finding solutions in a valuation
algebra. The most encompassing description of this algorithmic scheme
to date has been proposed by Pouly and Kohlas together with sufficient
conditions for its correctness. Unfortunately, the formalization relies
on a theorem for which we provide counterexamples. In spite of that,
the mainline of Pouly and Kohlas' theory is correct, although some
of the necessary conditions have to be revised. In this paper we analyze
the impact that the counter-examples have on the theory, and rebuild
the theory providing correct sufficient conditions for the algorithms.
Furthermore, we also provide necessary conditions for the algorithms,
allowing for a sharper characterization of when the algorithmic scheme
can be applied.
\end{abstract}

\end{frontmatter}{}

\section{Introduction}

Solving optimization problems is an important and well-studied task
in computer science. There are many optimization problems whose solution
can be expressed as an assignment of values to a set of variables.
Usually, the larger the number of variables involved in the problem,
the more complex it is to find a solution. A particular approach to
tackle problems whose solution involves a large number of variables
is known as dynamic programming \cite{Bellman1957} and can be found
in almost every handbook about algorithms and programming techniques
\cite{Cormen2009,Skiena2008}. 

The initial works of Bellman and Dreyfus \cite{Bellman1957,Bellman1962}
studied the problem from a decision making perspective and used the
term optimal policy instead of solution and decision instead of variable.
They advocated solving the problem by performing a sequence of steps,
which they associated with an artificial time-like property, hence
the name dynamic. At each step, the values for some of the variables
were determined, based on the values determined in the previous steps.
These works establish the basis of serial dynamic programming. In
order to understand when such a technique could be applied, Bellman
enunciated the \emph{Principle of Optimality:}
\begin{quotation}
An optimal policy has the property that whatever the initial state
and initial decision are, the remaining decisions must constitute
an optimal policy with regard to the state resulting from the first
decision.
\end{quotation}
Different formalizations of the principle have been proposed. Karp
and Held \cite{Karp1967} concentrate on the sequential nature of
dynamic programming. Non-serial dynamic programming is introduced
later, among others, by Bertelè and Brioschi \cite{Bertele1972,Esogbue1974}.
Helman \cite{Helman1989} formalizes a wider view of dynamic programming
based on the idea of computationally feasible dominance relations.
This formalization is later reformulated in a categorical setting
by Bird and de Moor \cite{Bird1997} and successfully translated into
a generic program\footnote{Here we refer to the \emph{generic programming} idea of Dehnert and
Stepanov \cite{Dehnert2000,Stepanov2014} of trying to provide algorithms
that work in the most general setting without loss of efficiency.} \cite{DeMoor1995,DeMoor1999}.  

More recently, Lew and Mauch \cite{Lew2006} proposed a formalization
that takes as central object the dynamic programming functional equation,
which can be automatically translated into efficient code. Also, Sniedovich
\cite{Sniedovich2010,Sniedovich2006} explored the fundations of dynamic
programming presenting a ``recipe'' and formally defining a \emph{decomposition
scheme} as the key concept for dynamic programming. Hovewer, in each
of these later works, dynamic programming is presented as an algorithm
that can be applied to optimize functions taking values in the real
numbers.

Some of the later research \cite{Buresh-Oppenheim2011,Jukna2014}
is concerned with finding more constrained models for dynamic programming,
which enable the finding of limitations for dynamic programming solutions.

In a parallel and more algebraic path of research lies the approach
taken by Mitten \cite{Mitten1964a} and further generalized by Shenoy
\cite{Shenoy1996}, for functions taking values in any ordered set
$\Delta$. Shenoy introduces a set of axioms that later on will be
known as valuation algebras. In those terms, Shenoy is the first to
connect the concept of solution with the projection operation of the
valuation algebra. 

In a further generalization effort, Pouly and Kohlas \cite{Pouly2011a,Pouly2011c}
drop the assumption that valuations are functions that map tuples
into a value set $\Delta.$ They introduce three different algorithms,
that we have named \EGPName, \ESName\  and \noun{\SESName} and
provide sufficient conditions for their correctness. Pouly and Kohlas'
algorithms are more general than their predecessors in the literature.
This increased generality comes at no computational cost, since when
applied in the previously covered scenarios, their particularization
coincides exactly with the previously proposed algorithm. Furthermore,
by dropping the assumption that valuations are functions, their algorithms
can be applied to previously uncovered cases such as the solution
of linear equation systems or the algebraic path problem\cite{Zimmermann1981}.

Against this background, in this paper we establish by means of counterexamples
that, unfortunately, one of the fundamental theorems in Pouly and
Kohlas' theory is incorrect. Since the theorem is used in the proofs
of several other results in their work, uncertainty spreads over the
truth of these now potentially falsable results. In the paper we analyze
the impact on the theory and clarify which statements were true but
incorrectly proven and which of them were false. For the true ones,
we provide a correct proof whilst for the false ones we identify the
additional conditions required for their correctness. 

The contribution of the paper is not limited to correcting Pouly and
Kohlas' theory. We do introduce two new concepts: projective completability
and piecewise completability. We show that projective completability
is a sufficient condition for the \EGPName\  algorithm, whereas piecewise
completability is a sufficient condition for the \ESName\  algorithm.
Furthermore, we do also show that they are a necessary condition.
To the best of our knowledge, this is the first time in which necessary
conditions for dynamic programming algorithms on valuation-based systems
are identified. 

A particularly relevant subfamily of valuation algebras, known as
\emph{semiring induced valuation algebras }\cite{Kohlas2008}, underlie
the foundation of many important artificial intelligence formalisms
such as constraint systems, probability potentials for Bayesian networks
or Spohn potentials. Many optimization problems can be formalized
by means of the valuation algebra induced by a selective conmutative
semiring. We revise the sufficient conditions (defined in terms of
properties of the semiring) proposed by Pouly and Kohlas \cite{Pouly2011a,Pouly2011c},
and provide correct sufficient conditions for each of the algorihtms.
Furthermore, where possible we also provide necessary conditions.

The paper is structured as follows. In section \ref{sec:Background}
we review valuation algebras, covering join trees and the basic algorithms
for assessing one projection \noun{(Collect}) and several projections
(\noun{Collect+Distribute}) of a factorized valuation. After that,
in section \ref{sec:GenericSolutions} we present the solution finding
problem, the abstract problem underlying optimization problems, and
we show by means of counterexamples that one of the results in Pouly
and Kohlas' work is not correct. Later, in section \ref{sec:Impact}
we analyze why disproving the result has a deep impact on the theory.
As a consequence, in section \ref{sec:NewConditions} we identify
new sufficient conditions for the algorithms. Furthermore, we prove
that these conditions are also necessary. Since our conditions are
weaker, we can use them to provide new proofs for the results in Pouly
and Kohlas' theory affected by the counterexamples. Then, in section
\ref{sec:Optimization}, we study the specific case of semiring induced
valuation algebras and provide sufficient and necessary conditions
there in terms of properties of the semiring. Finally, we conclude
in section \ref{sec:Conclusions}.
\begin{quotation}
\end{quotation}

\section{Background\label{sec:Background}}

 In this section we start by defining valuation algebras. Later on,
we introduce the problem of assessing the projection of a factorized
valuation and review the \noun{Collect} algorithm to solve that problem.
Finally we review the algorithm used to assess multiple projections
of a factorized valuation.

\subsection{Valuation algebras}

The basic elements of a valuation algebra are so-called \emph{valuations},
that we subsequently denote by lower-case Greek letters such as $\phi$
or $\psi.$ Let $\Phi$ be a set of valuations and $U=\{u_{1},u_{2},\dots,u_{|U|}\}$
be a finite set of variables. A valuation algebra $(\Phi,U)$ has
three operations:
\begin{enumerate}
\item \emph{Labeling: $\Phi\rightarrow\mathcal{P}(U);\phi\mapsto d(\phi),$}
\item \emph{Combination: $\Phi\times\Phi\rightarrow\Phi;(\phi,\psi)\mapsto\phi\times\psi,$}
\item \emph{Projection: $\Phi\times\mathcal{P}(U)\rightarrow\Phi;(\phi,X)\mapsto\phi^{\downarrow X}$
}for \emph{$X\subseteq d(\phi).$}
\end{enumerate}
satisfying the following axioms:
\begin{description}
\item [{A1}] \emph{Commutative semigroup}: $\Phi$ is associative and commutative
under $\times.$
\item [{A2}] \emph{Labeling: }For $\psi,\phi\in\Phi,$ $d(\phi\times\psi)=d(\phi)\cup d(\psi).$
\item [{A3}] \emph{Projection: }For $\phi\in\Phi,$ and $X\subseteq d(\phi),$
$d(\phi^{\downarrow X})=X.$
\item [{A4}] \emph{Transitivity: }For $\phi\in\Phi$ and $X\subseteq Y\subseteq d(\phi),$
$(\phi^{\downarrow Y})^{\downarrow X}=\phi^{\downarrow X}.$
\item [{A5}] \emph{Combination: }For $\phi,\psi\in\Phi$ with $d(\phi)=X$,
$d(\psi)=Y$, and $Z\in\mathcal{P}(U)$ such that $X\subseteq Z\subseteq X\cup Y,$
$(\psi\times\phi)^{\downarrow Z}=\phi\times\psi^{\downarrow(Z\cap Y)}.$
\item [{A6}] \emph{Domain: }For $\phi\in\Phi$ with $d(\phi)=X,$ $\phi^{\downarrow X}=\phi.$
\end{description}
We say that a valuation $e\in\Phi$ is an identity valuation provided
that $d(e)=\emptyset$ and $\phi\times e=\phi$ for each $\phi\in\Phi.$
As proven in \cite{Pouly2012}, any valuation algebra that does not
have and identity valuation can easily be extended to have one. In
the following and without loss of generality we assume that our valuation
algebra has an identity valuation $e.$ Let $\Gamma=\{\gamma_{1},\ldots,\gamma_{n}\}$
be a set of valuations. We define $\prod_{\gamma\in\Gamma}\gamma$
as $e\times\gamma_{1}\times\cdots\times\gamma_{n}.$

\begin{defn}
Let $U$ be a finite set of variables and let $D_{i}$ denote the
domain of variable $u_{i},$ i.e. the set of its possible values.
Define further $D=\bigcup_{i=1}^{|U|}D_{i}$. A tuple $\mathbf{x}$
with domain $X\in\mathcal{P}(U)$ is a map $\mathbf{x}:X\rightarrow D$
such that $\mathbf{x}(u_{i})\in D_{i}$ for all $u_{i}\in X.$ Let
$\Omega_{X}$ denote the set of all tuples with domain $X$ if $X\neq\emptyset$
and set $\Omega_{\emptyset}=\{\diamond\}$ where $\diamond$ is introduced
for convenience and can be understood as the empty tuple. We denote
the set of all tuples as $\Omega=\bigcup_{X\in\mathcal{P}(U)}\Omega_{X}.$
A pair $\langle U,\Omega\rangle$ is known as a \emph{variable system}.
\end{defn}
Three basic operations are defined on tuples:
\begin{enumerate}
\item Labeling: $\Omega\rightarrow\mathcal{P}(U)$;$\mathbf{x}\mapsto d(\mathbf{x})$
such that $d(\mathbf{x})=X$ if and only if $\mathbf{x}\in\Omega_{X}.$
\item Projection: $\Omega\times\mathcal{P}(U)\rightarrow\Omega$;$(\mathbf{x},Y)\mapsto\mathbf{x}^{\downarrow Y}$,
defined when $Y\subseteq d(\mathbf{x})$, where $\mathbf{x}^{\downarrow Y}$
is a tuple with domain $Y$ defined as $\mathbf{x}^{\downarrow Y}(u_{i})=\mathbf{x}(u_{i})$
for any $u_{i}\in Y$ if $Y\neq\emptyset$ and $\mathbf{x}^{\downarrow\emptyset}=\diamond.$ 
\item Concatenation: $\Omega\times\Omega\rightarrow\Omega$;$(\mathbf{x},\mathbf{y})\mapsto\langle\mathbf{x},\mathbf{y}\rangle,$
defined when $\mathbf{x}^{\downarrow d(\mathbf{x})\cap d(\mathbf{y})}=\mathbf{y}^{\downarrow d(\mathbf{x})\cap d(\mathbf{y})}$,
where $\mathbf{z}=\langle\mathbf{x},\mathbf{y}\rangle$ is a tuple
with domain $d(\mathbf{x})\cup d(\mathbf{y})$ such that $\mathbf{z}(u_{i})=\begin{cases}
\mathbf{x}(u_{i}) & \mbox{ if }u_{i}\in d(\mathbf{x})\\
\mathbf{y}(u_{i}) & \mbox{ otherwise.}
\end{cases}$ 
\end{enumerate}
Note that, although sharing the same name, the labeling and projection
operations on tuples are not connected to the equivalently named operations
defined on valuations. 

We illustrate the previous concepts with an example of valuation algebra.
\begin{example}
\label{ex:ValAlgBooleanFunctions}

Let $U$ be a finite set of binary variables (that is, for each $u_{i}\in U,$
$D_{i}=\{0,1\})$. The set of valuations $\Phi$ is composed of all
the functions $\phi:\Omega_{X}\rightarrow\{0,1\}$ , where $X\subseteq U$.
The labeling operation is defined by $d(\phi)=X$. The combination
of two valuations $\phi,\psi$ is the valuation \emph{$(\phi\times\psi)(\mathbf{x})=\phi(\mathbf{x}^{\downarrow d(\phi)})\cdot\psi(\mathbf{x}^{\downarrow d(\psi)}),$}
where $\cdot$ is the boolean product. The projection of a valuation
$\phi$ with $d(\phi)=X$ to a domain $Y\subseteq X$ is the valuation
\emph{$\phi^{\downarrow Y}(\mathbf{y})={\displaystyle \max{}_{\mathbf{z}\in\Omega_{X-Y}}\phi(\langle\mathbf{y},\mathbf{z}\rangle).}$
}As proven in \cite{Kohlas2003} this valuation algebra of indicator
functions satisfies axioms A1-A6.

In this paper we will be interested in valuation algebras with a variable
system. Some relevant examples are relational algebra, which is fundamental
to databases, the algebra of probability potentials, which underlies
many results in probabilistic graphical models and the more abstract
class of semiring induced valuation algebras \cite{Kohlas2008,Pouly2011a}.
\end{example}

\subsection{Assessing the projection of a factorized valuation}

A relevant problem in many valuation algebras is the problem of assessing
the projection of a factorized valuation. 
\begin{problem}
\label{pbm:Projection}Let $(\Phi,U)$ be a valuation algebra, $\phi_{1},\ldots,\phi_{n}$
be valuations in $\Phi$, and $X\subseteq d(\phi_{1})\cup\dots\cup d(\phi_{n})$.
Assess $\left(\phi_{1}\times\dots\times\phi_{n}\right)^{\downarrow X}.$
\end{problem}
Note that when our valuations are probability potentials, this is
the well studied problem of assessing the marginal of a factorized
distribution, also known as Markov Random Field. 

The \noun{Fusion} algorithm \cite{Shenoy1991} (a.k.a. variable elimination)
or the \noun{Collect} algorithm (a.k.a. junction tree or cluster tree
algorithm)\cite{Pouly2011a,Pouly2011c} can be used to assess projections
of factorized valuations. Since our results build on top of the \noun{Collect}
algorithm, we provide a more accurate description below.

A necessary condition to run the \noun{Collect} algorithm is organizing
the valuations $\phi_{1},\ldots,\phi_{n}$ into a covering join tree,
which we introduce after some basic definitions. 

An \emph{undirected graph} is a pair $(V,E),$ where $V$ is a set
of nodes and a $E\subseteq\{\{i,j\}|i\in V,j\in V\}$ is a set of
edges. The set of neighbors of a node $i$ is $ne(i)=\{j|\{i,j\}\in E\}.$
A \emph{tree }is a undirected connected graph without loops. A \emph{labeled
tree} is any tree $(V,E)$ together with a function $\lambda:V\rightarrow\mathcal{P}(U)$
that links each node with a single domain in $\mathcal{P}(U)$. A
\emph{join tree} is a labeled tree $\mathcal{\mathcal{T}}=(V,E,\lambda,U)$
such that for any $i,j\in V$ it holds that $\lambda(i)\cap\lambda(j)\subseteq\lambda(k)$
for all nodes $k$ on the path between $i$ and $j$. In that case,
we say that $\mathcal{T}$ satisfies the running intersection property.
For each edge $\{i,j\}\in E,$ we define the separator between $i$
and $j$ as $s_{ij}=\lambda(i)\cap\lambda(j).$

\begin{defn}
\label{def:CoveringJoinTree}Given a valuation $\phi$ that factorizes
as $\phi=\phi_{1}\times\dots\times\phi_{n},$ we say that a join tree
$\mathcal{\mathcal{T}}=(V,E,\lambda,U)$ is a \emph{covering join
tree} for this factorization if for all $\phi_{j}$ there is a node
$i\in V$ such that $d(\phi_{j})\subseteq\lambda(i)$ . In that case
it is always possible to define a \emph{valuation assignment}, that
is a function $a:\{1,\ldots n\}\rightarrow V$, such that for all
$j\in\{1,\ldots,n\},$ $d(\phi_{j})\subseteq\lambda(a(j))$, that
assigns each valuation to one and only one of the nodes in the tree.
Thus, given a node $j,$ $a^{-1}(j)$ stands for the set of valuations
which are assigned to node $j.$ For each node $i$ in the covering
join tree we define $\psi_{i}=\prod_{j\in a^{-1}(i)}\phi_{j}.$ Note
that $\phi$ factorizes as $\phi=\prod_{i\in V}\psi_{i}.$ 
\end{defn}
The complexity of each of the algorithms presented in the paper increases
with the cardinality of $\lambda(i).$ Thus we want our sets $\lambda(i)$
to be as small as possible. In this work we will make the assumption
that the covering join trees are minimally labelled.
\begin{assumption}
\label{assu:Minimal}The nodes in a covering join tree are minimally
labeled, that is for each $i\in V$, and for each $k\in ne(i)$ 
\begin{equation}
\lambda(i)=d(\psi_{i})\cup\bigcup_{j\in ne(i)-\{k\}}s_{ij}.\label{eq:MinimalLabeling}
\end{equation}
 
\end{assumption}
Intuitively, the assumption means that the scope of a node does not
contain unnecessary variables. Note that given a tree and a valuation
assignment $a$, there is an easy way\footnote{See appendix \ref{sec:MinimallyLabeledJoinTrees} for more details}
to assess a minimally labelled covering join tree. Since, the so assessed
tree leads to smaller costs for the algorithms, the assumption can
be considered to be without loss of generality from a practical point
of view and simplifies the proofs.

\begin{defn}
A \emph{rooted join tree} is a join tree where one of the nodes has
been designated as root. Let $i$ be a node in a rooted join tree
whose root is $r$. The \emph{parent} of a node $i$, $p_{i}$ is
the node directly connected to it on the path to the root. Every node
except the root has a unique parent. The \emph{separator} of $i,$
$s_{i}$ is defined as $s_{i}=\begin{cases}
\emptyset & \mbox{\ \ if }i=r\\
s_{ip_{i}} & \mbox{\ \ otherwise}
\end{cases}.$\\
We note $ch(i)$, the set containing the children of $i$ (those nodes
whose parent is $i$), $de(i)$ the set containing the descendants
of $i$ (those nodes that have $i$ in their path to the root), and
$nde(i)$ as the set containing those nodes of $\mathcal{T}$ which
are not descendants of $i,$ namely $nde(i)=V\setminus(de(i)\cup\{i\}).$

\end{defn}

\begin{defn}
Let $I=\langle i_{1},\ldots,i_{n}\rangle$ be an ordering of the nodes
of the rooted tree $\mathcal{T}.$ We say that $I$ is \emph{upward}
if every node appears after all of its children. We say that $I$
is \emph{downward} if every node appears before any of its children.
\end{defn}
Algorithm \ref{alg:Collect} provides a description of the \noun{Collect}
algorithm. It is based on sending messages upwards, through the edges
of the covering join tree, until the root node is reached. The message
$\mu_{i\rightarrow p_{i}}$ sent from node $i$ to its parent summarizes
the information in the subtree rooted at $i$ which is relevant to
its parent. The running intersection property guarantees that no information
is lost.

\begin{algorithm}
{\footnotesize{}\begin{algorithmic}[1]
\ForAll{nodes $i$  of $\mathcal T$}
	\State $\psi_i := \prod_{j\in a^{-1}(i)}\phi_j$
	\State $\psi'_i := \psi_i$
\EndFor
\ForAll{nodes $i$ of $\mathcal T$ except the root in an upward order }
	\State $\mu_{i \rightarrow p_i} := {\psi'_i}^{\downarrow s_i}$
	\State $\psi'_{p_i} :=  \psi'_{p_i} \times \mu_{i \rightarrow p_i}$
\EndFor
\State \Return $\Psi, \Psi',\boldsymbol{\mu}$ where $\Psi=\{\psi_i|i\in V\}, \Psi'=\{\psi'_i|i\in V\}, \boldsymbol{\mu}=\{\mu_{i\rightarrow p_i}|i\in V-\{r\}\}$
\end{algorithmic}}{\footnotesize \par}

\protect\caption{\label{alg:Collect}\noun{Collect} algorithm}
\end{algorithm}

\begin{thm}
\label{thm:Collect}After running Algorithm \ref{alg:Collect} \noun{(Collect)}
over the nodes of a rooted covering join tree for $\phi=\prod_{k}\phi_{k}$,
we have that $\psi'_{i}=(\psi_{i}\times\prod_{j\in de(i)}\psi_{j})^{\downarrow\lambda(i)}$.
In particular, if $r$ is the root $\psi'_{r}=\phi^{\downarrow\lambda(r)}.$
\end{thm}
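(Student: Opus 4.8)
The plan is to prove the general formula $\psi'_i=\bigl(\psi_i\times\prod_{j\in de(i)}\psi_j\bigr)^{\downarrow\lambda(i)}$ by structural induction on the rooted join tree, following exactly the upward order used by the algorithm, so that the formula for a node is established only after it holds for all of its children. Writing $\Psi_{(i)}=\psi_i\times\prod_{j\in de(i)}\psi_j$ for the product of all valuations in the subtree rooted at $i$, I note that the descendants of $i$ partition into the subtrees of its children, so $\Psi_{(i)}=\psi_i\times\prod_{c\in ch(i)}\Psi_{(c)}$, and that the value finally produced by the algorithm is $\psi'_i=\psi_i\times\prod_{c\in ch(i)}\mu_{c\rightarrow i}$, since $\psi'_i$ is modified only while the children of $i$ are processed. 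The root statement $\psi'_r=\phi^{\downarrow\lambda(r)}$ is then just the instance $i=r$, because $de(r)=V\setminus\{r\}$ gives $\Psi_{(r)}=\prod_{i\in V}\psi_i=\phi$.

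For the base case (a leaf $i$, where $ch(i)=\emptyset$) the product over descendants is empty, so $\psi'_i=\psi_i$, and Assumption \ref{assu:Minimal} applied with the single neighbour $k=p_i$ forces $\lambda(i)=d(\psi_i)$; hence $\psi_i=\psi_i^{\downarrow\lambda(i)}$ by axiom A6. This is in fact the $ch(i)=\emptyset$ instance of the inductive step, so I would treat leaves and internal nodes uniformly. For a node $i$ with children $c_1,\dots,c_m$, the induction hypothesis gives $\psi'_{c}=\Psi_{(c)}^{\downarrow\lambda(c)}$, and since $s_c\subseteq\lambda(c)$, transitivity (A4) rewrites each message as $\mu_{c\rightarrow i}=(\psi'_{c})^{\downarrow s_c}=\Psi_{(c)}^{\downarrow s_c}$. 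It therefore remains to prove the purely local identity $\psi_i\times\prod_{k=1}^{m}\Psi_{(c_k)}^{\downarrow s_{c_k}}=\bigl(\psi_i\times\prod_{k=1}^{m}\Psi_{(c_k)}\bigr)^{\downarrow\lambda(i)}$, where Assumption \ref{assu:Minimal} gives exactly $\lambda(i)=d(\psi_i)\cup s_{c_1}\cup\dots\cup s_{c_m}$ (taking $k=p_i$ when $i$ is not the root).

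I would establish this identity by peeling the child subtrees off one at a time. Let $\Pi=\psi_i\times\prod_{k}\Psi_{(c_k)}$ and let $G_l=\psi_i\times\prod_{k\le l}\mu_{c_k\rightarrow i}\times\prod_{k>l}\Psi_{(c_k)}$ be the partially reduced product in which the first $l$ subtrees have already been shrunk to their messages; thus $G_0=\Pi$ and $G_m=\psi'_i$. The claim is $\Pi^{\downarrow d(G_l)}=G_l$ for every $l$, which at $l=m$ gives the result because $d(G_m)=\lambda(i)$. The step from $l-1$ to $l$ combines transitivity, to write $\Pi^{\downarrow d(G_l)}=G_{l-1}^{\downarrow d(G_l)}$ using $d(G_l)\subseteq d(G_{l-1})\subseteq d(\Pi)$, with one application of axiom A5 to the split $G_{l-1}=H\times\Psi_{(c_l)}$, where $H$ collects $\psi_i$, the earlier messages and the later full subtrees. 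Choosing the projection target $d(G_l)=d(H)\cup s_{c_l}$ makes the side condition $d(H)\subseteq d(G_l)\subseteq d(H)\cup d(\Psi_{(c_l)})$ of A5 hold, and A5 then yields $G_{l-1}^{\downarrow d(G_l)}=H\times\Psi_{(c_l)}^{\downarrow(d(G_l)\cap d(\Psi_{(c_l)}))}$.

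The crux, and the step I expect to be most delicate, is verifying that this residual projection of $\Psi_{(c_l)}$ is onto exactly $s_{c_l}$, i.e. that $d(H)\cap d(\Psi_{(c_l)})\subseteq s_{c_l}$. This is where the running intersection property does the real work: any variable shared between the subtree of $c_l$ and anything outside it (the valuations at $i$, an earlier separator, or a sibling subtree) lies on a tree-path crossing the edge $\{c_l,i\}$, hence belongs to $\lambda(c_l)\cap\lambda(i)=s_{c_l}$. Granting this, $d(G_l)\cap d(\Psi_{(c_l)})=s_{c_l}$ and A5 gives $G_{l-1}^{\downarrow d(G_l)}=H\times\Psi_{(c_l)}^{\downarrow s_{c_l}}=H\times\mu_{c_l\rightarrow i}=G_l$, closing the inner induction. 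The main effort of the whole argument is thus the set-theoretic bookkeeping: checking the A5 side conditions at each peel and reducing every ``interface $\subseteq$ separator'' containment to the running intersection property, while observing that a single direct application of A5 projecting straight down to $\lambda(i)$ fails its side condition, which is precisely why the peeling-plus-transitivity scheme is needed.
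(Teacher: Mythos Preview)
Your argument is correct: the structural induction together with the ``peel one child at a time'' use of A5 (and A4 to compose the successive projections) is precisely the standard proof of this Collect-correctness theorem, and you have identified the key set-theoretic step, namely that anything shared between the subtree of $c_l$ and the rest of $H$ lies in the separator $s_{c_l}$, which is exactly the running intersection property. Note that the paper does not actually prove this theorem itself; it simply cites Theorem~3.6 of \cite{Pouly2011a}, whose proof follows the same inductive scheme you propose. One small remark: your appeal to Assumption~\ref{assu:Minimal} to conclude $\lambda(i)=d(\psi_i)\cup\bigcup_{c\in ch(i)}s_c$ is immediate for non-root $i$ by choosing $k=p_i$, and for the root it follows by taking any child as $k$ together with the trivial containment $s_k\subseteq\lambda(r)$; you may want to say this explicitly.
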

The theorem is an adaptation of Theorem 3.6. in \cite{Pouly2011a}
where the proof can be found. As a consequence of this theorem, we
can use the \noun{Collect} algorithm to solve the projection problem
provided that we are given a rooted covering join tree for the factorization
we would like to project and that the set of variables $X$ which
we want to project to is a subset of $\lambda(r)$.

\subsection{Assessing several projections of a factorized valuation}

Many times we are required to assess the projections of a single factorized
valuation to different subsets of variables. The corresponding problem
can be defined as follows 
\begin{problem}
\label{pbm:Marginalization-1}Let $(\Phi,U)$ be a valuation algebra,
$\phi_{1},\ldots,\phi_{n}$ be valuations in $\Phi$, and $\mathcal{T}=(V,E,\lambda,U)$
a rooted covering join tree for $\phi=\prod_{k}\phi_{k}$ . For all
$i\in V$, assess $\phi^{\downarrow\lambda(i)}.$

The \noun{Collect+Distribute} algorithm (Algorithm \ref{alg:Collect+Distribute})
shows how the result of the \noun{Collect} algorithm can be used to
assess the remaining projections by communicating messages down the
tree. The next result shows that the \noun{Collect+Distribute} algorithm
can be used to solve problem \ref{pbm:Marginalization-1}.\end{problem}
\begin{thm}
After running the \noun{Collect+Distribute} algorithm over the nodes
of a rooted covering join tree for $\phi=\prod_{k}\phi_{k}$, we have
that $\psi'_{i}=\phi^{\downarrow\lambda(i)}.$ 
\end{thm}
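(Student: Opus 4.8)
The plan is to prove the invariant $\psi'_{i}=\phi^{\downarrow\lambda(i)}$ for every node $i$ by induction along a downward ordering of $\mathcal{T}$, using Theorem~\ref{thm:Collect} to seed the induction at the root. Throughout I would abbreviate the product over the subtree rooted at $i$ as $\beta_{i}=\psi_{i}\times\prod_{j\in de(i)}\psi_{j}$ and the product over the remaining nodes as $\alpha_{i}=\prod_{j\in nde(i)}\psi_{j}$, so that $\phi=\alpha_{i}\times\beta_{i}$. The base case is the root $r$: since $de(r)\cup\{r\}=V$, Theorem~\ref{thm:Collect} already yields $\psi'_{r}=\beta_{r}^{\downarrow\lambda(r)}=\phi^{\downarrow\lambda(r)}$, and the distribute phase leaves the root value unchanged. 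For a non-root node $i$ I would assume, as inductive hypothesis, that its parent already satisfies $\psi'_{p_{i}}=\phi^{\downarrow\lambda(p_{i})}$, and deduce the claim at $i$.

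The engine of the inductive step is the local identity
\[
\phi^{\downarrow\lambda(i)}=\beta_{i}^{\downarrow\lambda(i)}\times\alpha_{i}^{\downarrow s_{i}},
\]
whose first factor is exactly the value $\psi'_{i}$ returned by \noun{Collect} and whose second factor is the content of the message that $i$ receives from $p_{i}$. To derive it I would first record the combinatorial consequence of the running intersection property that any variable shared by a subtree node and a non-descendant must lie on the path through the edge $\{i,p_{i}\}$, so that $d(\alpha_{i})\cap d(\beta_{i})\subseteq s_{i}$ and, together with Assumption~\ref{assu:Minimal}, $\lambda(i)\cap d(\alpha_{i})=s_{i}$. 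Given these containments the identity is two applications of the combination axiom A5 chained by transitivity A4: projecting $\phi=\beta_{i}\times\alpha_{i}$ first to $\lambda(i)\cup d(\alpha_{i})$ replaces $\beta_{i}$ by $\beta_{i}^{\downarrow\lambda(i)}$ (because $(\lambda(i)\cup d(\alpha_{i}))\cap d(\beta_{i})=\lambda(i)$), and a second projection to $\lambda(i)$ then summarises $\alpha_{i}$ to $\alpha_{i}^{\downarrow s_{i}}$.

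It then remains to check that the message the distribute phase actually delivers to $i$ equals $\alpha_{i}^{\downarrow s_{i}}$. Here I would use the decomposition $nde(i)=\{p_{i}\}\cup\bigcup_{k\in ch(p_{i})-\{i\}}\big(de(k)\cup\{k\}\big)\cup nde(p_{i})$, which gives $\alpha_{i}=\psi_{p_{i}}\times\prod_{k\in ch(p_{i})-\{i\}}\beta_{k}\times\alpha_{p_{i}}$. Recalling that the \noun{Collect} message from a sibling $k$ is $\beta_{k}^{\downarrow s_{k}}$ and that the message into $p_{i}$ is $\alpha_{p_{i}}^{\downarrow s_{p_{i}}}$, repeated use of A5 and A4 (again justified by running intersection, since each subtree touches $s_{i}$ only through its separator) shows that $\alpha_{i}^{\downarrow s_{i}}$ is obtained by combining $\psi_{p_{i}}$, the sibling \noun{Collect} messages, and the downward message into $p_{i}$, and projecting to $s_{i}$ --- precisely the quantity the algorithm forms. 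Combining this with the local identity and the inductive hypothesis at $p_{i}$ closes the induction. Conceptually, this is the statement that \noun{Collect+Distribute} simultaneously realises a \noun{Collect} rooted at every node, reusing the same message on each edge in both directions.

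The step I expect to be the main obstacle is the domain bookkeeping rather than the algebra: every invocation of A5 requires the precise inclusion $X\subseteq Z\subseteq X\cup Y$ for the domains at hand, and the clean splitting of $\phi^{\downarrow\lambda(i)}$ rests on the containments $\lambda(i)\subseteq d(\beta_{i})$, $s_{i}\subseteq d(\alpha_{i})$ and $\lambda(i)\cap d(\alpha_{i})=s_{i}$. Establishing these requires a careful combination of the running intersection property with the minimal-labelling Assumption~\ref{assu:Minimal} (tracing separator variables back to actual valuation domains via a short induction up each subtree), and it is exactly where running intersection is genuinely needed to guarantee that summarising the subtree factors to their separators loses no information relevant to $s_{i}$.
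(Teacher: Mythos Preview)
The paper does not actually prove this theorem: it simply states that it is a rewriting of Theorem~4.1 in \cite{Pouly2011a} and refers the reader there. Your proposal is therefore not comparable to the paper's own argument, but it is the standard correctness proof for Shafer--Shenoy propagation, and the overall strategy (local factorisation $\phi^{\downarrow\lambda(i)}=\beta_{i}^{\downarrow\lambda(i)}\times\alpha_{i}^{\downarrow s_{i}}$, identification of $\beta_{i}^{\downarrow\lambda(i)}$ with the \noun{Collect} value, and of $\alpha_{i}^{\downarrow s_{i}}$ with the downward message) is sound. Incidentally, your local factorisation is exactly the paper's Lemma~\ref{lem:Decomposition} specialised to $X=s_{i}$, so you could shorten that part by citing it.

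One point to tighten: your stated inductive hypothesis is $\psi'_{p_{i}}=\phi^{\downarrow\lambda(p_{i})}$, but the quantity you actually use in the inductive step is the message identity $\mu_{p_{p_{i}}\to p_{i}}=\alpha_{p_{i}}^{\downarrow s_{p_{i}}}$. The former does not imply the latter in a general valuation algebra, since from $\beta_{p_{i}}^{\downarrow\lambda(p_{i})}\times\mu_{p_{p_{i}}\to p_{i}}=\beta_{p_{i}}^{\downarrow\lambda(p_{i})}\times\alpha_{p_{i}}^{\downarrow s_{p_{i}}}$ you cannot cancel $\beta_{p_{i}}^{\downarrow\lambda(p_{i})}$. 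The fix is easy: carry the message identity $\mu_{p_{j}\to j}=\alpha_{j}^{\downarrow s_{j}}$ as your inductive invariant (or add it to the hypothesis); the node identity $\psi'_{j}=\phi^{\downarrow\lambda(j)}$ then follows from it via your local factorisation, and the induction goes through unchanged.
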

The theorem is a rewriting of Theorem 4.1 in \cite{Pouly2011a} where
the proof can be found. 

\begin{algorithm}
{\footnotesize{}\begin{algorithmic}[1]
\State $\Psi, \Psi', \boldsymbol{\mu} \leftarrow $ \Call{Collect}{$\Phi, \mathcal T$}
\ForAll{nodes $i$ of $\mathcal T$ except the root in a downward order }
	\State $\mu_{p_i \rightarrow i} := \big({\psi_{p_i}}\times\prod_{j\in ne(p_i)-\{i\}}\mu_{j \rightarrow p_i}\big)^{\downarrow s_i}$
	\State $\psi'_i := \psi'_i \times \mu_{p_i \rightarrow i}$
\EndFor
\State \Return $\Psi'$;
\end{algorithmic}}{\footnotesize \par}

\protect\caption{\label{alg:Collect+Distribute}\noun{Collect+Distribute} algorithm}
\end{algorithm}

\section{Finding solutions in valuation algebras: definitions and counterexamples\label{sec:GenericSolutions}}

In the previous section we have shown that the \noun{Collect} algorithm
can be used to assess one projection and that the \noun{Collect+Distribute}
algorithm can be used when many projections are needed. In this section
we focus on the \emph{solution finding problem} (SFP). 

The problem is of foremost importance, since it lies at the foundation
of dynamic programming \cite{Shenoy1996,Bertele1972}. Furthermore,
problems such as satisfiability, solving Maximum a Posteriori queries
in a probabilistic graphical models, or maximum likelihood decoding
are particular instances of the SFP. 

We start by formally defining the problem. Then we review the concept
of family of configuration extension sets which lies the foundation
of the theory of generic solutions described in \cite{Pouly2011c,Pouly2011a}.
Unfortunately, although the inspirational ideas and algorithms underlying
Pouly and Kohlas' work are correct, their formal development is not.
Thus, we end up the section providing two counter examples to one
of their fundamental theorems.

\subsection{The solution finding problem}

Up to now, the most general formalization of the SFP is the one provided
by \cite{Shenoy1996} and adapted by Pouly and Kohlas to the formal
framework of valuation algebras in Chapter 8 of \cite{Pouly2011a}.
 As in the projection assessment problem, in the SFP we are given
a set of valuations $\phi_{1},\ldots,\phi_{n}\in\Phi$ as input. However,
instead of a projection of its combination $\phi=\phi_{1}\times\ldots\times\phi_{n}$,
we are required to provide a tuple $\mathbf{x}$ with domain $d(\phi)$,
such that $\mathbf{x}$ is a \textbf{solution} for $\phi$. To give
a proper sense to the previous sentence we need to define the meaning
of ``being a solution''.\emph{ }The most general way in which we
can do this is by defining a family $c=\{c_{\phi}|\phi\in\Phi\}$
of solution sets. For each valuation $\phi\in\Phi$, the solution
set $c_{\phi}\subseteq\Omega_{d(\phi)}$. Now, $\mathbf{x}$ is considered
a solution for $\phi$ if and only if $\mathbf{x}\in c_{\phi}.$ We
say that the family of sets $c$ is a \emph{solution concept}. Now
we can formally define the SFP as follows
\begin{problem}[Solution Finding Problem (SFP)]
\noindent  Given a valuation algebra $(\Phi,U)$, a variable system
$\langle U,\Omega\rangle,$ a solution concept $c$, and a set of
valuations $\phi_{1},\ldots,\phi_{n}\in\Phi,$ the \emph{single SFP}
requests to find any $\mathbf{x}\in\Omega_{d(\phi)}$ such that $\mathbf{x}$
is a solution for $\phi=\phi_{1}\times\ldots\times\phi_{n}.$ The
\emph{partial SFP} receives the same input and requests to assess
a subset of the set of solutions $c_{\phi}$. The \emph{complete SFP
}receives the same input and requests to assess the full set of solutions
$c_{\phi}.$ 
\end{problem}

\subsection{Solving the solution finding problem by completing partial solutions\label{sec:Solving-SFP}}

Finding a solution to a big problem using dynamic programming amounts
to (1) breaking it into smaller problems, (2) start from an empty
solution, and (3) progressively complete this partial solution so
that it solves each of the smaller problems. Since we assume the existence
of a variable system, the empty solution will have no value assigned
to any variable. Then, each subproblem solved will complete the partial
solution by assigning values to some of the unassigned variables.
After the process is finished, all variables have a value assigned
and this complete assignment is a solution. 

In their works in 2011, Pouly and Kohlas \cite{Pouly2011a,Pouly2011c}
provide a formal foundation to dynamic programming. They present several
algorithmic schemas, and characterize the sufficient conditions for
their correctness. Their algorithms can be applied to previously uncovered
dynamic programming applications, such as solving systems of linear
equations. The most exhaustive presentation of Pouly and Kohlas' theory
is done in \cite{Pouly2011a}. We refer to that text as PK. For example
we use ``Lemma PK8.1'' to refer to Lemma 8.1 in \cite{Pouly2011a}. 

To formalize the process of completing a partial solution, they introduce
sets of extensions. Intuitively, given a tuple $\mathbf{x}$ with
domain $X$ and a valuation $\phi,$ the set of extensions of $\mathbf{x}$
to $\phi$, $W_{\phi}^{X}(\mathbf{x})$ contains those tuples that
we can concatenate to $\mathbf{x}$ to obtain a solution of $\phi.$
We say that $\mathbf{y}$ is an extension of $\mathbf{x}$ to $\phi$
whenever $\mathbf{y\in}W_{\phi}^{X}(\mathbf{x}).$ Following that,
the set of extension $W_{\phi}^{\emptyset}(\diamond)$ contains tuples
which are solutions of $\phi,$ that is $W_{\phi}^{\emptyset}(\diamond)\subseteq c_{\phi}.$
Although for solving the single and partial SFP it could be useful
that $W_{\phi}^{\emptyset}(\diamond)\subsetneq c_{\phi},$ in this
work we assume (with no impact on the results presented) that $W_{\phi}^{\emptyset}(\diamond)=c_{\phi}.$
If we define $c_{\phi}^{\downarrow X}=\{\conf{y}^{\downarrow X}\mid\conf{y}\in c_{\phi}\}$,
Lemma PK8.1 proves that $c_{\phi^{\downarrow X}}=c_{\phi}^{\downarrow X}.$
To simplify notation, we will always use $c_{\phi^{\downarrow X}}.$ 

We can constitute a family $\mbox{\ensuremath{\mathcal{W}}}$ containing
a set of extensions $W_{\phi}^{X}(\mathbf{x}),$ for each $\phi\in\Phi$,
each $X\subseteq d(\phi)$ and each $\mathbf{x}\in\Omega_{X}$. In
order for their algorithms to work Pouly and Kohlas' impose a condition
on this family, that basically states that every extension can be
calculated in two steps. Namely that for each $\phi\in\Phi$, for
each $X\subseteq Y\subseteq d(\phi)$ and for each $\conf{x}\in c_{\phi^{\downarrow X}}$,
we have that 

\begin{center}
$\conf{z}$ is an extension of $\conf{x}$ to $\phi$ iff $\begin{array}{l}
\mbox{\ensuremath{\mathbf{z}^{\downarrow Y-X}} is an extension of \textbf{\ensuremath{\mathbf{x}}} to \ensuremath{\phi^{\downarrow Y}}, }\mbox{and}\\
\mbox{\ensuremath{\mathbf{z}^{\downarrow d(\phi)-Y}} is an extension of \ensuremath{\langle\mathbf{x},\mathbf{z}^{\downarrow Y-X}\rangle} to \ensuremath{\phi.} }
\end{array}$
\par\end{center}

More formally, 
\begin{defn}[Extension system]
\label{def:FCES}A family of extension sets $\mbox{\ensuremath{\mathcal{W}}}=\{W_{\phi}^{X}(\mathbf{x})\subseteq\Omega_{d(\phi)-X}|\phi\in\Phi,X\subseteq d(\phi),\mathbf{x}\in\Omega_{X}\}$
constitutes an \emph{extension system}\footnote{Note that Pouly and Kohlas' do never formally introduce extension
systems. Our definition here is slightly less constraining than their
informal definition. All of the counterexamples defined later do also
fulfill their informal definition. }\emph{ }if and only if 

\begin{equation}
W_{\phi}^{X}(\conf{x})=\{\langle\conf{y},\conf{z}\rangle|\conf{y}\in W_{\phi^{\downarrow Y}}^{X}(\conf{x})\mbox{ and }\conf{z}\in W_{\phi}^{Y}(\langle\conf{x},\conf{y}\rangle)\}\ \ \ \ \ \ \ \ \ \ \ \ \ \ \forall\mathbf{x\in}c_{\phi^{\downarrow X}}.\label{eq:FCES-condition}
\end{equation}
\end{defn}
\begin{example}
\label{ex:FCESValAlgBooleanFunctions}For the valuation algebra of
indicator functions introduced in example \ref{ex:ValAlgBooleanFunctions},
we can define a family of sets $\mathcal{W}=\{W_{\phi}^{X}(\mathbf{x})|\phi\in\Phi,X\subseteq d(\phi),\mathbf{x}\in\Omega_{X}\}$,
with each 
\begin{eqnarray}
W_{\phi}^{X}(\mathbf{x}) & = & \{\mathbf{y}\in\Omega_{d(\phi)-X}\mid\phi(\langle\mathbf{x},\mathbf{y}\rangle)=\phi^{\downarrow X}(\mathbf{x})\}\label{eq:FCESValAlgBooleanFunctions}\\
 & = & \{\mathbf{y}\in\Omega_{d(\phi)-X}\mid\phi(\langle\mathbf{x},\mathbf{y}\rangle)=\max_{\mathbf{z}\in\Omega_{d(\phi)-X}}\phi(\langle\mathbf{x,z}\rangle)\}\\
 & = & \argmax{}_{\mathbf{z}\in\Omega_{d(\phi)-X}}\phi(\langle\mathbf{x,z}\rangle).
\end{eqnarray}
As proven in \cite{Pouly2011a} $\mathcal{W}$ satisfies equation
\ref{eq:FCES-condition} and thus constitutes an extension system. 
\end{example}
Now that we have defined what it means to be an extension, we can
now formally define what we mean by a completion.

\begin{defn}[Completion]
 Given a valuation $\phi,$ a domain $X,$ and a configuration $\mathbf{x}\in\Omega_{X},$
we say that $\mathbf{y}$ is a completion of $\mathbf{x}$ to $\phi$
if, and only if, $d(\mathbf{y})=d(\phi),$ $\mathbf{y}^{\downarrow X}=\mathbf{x}$
and $\mathbf{y}^{\downarrow d(\phi)-X}\in W_{\phi}^{d(\phi)\cap X}(\mathbf{x}^{\downarrow d(\phi)\cap X}).$
We define the set of completions of $A\subseteq\Omega_{X}$ to $\phi$
as $CO(A,\phi)=\{\langle\conf{x},\mathbf{z}\rangle\mid\conf{x}\in A\mbox{ and }\conf{z}\in W_{\phi}^{d(\phi)\cap X}(\conf{x}^{\downarrow d(\phi)\cap X})\}\subseteq\Omega_{d(\phi)\cup X}.$ 
\end{defn}
Note that $c_{\phi}=CO(\{\diamond\},\phi).$

\subsection{A fundamental theorem and two counterexamples }

Based on the former definitions, Pouly and Kohlas state the following
theorem 
\begin{thm}[Theorem PK8.1]
\label{thm:8.1} For any valuation $\phi\in\Phi$ and any $X,Y\subseteq d(\phi)$,
we have
\begin{equation}
c_{\phi^{\downarrow X\cup Y}}=CO(c_{\phi^{\downarrow X}},\phi^{\downarrow Y}).\label{eq:thm:8.1}
\end{equation}

\end{thm}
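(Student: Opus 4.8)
The plan is to strip the statement down to a purely local identity about extension sets and then try to close it using only the extension-system axiom (\ref{eq:FCES-condition}). First I would use transitivity (A4) to assume without loss of generality that $d(\phi)=X\cup Y$: since $X,Y\subseteq X\cup Y\subseteq d(\phi)$ we have $(\phi^{\downarrow X\cup Y})^{\downarrow X}=\phi^{\downarrow X}$ and $(\phi^{\downarrow X\cup Y})^{\downarrow Y}=\phi^{\downarrow Y}$, so replacing $\phi$ by $\phi^{\downarrow X\cup Y}$ leaves both sides of (\ref{eq:thm:8.1}) untouched while forcing $X\cup Y=d(\phi)$ (the degenerate case $c_\phi=\emptyset$ is immediate, since then $c_{\phi^{\downarrow X}}$ and both sides are empty). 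Next I would instantiate (\ref{eq:FCES-condition}) at $\mathbf{x}=\diamond$ with intermediate domain $X$; using $W_\phi^{\emptyset}(\diamond)=c_\phi$ and $W_{\phi^{\downarrow X}}^{\emptyset}(\diamond)=c_{\phi^{\downarrow X}}$ this rewrites the left-hand side as $c_\phi=\{\langle\mathbf{x},\mathbf{z}\rangle:\mathbf{x}\in c_{\phi^{\downarrow X}},\ \mathbf{z}\in W_\phi^{X}(\mathbf{x})\}=CO(c_{\phi^{\downarrow X}},\phi)$. Since the right-hand side unfolds (using $d(\phi^{\downarrow Y})\cap X=X\cap Y$) to the set of $\langle\mathbf{x},\mathbf{z}\rangle$ with $\mathbf{x}\in c_{\phi^{\downarrow X}}$ and $\mathbf{z}\in W_{\phi^{\downarrow Y}}^{X\cap Y}(\mathbf{x}^{\downarrow X\cap Y})$, the whole theorem reduces to the pointwise identity $W_\phi^{X}(\mathbf{x})=W_{\phi^{\downarrow Y}}^{X\cap Y}(\mathbf{x}^{\downarrow X\cap Y})$ for every $\mathbf{x}\in c_{\phi^{\downarrow X}}$.

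To attack this identity I would set $\mathbf{w}=\mathbf{x}^{\downarrow X\cap Y}$ and decompose the single set $W_\phi^{X\cap Y}(\mathbf{w})$ in two different ways through (\ref{eq:FCES-condition}), both from base domain $X\cap Y$: once with intermediate domain $Y$, which exposes the factor $W_{\phi^{\downarrow Y}}^{X\cap Y}(\mathbf{w})$, and once with intermediate domain $X$, which exposes the factor $W_\phi^{X}(\langle\mathbf{w},\mathbf{c}\rangle)$. Writing a generic $\mathbf{t}\in\Omega_{d(\phi)-(X\cap Y)}$ as the concatenation of $\mathbf{t}^{\downarrow Y-X}$ and $\mathbf{t}^{\downarrow X-Y}$ and equating the two membership criteria, I expect to obtain, for the fixed $\mathbf{c}=\mathbf{x}^{\downarrow X-Y}$, the description $W_\phi^{X}(\mathbf{x})=\{\mathbf{a}\in W_{\phi^{\downarrow Y}}^{X\cap Y}(\mathbf{w}):\mathbf{c}\in W_\phi^{Y}(\langle\mathbf{w},\mathbf{a}\rangle)\}$. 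This already delivers one inclusion for free, namely $W_\phi^{X}(\mathbf{x})\subseteq W_{\phi^{\downarrow Y}}^{X\cap Y}(\mathbf{w})$, and hence $c_{\phi^{\downarrow X\cup Y}}\subseteq CO(c_{\phi^{\downarrow X}},\phi^{\downarrow Y})$.

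The hard part will be the reverse inclusion, and I expect it to be the genuine obstacle. It holds exactly when every $Y$-extension $\mathbf{a}\in W_{\phi^{\downarrow Y}}^{X\cap Y}(\mathbf{w})$ is compatible with the fixed choice $\mathbf{c}$ on $X-Y$, i.e. when $\mathbf{c}\in W_\phi^{Y}(\langle\mathbf{w},\mathbf{a}\rangle)$ for all such $\mathbf{a}$. Intuitively this demands that, once the shared coordinates $X\cap Y$ are pinned to $\mathbf{w}$, the admissible completions on $X-Y$ and on $Y-X$ must \emph{decouple}. Nothing in (\ref{eq:FCES-condition}) forces this: the axiom only governs how extensions factor through \emph{nested} projections, and says nothing about how the extensions of two merely \emph{overlapping} projections interact. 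I therefore anticipate that either an extra structural hypothesis on the solution concept is required to finish, or that the identity fails outright, a counterexample being some $\phi$ whose solutions couple the $X\setminus Y$ and $Y\setminus X$ coordinates in a manner invisible to $\phi^{\downarrow X}$ and $\phi^{\downarrow Y}$ taken separately.
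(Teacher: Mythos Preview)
Your analysis is correct, and in fact you have arrived at exactly the conclusion the paper reaches: Theorem~PK8.1 is \emph{false}. The paper does not prove it; it disproves it by counterexample. Your reduction to $d(\phi)=X\cup Y$ is sound, your derivation of the description
\[
W_\phi^{X}(\mathbf{x})=\{\mathbf{a}\in W_{\phi^{\downarrow Y}}^{X\cap Y}(\mathbf{w}):\mathbf{c}\in W_\phi^{Y}(\langle\mathbf{w},\mathbf{a}\rangle)\}
\]
is correct, and the inclusion $c_{\phi}\subseteq CO(c_{\phi^{\downarrow X}},\phi^{\downarrow Y})$ you obtain is precisely the ``easy'' half that the paper later isolates (it reappears in the proof of Corollary~\ref{cor:Total-Projective-Completability}). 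Your diagnosis of the reverse inclusion is also exactly right: axiom~(\ref{eq:FCES-condition}) only controls nested projections and cannot force the $X\setminus Y$ and $Y\setminus X$ completions to decouple once $X\cap Y$ is fixed.

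The paper's Counterexample~\ref{cnt:Boolean} is the concrete instance you anticipated: in the indicator-function algebra take $X=\{x\}$, $Y=\{y\}$ (so $X\cap Y=\emptyset$) and $\phi(\mathbf{z})=\mathbb{1}_{\mathbf{z}(x)=\mathbf{z}(y)}$. Then $c_\phi=\{(0,0),(1,1)\}$ couples the two coordinates, while $\phi^{\downarrow X}$ and $\phi^{\downarrow Y}$ are both identically~$1$, so $CO(c_{\phi^{\downarrow X}},\phi^{\downarrow Y})=\Omega_{\{x,y\}}\supsetneq c_\phi$. The missing hypothesis you suspected is what the paper calls \emph{projective completability}; it is shown to be both necessary and sufficient for the identity (and hence for the \EGPName\ algorithm) to be valid.
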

Unfortunately, the theorem is not correct. To understand the theorem
and what goes wrong we can concentrate in the simpler particular case
in which $X\cup Y=d(\phi).$
\begin{thm}[Simplified version of Theorem PK8.1 in \cite{Pouly2011a}]
\label{thm:8.1-2} For any valuation $\phi\in\Phi$ and any $X,Y$
such that $X\cup Y=d(\phi)$, we have {\footnotesize{}
\begin{equation}
c_{\phi}=CO(c_{\phi^{\downarrow X}},\phi^{\downarrow Y}).\label{eq:thm:8.1-2}
\end{equation}
}{\footnotesize \par}
\end{thm}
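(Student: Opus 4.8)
The plan is to reduce the statement to a clean membership criterion for the extension sets and then isolate the single place where the axioms are too weak to conclude. First I would extract a usable consequence of the extension-system condition~(\ref{eq:FCES-condition}). Taking the base domain to be $\emptyset$ and the intermediate domain to be an arbitrary $Z\subseteq d(\psi)$, and using the convention $W_\psi^\emptyset(\diamond)=c_\psi$, equation~(\ref{eq:FCES-condition}) collapses to $c_\psi=\{\langle\mathbf{a},\mathbf{b}\rangle\mid\mathbf{a}\in c_{\psi^{\downarrow Z}},\ \mathbf{b}\in W_\psi^Z(\mathbf{a})\}$. Projecting any member of this set back to $Z$ forces the first component, so for every $\mathbf{a}\in c_{\psi^{\downarrow Z}}$ one gets the equivalence $\mathbf{b}\in W_\psi^Z(\mathbf{a})\iff\langle\mathbf{a},\mathbf{b}\rangle\in c_\psi$. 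I will call this $(\star)$; it converts the abstract extension sets into concrete statements about membership in the solution concept.

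Next I would unfold the right-hand side. Since members of $c_{\phi^{\downarrow X}}$ have domain $X$, $d(\phi^{\downarrow Y})=Y$, and $X\cup Y=d(\phi)$, a typical element of $CO(c_{\phi^{\downarrow X}},\phi^{\downarrow Y})$ is a tuple $\langle\mathbf{x},\mathbf{z}\rangle$ on $d(\phi)$ with $\mathbf{x}\in c_{\phi^{\downarrow X}}$ and $\mathbf{z}\in W_{\phi^{\downarrow Y}}^{X\cap Y}(\mathbf{x}^{\downarrow X\cap Y})$, where $\mathbf{z}$ has domain $Y-X$. Using transitivity (A4) to write $(\phi^{\downarrow Y})^{\downarrow X\cap Y}=\phi^{\downarrow X\cap Y}$ and applying $(\star)$ with $\psi=\phi^{\downarrow Y}$ and $Z=X\cap Y$, membership of $\mathbf{z}$ becomes $\langle\mathbf{x},\mathbf{z}\rangle^{\downarrow Y}\in c_{\phi^{\downarrow Y}}$. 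Thus the whole theorem reduces to a single equivalence: $\langle\mathbf{x},\mathbf{z}\rangle\in c_\phi$ if and only if $\mathbf{x}\in c_{\phi^{\downarrow X}}$ and $\langle\mathbf{x},\mathbf{z}\rangle^{\downarrow Y}\in c_{\phi^{\downarrow Y}}$; that is, a tuple is a solution of $\phi$ exactly when its projections to $X$ and to $Y$ are solutions of the respective marginals.

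The left-to-right direction is routine. If $\mathbf{w}\in c_\phi$, then Lemma PK8.1 ($c_{\phi^{\downarrow X}}=c_\phi^{\downarrow X}$, and likewise for $Y$) immediately gives $\mathbf{w}^{\downarrow X}\in c_{\phi^{\downarrow X}}$ and $\mathbf{w}^{\downarrow Y}\in c_{\phi^{\downarrow Y}}$; feeding these back through $(\star)$ places $\mathbf{w}$ in $CO(c_{\phi^{\downarrow X}},\phi^{\downarrow Y})$. So this inclusion should pose no difficulty.

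The hard part, and the step I expect to be the genuine obstacle, is the converse: from $\mathbf{w}^{\downarrow X}\in c_{\phi^{\downarrow X}}$ and $\mathbf{w}^{\downarrow Y}\in c_{\phi^{\downarrow Y}}$ I would need to conclude $\mathbf{w}\in c_\phi$. Here Lemma PK8.1 only guarantees that $\mathbf{w}^{\downarrow Y}$ is the projection of \emph{some} solution of $\phi$, not that $\mathbf{w}$ itself is one, and nothing in A1--A6 or in~(\ref{eq:FCES-condition}) lets me \emph{glue} a solution on $X$ and a solution on $Y$ that agree on $X\cap Y$ into a single global solution. I anticipate that this gluing can fail without an additional consistency hypothesis, which is presumably precisely what the forthcoming counterexamples exploit and what motivates the completability conditions introduced later in the paper.
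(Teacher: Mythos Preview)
Your analysis is correct and well-structured: you cleanly reduce the claimed equality to the question of whether a tuple whose $X$- and $Y$-projections lie in $c_{\phi^{\downarrow X}}$ and $c_{\phi^{\downarrow Y}}$ must itself lie in $c_\phi$, you prove the inclusion $c_\phi\subseteq CO(c_{\phi^{\downarrow X}},\phi^{\downarrow Y})$ correctly (this is essentially the argument the paper later gives in Corollary~\ref{cor:Total-Projective-Completability}), and you correctly flag the reverse inclusion as the step that the axioms do not support.

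However, your proposal stops short of what the paper actually does. You \emph{anticipate} that the gluing step can fail, but anticipating that a proof strategy is blocked is not the same as establishing that the statement is false: it remains logically possible that the reverse inclusion holds for some other reason. The paper settles the matter by exhibiting concrete counterexamples. In Counterexample~\ref{cnt:Boolean} it takes the indicator-function valuation algebra with $\phi(\mathbf{z})=\mathbb{1}_{\mathbf{z}(x)=\mathbf{z}(y)}$ on two boolean variables, $X=\{x\}$, $Y=\{y\}$, and computes directly that $c_\phi=\{(0,0),(1,1)\}$ while $CO(c_{\phi^{\downarrow X}},\phi^{\downarrow Y})=\Omega_{\{x,y\}}$, so the reverse inclusion fails. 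A second counterexample shows that even requiring $\phi=\phi_X\times\phi_Y$ with $d(\phi_X)=X$, $d(\phi_Y)=Y$ does not rescue the statement. Your abstract analysis explains \emph{why} such examples are possible, and it dovetails nicely with the paper's later introduction of projective completability as exactly the missing hypothesis, but to complete the treatment of this theorem you would still need to supply at least one explicit witness of failure.
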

In the theorem, $X$ and $Y$ represent a possible way of breaking
the problem in two pieces, namely $\phi^{\downarrow Y}$ and $\phi^{\downarrow X}$.
Basically the theorem states that any solution of $\phi$ can be assessed
by taking a solution to the smaller problem $\phi^{\downarrow X},$
and then completing it to the other smaller problem $\phi^{\downarrow Y}.$
Furthermore, it states that each of the configurations built following
that procedure is in fact a  solution of $\phi$.

Next, we will provide a counterexample that disproves the theorem.

\subsubsection{First counterexample}

The counterexample is based in the valuation algebra of indicator
functions introduced in example \ref{ex:ValAlgBooleanFunctions} with
the extension system introduced in example \ref{ex:FCESValAlgBooleanFunctions}. 

\begin{counter}\label{cnt:Boolean}

Theorem \ref{thm:8.1-2} does not hold. 

\end{counter}
\begin{proof}
Let $x,y$ be two Boolean variables and $\phi$ the indicator function
with $d(\phi)=\{x,y\}$ and 
\[
\phi(\mathbf{z})=\mathbb{1}_{\mathbf{z}(x)=\mathbf{z}(y)}=\begin{cases}
1 & \mbox{if \ensuremath{\mathbf{z}(x)=\mathbf{z}(y)}},\\
0 & \mbox{otherwise.}
\end{cases}
\]

Taking $X=\{x\}$, and $Y=\{y\}$ we will see that Theorem \ref{thm:8.1-2}
does not hold. To see why, we will first assess the set of solutions
for our valuation, namely $c_{\phi}.$ Then, we will assess the set
of solutions that can be found by completing a partial solution to
$\phi^{\downarrow X},$ as suggested in the right hand side of Equation
\ref{eq:thm:8.1-2}. We will see that those two sets are different,
contradicting Theorem \ref{thm:8.1-2}. 

By definition, the set of  solutions for our valuation, $c_{\phi}=W_{\phi}^{\emptyset}(\diamond).$
Applying equation \ref{eq:FCESValAlgBooleanFunctions} we have that
\begin{equation}
c_{\phi}=\{\langle\mathbf{x},\mathbf{y}\rangle\in\Omega_{\{x,y\}}\mid\phi(\langle\mathbf{x},\mathbf{y}\rangle)=\phi^{\downarrow\emptyset}(\diamond)\}.\label{eq:counterExampleIntermediate}
\end{equation}
Now, we can assess  $\phi^{\downarrow\emptyset}(\diamond)=\max_{\mathbf{x},\mathbf{y}}\phi(\langle\diamond,\langle\mathbf{x},\mathbf{y}\rangle\rangle)=\max_{\mathbf{x},\mathbf{y}}\phi(\langle\mathbf{x},\mathbf{y}\rangle)=1$,
and from the definition of $\phi$ and equation \ref{eq:counterExampleIntermediate}
we have that $c_{\phi}=\{\{x\mapsto0,y\mapsto0\},\{x\mapsto1,y\mapsto1\}\}$,
where $\{x\mapsto0,y\mapsto0\}$ is the tuple assigning value $0$
to variables $x$ and $y.$ 

Now we will assess $CO(c_{\phi^{\downarrow X}},\phi^{\downarrow Y}),$
to see that they do not coincide. Since $d(\phi^{\downarrow Y})=Y$,
we have that $d(\phi^{\downarrow Y})\cap X=\emptyset$, thus we can
use the definition of set of completions to get
\begin{eqnarray*}
CO(c_{\phi^{\downarrow X}},\phi^{\downarrow Y}) & = & \{\langle\mathbf{x,y}\rangle\in\Omega_{X\cup Y}\mid\mathbf{x}\in c_{\phi^{\downarrow X}}\mbox{ and }\mathbf{y}\in W_{\phi^{\downarrow Y}}^{d(\phi^{\downarrow Y})\cap X}(x^{d(\phi^{\downarrow Y})\cap X})\}\\
 & = & \{\langle\mathbf{x,y}\rangle\in\Omega_{X\cup Y}\mid\mathbf{x}\in c_{\phi^{\downarrow X}}\mbox{ and }\mathbf{y}\in W_{\phi^{\downarrow Y}}^{\emptyset}(\diamond)\}\\
 & = & \{\langle\mathbf{x,y\rangle}\in\Omega_{X\cup Y}\mid\mathbf{x}\in c_{\phi^{\downarrow X}}\mbox{ and }\mathbf{y}\in c_{\phi^{\downarrow Y}}\}.
\end{eqnarray*}
We can now assess $c_{\phi^{\downarrow X}}$ as $c_{\phi^{\downarrow X}}=c_{\phi}^{\downarrow X}=\{\mathbf{z}^{\downarrow X}\mid\mathbf{z}\in c_{\phi}\}=\{\{x\mapsto0\},\{x\mapsto1\}\}=\Omega_{X}$
and $c_{\phi^{\downarrow X}}$ as $c_{\phi^{\downarrow Y}}=c_{\phi}^{\downarrow Y}=\{\mathbf{z}^{\downarrow Y}\mid\mathbf{z}\in c_{\phi}\}=\{\{y\mapsto0\},\{y\mapsto1\}\}=\Omega_{Y}$.
Hence, $CO(c_{\phi^{\downarrow X}},\phi^{\downarrow Y})=\{\langle\mathbf{x,y}\rangle\in\Omega_{X\cup Y}\mid\mathbf{x}\in\Omega_{X}\mbox{ and }\mathbf{y}\in\Omega_{Y}\}=\Omega_{X\cup Y}$,
and from here we have that $CO(c_{\phi^{\downarrow X}},\phi^{\downarrow Y})\neq c_{\phi}$
contradicting equation \ref{eq:thm:8.1}.
\end{proof}

\subsubsection{Second counterexample}

One may think that theorem \ref{thm:8.1-2} would become true by requiring
that $\phi=\phi_{X}\times\phi_{Y}$ for some $\phi_{X},\phi_{Y}\in\Phi$
such that $d(\phi_{X})=X$ and $d(\phi_{Y})=Y$.

Nonetheless, the following counterexample shows that as long as the
extension system is not related to operations in the valuation algebra
we can create a counterexample that fulfils the above requirement.

\begin{counter}\label{cnt:Boolean2}

Theorem \ref{thm:8.1-2} with the additional hypothesis that $\phi=\phi_{X}\times\phi_{Y}$
for some $\phi_{X},\,\phi_{Y}\in\Phi$ such that $X=d(\phi_{X})$
and $Y=d(\phi_{Y})$ still does not hold.

\end{counter}
\begin{proof}
Take any $\phi_{X},\phi_{Y}\in\Phi$ such that $d(\phi_{X})=X$ and
$d(\phi_{Y})=Y$ and $\phi=\phi_{X}\times\phi_{Y}$. As we did in
the first counterexample take $X=\{x\}$, and $Y=\{y\}.$ Now, instead
of using the extension system introduced in example \ref{ex:FCESValAlgBooleanFunctions},
we define $\mathcal{\overline{W}}$ as follows: $\overline{W}_{\xi}^{Z}(\alpha)=W_{\eta^{\downarrow d(\phi)}}^{Z}(\alpha)$
where $\eta\in\Phi$ is the indicator function $\eta(\mathbf{z})=\mathbb{1}_{\mathbf{z}(x)=\mathbf{z}(y)}$
which we used for our former counter example and $W$ is the extension
system in example \ref{ex:FCESValAlgBooleanFunctions}. It is important
to remark that we are defining the sets of extensions in terms of
$\eta.$ Thus, for any $\xi,$ the set of extensions $\overline{W}_{\xi}^{Z}(\alpha)$
depends on $\alpha$ and the domain of $\xi,$ but it is the same
for any two valuations $\xi$ and $\xi'$ with the same domain. 

Notice that $\mathcal{\overline{W}}$ is well defined and does satisfy
equation \ref{eq:FCES-condition}, thus $\overline{\mathcal{W}}$
is an extension system. We refer to the solutions of this new extension
system as $\overline{c}$ and to the completions as $\overline{CO},$
while we keep using $W,$ $c$ and $CO$ for the extension system
introduced in example \ref{ex:FCESValAlgBooleanFunctions}.

Now, following exactly the same reasoning as in the previous counterexample,
we get $\overline{c}_{\phi}=c_{\eta}=\{\{x\mapsto0,y\mapsto0\},\{x\mapsto1,y\mapsto1\}\},$
whilst 
\begin{eqnarray*}
\overline{CO}(\overline{c}_{\phi^{\downarrow X}},\phi^{\downarrow Y}) & = & \{\langle\mathbf{x,y}\rangle\in\Omega_{X\cup Y}\mid\mathbf{x}\in\overline{c}_{\phi^{\downarrow X}}\mbox{ and }\mathbf{y}\in\overline{W}_{\phi^{\downarrow Y}}^{d(\phi^{\downarrow Y})\cap X}(\mathbf{x}^{d(\phi^{\downarrow Y})\cap X})\}\\
 & = & \{\langle\mathbf{x,y}\rangle\in\Omega_{X\cup Y}\mid\mathbf{x}\in c_{\eta^{\downarrow X}}\mbox{ and }\mathbf{y}\in W_{\eta^{\downarrow Y}}^{d(\eta^{\downarrow Y})\cap X}(\mathbf{x}^{d(\eta^{\downarrow Y})\cap X})\}\\
 & = & CO(c_{\eta^{\downarrow X}},\eta^{\downarrow Y})=\Omega_{X\times Y}.
\end{eqnarray*}

Therefore we get $\overline{c}_{\phi}\neq\overline{CO}(\overline{c}_{\phi^{\downarrow X}},\phi^{\downarrow Y})$,
which contradicts theorem \ref{thm:8.1-2} again.
\end{proof}

\section{Impact of the counterexamples\label{sec:Impact}}

In this section we consider the overall impact of the disproved theorem
on Pouly and Kohlas' theory. The theory in chapter 8 of \cite{Pouly2011a}
has two main parts. In the first one (section 8.2), they propose and
give sufficient conditions to some algorithms for computing solutions.
In the second one (section 8.4) they analyze which algorithms can
be applied in the case of optimization problems (valuation algebras
induced by semirings with idempotent addition). In the following we
review the main results of each section and how the problem detected
with Theorem PK8.1 affects them.
\begin{algorithm}
{\footnotesize{}
\begin{algorithmic}[1]
\algnewcommand\algorithmicinput{\textbf{Input:}} 
\algnewcommand\Input{\item[\algorithmicinput]}
\Input The set of projections of $\phi$, $\{\phi^{\downarrow \lambda(i)}|i\in V\}$ (usually a result of \textsc{Collect} + \textsc{Distribute}).
\State $c \leftarrow \{\diamond\}$
\ForAll{nodes $i$ of $\mathcal T$ in a downward order}
	\State $c \leftarrow CO(c,\phi^{\downarrow \lambda(i)})$
\EndFor
\State \Return c;
\end{algorithmic}}{\footnotesize \par}

\protect\caption{\label{alg:ExtendAll}\noun{\EGPName\  }algorithm}
\end{algorithm}

\begin{algorithm}
{\footnotesize{}\begin{algorithmic}[1]
\algnewcommand\algorithmicinput{\textbf{Input:}} 
\algnewcommand\Input{\item[\algorithmicinput]}
\Input The set $\{\psi'_i|i\in V\}$ that results of \textsc{Collect}. 
\State $c \leftarrow \{\diamond\}$
\ForAll{nodes $i$ of $\mathcal T$ in a downward order}
	\State $c \leftarrow CO(c,\psi'_i)$
\EndFor
\State \Return c;
\end{algorithmic}}{\footnotesize \par}

\protect\caption{\label{alg:ExtendSome}\ESName\ algorithm}
\end{algorithm}

\begin{algorithm}[t]
{\footnotesize{}\begin{algorithmic}[1]
\algnewcommand\algorithmicinput{\textbf{Input:}} 
\algnewcommand\Input{\item[\algorithmicinput]}
\Input The set $\{\psi'_i|i\in V\}$ that results of \textsc{Collect}.
\State $\conf{x}  \leftarrow \diamond$
\ForAll{nodes $i$ of $\mathcal T$ in a downward order}
	\State $\mathbf{x} \leftarrow $  A completion of $\mathbf{x}$ to $\psi'_i$
\EndFor
\State \Return $\conf{x}$;
\end{algorithmic}}{\footnotesize \par}

\protect\caption{\label{alg:ExtendOne}\SESName\  algorithm}
\end{algorithm}

\subsection{\label{sec:PK-Sufficient-Conditions}Generic algorithms to compute
solutions and their sufficient conditions}

In the first part of the theory, three different algorithms are presented.
The first algorithm computes a set of solutions by (i) assessing the
projections using the \noun{Collect+Distribute} algorithm, and then
(ii) using those projections to assess a set of solutions. Algorithm
\ref{alg:ExtendAll}, called \EGPName, shows the procedure and is
equivalent to algorithm PK8.1. The algorithm is proven to solve the
complete SFP for any extension system as a byproduct of Lemma PK8.2. 

The second algorithm computes some solutions by (i) running \noun{Collect
}to assess the subtree projections and then (ii) using the subtree
projections to assess a set of solutions. Algorithm \ref{alg:ExtendSome},
named\noun{ }\ESName\noun{,} illustrates how the subtree projections
are combined to assess a set of solutions and is equivalent to algorithm
PK8.2. The sufficient conditions for this algorithm to solve the partial
SFP are provided by Theorem PK8.2. They are
\begin{itemize}
\item {[}CPK1{]} Configuration extension sets need to be always non-empty
and 
\item {[}CPK2{]} For each $\xi_{1},\xi_{2}\in\Phi$, with domains $X$ and
$Y$ respectively, each $X\subseteq Z\subseteq X\cup Y$ and each
$\mathbf{x}\in\Omega_{Z},$ we have 
\[
W_{\xi_{2}}^{Z\cap Y}(\mathbf{x}^{\downarrow Z\cap Y})\subseteq W_{\xi_{1}\times\xi_{2}}^{Z}(\mathbf{x}).
\]

\end{itemize}
The conditions for this algorithm to solve the complete SFP is given
by Theorem PK8.3 and is 
\begin{itemize}
\item {[}CPK3{]} For each $\xi_{1},\xi_{2}\in\Phi$, with domains $X$ and
$Y$ respectively, each $X\subseteq Z\subseteq X\cup Y$ and each
$\mathbf{x}\in\Omega_{Z},$ we have 
\[
W_{\xi_{2}}^{Z\cap Y}(\mathbf{x}^{\downarrow Z\cap Y})=W_{\xi_{1}\times\xi_{2}}^{Z}(\mathbf{x}).
\]

\end{itemize}
The third algorithm finds one solution by (i) running \noun{Collect
}and then (ii) using the subtree projections to assess a single solution.
Algorithm \ref{alg:ExtendOne}, called \SESName\  shows how a single
solution is assessed and is equivalent to Algorithm PK8.3. The sufficient
condition for this algorithm to solve the single SFP are again CPK1
and CPK2 provided by Theorem PK8.2. 

The proofs of Lemma PK8.2, Theorem PK8.2 and Theorem PK8.3 relied,
either in a direct or indirect way, on Theorem PK8.1. Thus, for each
of these results we need to determine whether they still hold (and
only a new proof needs to be found) or whether they no longer hold.
Later we will show that whilst Theorem PK8.2 and PK8.3 are correct
(we will provide an alternative proof), Lemma PK8.2 requires an additional
condition. The impact of the counterexamples on the theory is summarized
in Table \ref{tab:Impact}. 

In this paper we repair the theory by (i) providing corrected proofs
for those results that are true but incorrectly proven and (ii) identifying
the sufficient condition required for \EGPName\  to work. Furthermore,
we show that the sufficient conditions identified for the algorithms
are not only sufficient but also necessary. 

\begin{table}
\begin{centering}
{\small{}}%
\begin{tabular}{|c|c|>{\centering}m{2cm}|c|>{\centering}m{3cm}|}
\hline 
\textbf{\scriptsize{}PK Result} & \textbf{\scriptsize{}Algorithm } & \textbf{\scriptsize{}Suff. cond. } & \textbf{\scriptsize{}Solutions} & \textbf{\scriptsize{}Impact}\tabularnewline
\hline 
\hline 
{\scriptsize{}Lemma PK8.2} & \noun{\scriptsize{}\ref{alg:ExtendAll}} & {\scriptsize{}None} & {\scriptsize{}All} & {\scriptsize{}False. Necessary condition required. }\tabularnewline
\hline 
{\scriptsize{}Theorem PK8.2} & \noun{\scriptsize{}\ref{alg:ExtendOne}} & {\scriptsize{}CPK1, CPK2} & {\scriptsize{}One} & {\scriptsize{}True, but a correct proof is required. }\tabularnewline
\hline 
{\scriptsize{}Theorem PK8.2} & \noun{\scriptsize{}\ref{alg:ExtendSome}} & {\scriptsize{}CPK1, CPK2} & {\scriptsize{}Some} & {\scriptsize{}True, but a correct proof is required.}\tabularnewline
\hline 
{\scriptsize{}Theorem PK8.3} & \noun{\scriptsize{}\ref{alg:ExtendSome}} & {\scriptsize{}CPK1, CPK3} & {\scriptsize{}All} & {\scriptsize{}True, but a correct proof is required.}\tabularnewline
\hline 
\end{tabular}
\par\end{centering}{\small \par}

\protect\caption{\label{tab:Impact}Impact of the counterexample on Pouly and Kohlas
results about the sufficient conditions of the generic algorithms}
\end{table}

\subsection{Impact on sufficient conditions on optimization problems}

After discussing generic algorithms, Pouly and Kohlas particularize
their results to optimization problems in section PK8.4. There it
is shown that for any valuation algebra induced by a selective\footnote{Although Pouly and Kohlas use the term totally ordered idempotent
semiring, in this work we follow the notation in \cite{Gondran2008}
and use selective semiring for the very same concept. See corollary
\ref{cor:SelectiveSemiring} in appendix.} semiring it is possible to define an extension system. They rely
on Lemma PK8.2 to prove that no additional condition is needed to
guarantee the correctness of \EGPName. Since we have seen that Lemma
PK8.2 is flawn, we need to revise that conclusion. 

Furthermore, they show that the extension system fulfills the sufficient
condition in Theorem PK8.2, thus enabling the usage of \SESName\ 
to solve the single SFP and of \ESName\  to solve the partial SFP.
Furthermore, if the semiring is also strict monotonic then the extension
system satisfies the sufficient conditions of Theorem PK8.3, enabling
the usage of \ESName\  to solve the complete SFP. Since Theorem PK8.2
and Theorem PK8.3 are correct, only the conclusions arising from Lemma
PK8.2 should be revised. 

\begin{table}
\begin{centering}
{\small{}}%
\begin{tabular}{|c|>{\centering}m{2.3cm}|c|>{\centering}m{3cm}|}
\hline 
\textbf{\scriptsize{}Algorithm } & \textbf{\scriptsize{}Semiring } & \textbf{\scriptsize{}Solutions} & \textbf{\scriptsize{}Impact}\tabularnewline
\hline 
\hline 
\noun{\scriptsize{}\ref{alg:ExtendAll}} & {\scriptsize{}None} & {\scriptsize{}All} & {\scriptsize{}Incorrect.}\tabularnewline
\hline 
\noun{\scriptsize{}\ref{alg:ExtendOne}} & {\scriptsize{}None} & {\scriptsize{}One} & {\scriptsize{}Correct.}\tabularnewline
\hline 
\noun{\scriptsize{}\ref{alg:ExtendSome}} & {\scriptsize{}None} & {\scriptsize{}Some} & {\scriptsize{}Correct.}\tabularnewline
\hline 
\noun{\scriptsize{}\ref{alg:ExtendSome}} & {\scriptsize{}Strict monotonic} & {\scriptsize{}All} & {\scriptsize{}Correct but can be weakened}\tabularnewline
\hline 
\end{tabular}
\par\end{centering}{\small \par}

\protect\caption{\label{tab:ImpactOpt}Impact of the counterexample on Pouly and Kohlas
results about the necessary conditions of the algorithms for optimization
problems. On those problems, semirings are commutative and selective.}
\end{table}

In this paper we improve the characterization of the algorithms for
optimization problems given by Pouly and Kohlas by (i) providing a
necessary condition and a sufficient condition on the semiring which
guarantees the correctness of algorithm \EGPName\  and (ii) weakening
the sufficient condition under which \ESName\  is guaranteed to solve
the complete SFP and showing that the condition is also necessary.

\section{Correcting the theory of generic solutions in valuation algebras\label{sec:NewConditions}}

In this section we concentrate on providing sufficient conditions
for the three generic algorithms presented above. Furthermore, we
also show that for some of the algorithms, these conditions are necessary.
We start by proving a lemma that lies at the foundation of the proofs
of the results to come. Then, we introduce two different conditions,
namely projective completability and piecewise completability, which
can be imposed to an extension system and we study the relationship
between them. Then, we prove that projective completability is a sufficient
and necessary condition for algorithm \EGPName\  to find all  solutions.
After that we study how piecewise completability determines the correctness
of the \ESName\ \noun{ }and \SESName\  algorithms. We close the
section by explaining how those result in \cite{Pouly2011a} which
were correct can be proven from the results presented here.

\subsection{The covering join tree decomposition lemma}

Our first objective is to characterize subsets of valuations which
are well behaved with respect of the operations of the valuation algebra. 
\begin{defn}
A subset of valuations $\Xi\subseteq\Phi$ is \emph{projection-closed}
if for each $\phi\in\Xi,$ and each $X\subseteq d(\phi)$, $\phi^{\downarrow X}\in\Xi$.
A subset of valuations $\Xi\subseteq\Phi$ is \emph{combination-breakable}
if for each $\phi\in\Xi$, such that $\phi=\xi_{1}\times\xi_{2},$
we have that both $\xi_{1},\xi_{2}\in\Xi.$ 
\end{defn}
If a subset of valuations is projection-closed we can safely project
a valuation in the subset and we know we will get another valuation
in the subset. A subset of valuations is combination-breakable if
whenever we can factorize a valuation in the subset as a combination
of two other valuations we know that each of the components is guaranteed
to be in the subset. Note that this does not imply that if we take
two valuations from the subset its product will be in the subset.
A trivial example of projection-closed and combination-breakable set
of valuations is the set of all valuations $\Phi.$ 

Next, we introduce the main result of this section proving that for
any node $i\in V$ in a join tree $(V,E,\lambda,U)$ under reasonable
conditions on $X$, we can express the projection $\left(\phi_{1}\times\cdots\times\phi_{n}\right)^{\downarrow(X\cup\lambda(i))}$
as a product of two valuations, one of them with scope $X$ and the
other one with scope $\lambda(i).$ 
\begin{figure}
\begin{centering}
\usetikzlibrary{arrows,shapes}
\tikzstyle{vertex}=[solid, ellipse, black, draw, minimum width=1cm, minimum height=1cm]
\tikzstyle{edge} = [solid,black,draw,line width=1pt,-]
\input{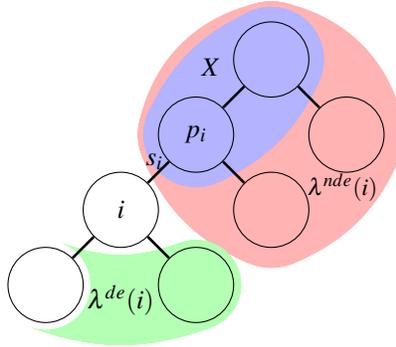}
\par\end{centering}

\protect\caption{Visualizing the decomposition lemma}
\label{fig:Visualizing-the-decomposition}
\end{figure}

The conditions on $X$ are that it should cover the separator $s_{i}$
and that all its variables should appear in the non-descendants of
$i.$ In order to formalize the condition for each node $i\in V,$
we define $\lambda^{de}(i)$ as the set of variables that appear in
the scope of the descendants of $i$, namely $\lambda^{de}(i)=\bigcup_{j\in de(i)}\lambda(j)$.
Furthermore we define $\lambda^{nde}(i)$ as the set of variables
that appear in the scope of the non-descendants of $i,$ namely $\lambda^{nde}(i)=\bigcup_{j\in nde(i)}\lambda(j)$.
Figure \ref{fig:Visualizing-the-decomposition} shows $X$ in blue,
$\lambda^{de}(i)$ in green and $\lambda^{nde}(i)$ in red in a simple
example to help understanding the notation and the conditions on the
lemma. 
\begin{lem}
\label{lem:Decomposition}Let $(\Phi,U)$ be a valuation algebra.
Let $\Xi\subseteq\Phi$ be a subset of valuations projection-closed
and combination-breakable. Let $\phi\in\Xi,$ $\phi=\phi_{1}\times\cdots\times\phi_{n}.$
For any node $i$ of $\mathcal{T},$ and any domain $X\subseteq\lambda^{nde}(i),$
such that $s_{i}\subseteq X,$ we have that $\phi^{\downarrow(X\cup\lambda(i))}$
factorizes as \textup{
\[
\phi^{\downarrow(X\cup\lambda(i))}=\alpha\times\beta,
\]
}\textup{\emph{with }}\textup{$\alpha\in\Xi,$ $d(\alpha)=X$ }\textup{\emph{and}}\textup{
$\beta\in\Xi,$ $d(\beta)=\lambda(i).$ }\\
\textup{\emph{Concretely }}$\alpha=\left(\prod_{j\in nde(i)}\psi_{j}\right)^{\downarrow X}$
and $\beta=\left(\psi_{i}\times\prod_{j\in de(i)}\psi_{j}\right)^{\downarrow\lambda(i)}.$\end{lem}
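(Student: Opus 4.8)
The plan is to cut the global product along the single edge $\{i,p_i\}$ that joins the subtree rooted at $i$ to the rest of $\mathcal{T}$, and then to realise the two requested factors as the two halves of this cut by two applications of the combination axiom A5. Writing $N=\prod_{j\in nde(i)}\psi_j$ and $S=\psi_i\times\prod_{j\in de(i)}\psi_j$, axiom A1 gives $\phi=\prod_{k\in V}\psi_k=N\times S$, since $\{nde(i),\{i\},de(i)\}$ partitions $V$. With this notation the claimed factors are exactly $\alpha=N^{\downarrow X}$ and $\beta=S^{\downarrow\lambda(i)}$, and by Theorem~\ref{thm:Collect} $\beta$ is precisely the message $\psi'_i$ computed by \noun{Collect}. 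Thus the lemma reduces to the single valuation identity $(N\times S)^{\downarrow(X\cup\lambda(i))}=N^{\downarrow X}\times S^{\downarrow\lambda(i)}$, plus the verification that the two factors lie in $\Xi$.

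Before the algebra I would secure three facts about the domains $P=d(N)$ and $Q=d(S)$. \emph{(c)} $P\cap Q\subseteq s_i$: any variable shared by a non-descendant valuation and a subtree valuation lies in the labels of two nodes on opposite sides of the edge $\{i,p_i\}$, so by the running intersection property it lies in $\lambda(i)\cap\lambda(p_i)=s_i$. \emph{(b)} $\lambda(i)\subseteq Q$: this is exactly what makes the projection $S^{\downarrow\lambda(i)}=\psi'_i$ in Theorem~\ref{thm:Collect} well defined, and follows from Assumption~\ref{assu:Minimal}. \emph{(a)} $X\subseteq P$: this is the delicate one and the main obstacle. The hypothesis only gives $X\subseteq\lambda^{nde}(i)$, and a priori a variable of $\lambda^{nde}(i)$ could sit in a label $\lambda(j)$ ($j\in nde(i)$) without belonging to any domain $d(\psi_j)$. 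I would rule this out with Assumption~\ref{assu:Minimal}: the nodes whose label contains a fixed variable $u$ form a connected subtree $T_u$, its intersection with the (connected) set $nde(i)$ is again a subtree, and applying the minimal-labeling identity at a leaf of $T_u\cap nde(i)$ forces $u\in d(\psi_j)$ for some non-descendant $j$; combining with \emph{(c)} to dispose of the separator variables yields $X\subseteq P$. Together with $s_i\subseteq X$ (hypothesis) and $s_i\subseteq\lambda(i)$, fact \emph{(c)} also gives $P\cap Q\subseteq X\cap\lambda(i)$.

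The computation is then two uses of A5 bridged by transitivity A4. First set $W=X\cup Q$; by \emph{(a)} we have $Q\subseteq W\subseteq P\cup Q=d(N\times S)$, so A5 (with the factor $S$ of domain $Q$ surviving) gives $(N\times S)^{\downarrow W}=S\times N^{\downarrow(W\cap P)}$, and $W\cap P=X\cup(P\cap Q)=X$ reduces this to $S\times\alpha$. Next, by \emph{(b)} we have $X\cup\lambda(i)\subseteq W$, so A4 gives $\phi^{\downarrow(X\cup\lambda(i))}=\big((N\times S)^{\downarrow W}\big)^{\downarrow(X\cup\lambda(i))}=(S\times\alpha)^{\downarrow(X\cup\lambda(i))}$. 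A second application of A5, now with the factor $\alpha$ of domain $X$ surviving (note $X\subseteq X\cup\lambda(i)\subseteq X\cup Q$), yields $(S\times\alpha)^{\downarrow(X\cup\lambda(i))}=\alpha\times S^{\downarrow((X\cup\lambda(i))\cap Q)}$, and $(X\cup\lambda(i))\cap Q=(X\cap Q)\cup\lambda(i)=\lambda(i)$ because $X\cap Q\subseteq P\cap Q\subseteq\lambda(i)$. Hence $\phi^{\downarrow(X\cup\lambda(i))}=\alpha\times\beta$.

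It remains to place the factors in $\Xi$. Since $\phi\in\Xi$ and $\Xi$ is projection-closed, $\phi^{\downarrow(X\cup\lambda(i))}\in\Xi$; since $\phi^{\downarrow(X\cup\lambda(i))}=\alpha\times\beta$ and $\Xi$ is combination-breakable, both $\alpha$ and $\beta$ lie in $\Xi$. Their domains are $d(\alpha)=X$ and $d(\beta)=\lambda(i)$ by A3 (using \emph{(a)} and \emph{(b)}). The degenerate case $i=r$, where $nde(i)=\emptyset$ forces $N=e$ and $X=\emptyset$, is covered by the same argument. The only genuinely hard point is fact \emph{(a)}; once the domain bookkeeping is in place, the identity is a mechanical double application of A5.
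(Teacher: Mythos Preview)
Your proof is correct and follows the same overall strategy as the paper --- split $\phi$ along the edge $\{i,p_i\}$, then realise $\alpha$ and $\beta$ by applications of A5 bridged by A4 --- but the execution is genuinely different in one respect. The paper works with a \emph{three}-way factorisation $\phi=\eta_1\times\psi_i\times\eta_2$ (separating $\psi_i$ from its descendants), passes through the intermediate domain $\lambda^{nde}(i)\cup\lambda(i)$, and needs three uses of A5 (two to peel off $\eta_2$ and then $\eta_1$, and a final ``reverse'' application to recombine $\psi_i$ with the projected $\eta_2$). You instead keep $\psi_i$ bundled with its descendants from the start ($S=\psi_i\times\eta_2$), pass through the intermediate domain $X\cup Q$, and get away with just two A5 steps and no recombination. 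Your route is a bit more economical; the paper's route makes the role of the children's separators $\bigcup_{j\in ch(i)}s_j$ explicit, which ties in visually with the \noun{Collect} messages.

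On the point you flag as delicate, fact \emph{(a)} $X\subseteq d(N)$: you are right that this is exactly where Assumption~\ref{assu:Minimal} does the work, and your leaf-of-$T_u\cap nde(i)$ argument is essentially a direct proof of the appendix identity $d\big(\prod_{j\in nde(i)}\psi_j\big)=\lambda^{nde}(i)$ (equation~\eqref{eq:lambdaNDIsProduct}), which the paper simply quotes. Likewise your fact \emph{(b)} is the content of equations~\eqref{eq:MinimalLabeling} and~\eqref{eq:lambdaDIsProduct}. So the two proofs rest on the same structural lemmas; you just package the A5 bookkeeping more tightly.
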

\begin{proof}
From definition \ref{def:CoveringJoinTree}, we have that $\phi=\prod_{i\in V}\psi_{i}.$
We can factorize $\phi$ as $\phi=\eta_{1}\times\psi_{i}\times\eta_{2},$
where $\eta_{1}=\prod_{j\in nde(i)}\psi_{j}$ with $d(\eta_{1})=\bigcup_{j\in nde(i)}d(\psi_{j})$
and $\eta_{2}=\prod_{j\in de(i)}\psi_{j}.$ By equation \ref{eq:lambdaDIsProduct}
from the appendix, we have that $d(\eta_{2})=\lambda^{de}(i).$ 

Applying the factorization we have that 
\begin{eqnarray*}
\phi^{\downarrow(X\cup\lambda(i))} & = & \left(\eta_{1}\times\psi_{i}\times\eta_{2}\right)^{\downarrow X\cup\lambda(i)}.
\end{eqnarray*}
Since $X\subseteq\lambda^{nde}(i)$, we have that $X\cup\lambda(i)\subseteq\lambda^{nde}(i)\cup\lambda(i)$,
and by axiom A4
\[
\phi^{\downarrow(X\cup\lambda(i))}=\left(\left(\eta_{1}\times\psi_{i}\times\eta_{2}\right)^{\downarrow\lambda^{nde}(i)\cup\lambda(i)}\right)^{\downarrow X\cup\lambda(i)}.
\]
Now $\lambda^{nde}(i)\cup\lambda(i)$ covers both $d(\eta_{1})$ and
$d(\psi_{i})$ by the covering property, so we can apply axiom A5
to get  
\[
\phi^{\downarrow(X\cup\lambda(i))}=\left((\eta_{1}\times\psi_{i})\times\eta_{2}^{\downarrow(\lambda^{nde}(i)\cup\lambda(i))\cap\lambda^{de}(i)}\right)^{\downarrow X\cup\lambda(i)}.
\]
Analyzing the domain where $\eta_{2}$ is projected to we find that
\begin{eqnarray*}
\left(\lambda^{nde}(i)\cup\lambda(i)\right)\cap\lambda^{de}(i) & = & \left(\lambda^{nde}(i)\cap\lambda^{de}(i)\right)\cup\left(\lambda(i)\cap\lambda^{de}(i)\right)\\
 & = & \left(\lambda(i)\cap\lambda^{de}(i)\right)=\bigcup_{j\in ch(i)}s_{j}
\end{eqnarray*}
 where the first equality distributes the intersection, the second
one applies that by equation \ref{eq:lambdaLargerThanIntersection}
from the appendix we know that $\lambda^{nde}(i)\cap\lambda^{de}(i)\subseteq\lambda(i)\cap\lambda^{de}(i)$
and the third one uses equation \ref{eq:intersectionWithSubtree}
also found at the appendix. Replacing in the expression above we get 

\[
\phi^{\downarrow(X\cup\lambda(i))}=\left(\eta_{1}\times\psi_{i}\times\eta_{2}^{\downarrow\bigcup_{j\in ch(i)}s_{j}}\right)^{\downarrow X\cup\lambda(i)}.
\]
Since $\bigcup_{j\in ch(i)}s_{j}\subseteq\lambda(i)$ and $d(\psi_{i})\subseteq\lambda(i)$
we can apply again axiom A5, 
\[
\phi^{\downarrow(X\cup\lambda(i))}=\eta_{1}^{\downarrow(X\cup\lambda(i))\cap d(\eta_{1})}\times(\psi_{i}\times\eta_{2}^{\downarrow\bigcup_{j\in ch(i)}s_{j}}).
\]
From equation \ref{eq:lambdaNDIsProduct} from the appendix, we have
that $d(\eta_{1})=\lambda^{nde}(i)$ and then 
\[
\phi^{\downarrow(X\cup\lambda(i))}=\eta_{1}^{\downarrow(X\cup\lambda(i))\cap\lambda^{nde}(i)}\times\psi_{i}\times\eta_{2}^{\downarrow\bigcup_{j\in ch(i)}s_{j}}.
\]
Distributing the intersection, we have that $(X\cup\lambda(i))\cap\lambda^{nde}(i)=(X\cap\lambda^{nde}(i))\cup(\lambda(i)\cap\lambda^{nde}(i)).$
In the lemma we required that $X\subseteq\lambda^{nde}(i)$, and from
here $X\cap\lambda^{nde}(i)=X$. On the other hand by equation \ref{eq:separatorIsIntersectionWithND},
we have that $s_{i}=(\lambda(i)\cap\lambda^{nde}(i))$ and since $s_{i}\subseteq X,$
we get that
\[
\phi^{\downarrow(X\cup\lambda(i))}=\eta_{1}^{\downarrow X}\times\psi_{i}\times\eta_{2}^{\downarrow\bigcup_{j\in ch(i)}s_{j}}
\]
and applying axiom A5 one last time, this time to join instead of
to split, we get 
\[
\phi^{\downarrow(X\cup\lambda(i))}=\eta_{1}^{\downarrow X}\times\left(\psi_{i}\times\eta_{2}\right)^{\downarrow d(\psi_{i})\cup\bigcup_{j\in ch(i)}s_{j}}.
\]
Finally, by equation \ref{eq:MinimalLabeling} we have that $\lambda(i)=d(\psi_{i})\cup\bigcup_{j\in ch(i)}s_{j}.$
So, we directly identify that $\phi^{\downarrow X\cup\lambda(i)}$
factorizes as $\alpha\times\beta,$ where $\alpha=\eta_{1}^{\downarrow X}$
and $\beta=\left(\psi_{i}\times\eta_{2}\right)^{\downarrow\lambda(i)},$
with $d(\alpha)=X$ and $d(\beta)=\lambda(i).$ 

Note that since $\phi\in\Xi$, and $\Xi$ is projection-closed, $\phi^{\downarrow X\cup\lambda(i)}\in\Xi.$
Now, since $\phi^{\downarrow X\cup\lambda(i)}=\alpha\times\beta$,
and $\Xi$ is combination-breakable we have that $\alpha\in\Xi,$
and $\beta\in\Xi$.
\end{proof}

\subsection{Completability properties of extension systems.}

In this section we define some properties which will allow us to characterize
under which conditions the different algorithms work. Intuitively,
these properties impose conditions under which the solution to a ``simpler''
problem can be completed to obtain a solution to a ``more complex''
problem. 

In this section, let $(\Phi,U)$ be a valuation algebra, $\mathcal{W}$
a extension system and $\Xi\subseteq\Phi$ be a subset of valuations
projection-closed and combination-breakable.

\begin{defn}[Projective completability]
 We say that \emph{projective completability (on products) }holds
on $(\Phi,U),$ $\mathcal{W},$ and $\Xi$ if for each valuation $\phi\in\Xi$
such that $\phi=\xi_{1}\times\xi_{2}$ with domains $d(\xi_{1})=X$
and $d(\xi_{2})=Y$ respectively, for each configuration $\conf{x}\in c_{{\phi}^{\downarrow X}}$,
we have that each completion of $\conf{x}$ to $\phi^{\downarrow Y}$
is a  solution of $\phi.$ That is, whenever  
\[
CO(c_{\phi^{\downarrow X}},\phi^{\downarrow Y})\subseteq c_{\phi}.
\]
\end{defn}
\begin{cor}
\label{cor:Total-Projective-Completability}If projective completability
holds on $(\Phi,U),$ $\mathcal{W},$ and $\Xi,$ then for each valuation
$\phi\in\Xi$ such that $\phi=\xi_{1}\times\xi_{2}$ with domains
$d(\xi_{1})=X$ and $d(\xi_{2})=Y$ 
\[
CO(c_{\phi^{\downarrow X}},\phi^{\downarrow Y})=c_{\phi}.
\]
\end{cor}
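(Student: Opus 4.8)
The plan is to prove the two inclusions separately. One of them, $CO(c_{\phi^{\downarrow X}},\phi^{\downarrow Y})\subseteq c_\phi$, is literally the statement of projective completability, so all the work lies in the reverse inclusion $c_\phi\subseteq CO(c_{\phi^{\downarrow X}},\phi^{\downarrow Y})$. I would first record that, by axiom A2, $d(\phi)=d(\xi_1)\cup d(\xi_2)=X\cup Y$, so every tuple in $c_\phi$ has domain $X\cup Y$, which is exactly the domain $d(\phi^{\downarrow Y})\cup X$ of the tuples produced by the completion operator on the right-hand side; thus both sides live in the same tuple space and it makes sense to compare them elementwise.

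The observation I would isolate as an auxiliary fact is that the ``tail'' of any solution is an extension of its own restriction: for every $Z\subseteq d(\phi)$ and every $\mathbf{w}\in c_\phi$ one has $\mathbf{w}^{\downarrow d(\phi)-Z}\in W_\phi^Z(\mathbf{w}^{\downarrow Z})$. This follows by instantiating the extension-system identity~\eqref{eq:FCES-condition} with empty starting domain and intermediate domain $Z$: since $W_\phi^\emptyset(\diamond)=c_\phi$ and $\langle\diamond,\mathbf{y}\rangle=\mathbf{y}$, the identity rewrites $c_\phi$ as $\{\langle\mathbf{y},\mathbf{z}\rangle\mid\mathbf{y}\in c_{\phi^{\downarrow Z}},\ \mathbf{z}\in W_\phi^Z(\mathbf{y})\}$, and reading off the two components of a given $\mathbf{w}$ (using $c_{\phi^{\downarrow Z}}=c_\phi^{\downarrow Z}$ from Lemma PK8.1 to place $\mathbf{w}^{\downarrow Z}$ in $c_{\phi^{\downarrow Z}}$) yields the claim. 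Note that this step, and the whole reverse inclusion, uses only the extension-system axioms and never projective completability.

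With this in hand I would take an arbitrary $\mathbf{w}\in c_\phi$ and set $Z=X\cap Y$. The auxiliary fact gives $\mathbf{w}^{\downarrow d(\phi)-(X\cap Y)}\in W_\phi^{X\cap Y}(\mathbf{w}^{\downarrow X\cap Y})$, where $\mathbf{w}^{\downarrow X\cap Y}\in c_{\phi^{\downarrow X\cap Y}}$ again by Lemma PK8.1. I then apply~\eqref{eq:FCES-condition} a second time, now with starting domain $X\cap Y$ and intermediate domain $Y$ (legitimate since $X\cap Y\subseteq Y\subseteq d(\phi)$), and read off the first component: because the elements of $W_{\phi^{\downarrow Y}}^{X\cap Y}(\cdot)$ have domain $Y-(X\cap Y)=Y-X$, the decomposition forces $\mathbf{w}^{\downarrow Y-X}\in W_{\phi^{\downarrow Y}}^{X\cap Y}(\mathbf{w}^{\downarrow X\cap Y})$. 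Finally, writing $\mathbf{x}=\mathbf{w}^{\downarrow X}$, Lemma PK8.1 gives $\mathbf{x}\in c_{\phi^{\downarrow X}}$, and since $\mathbf{w}^{\downarrow X\cap Y}=\mathbf{x}^{\downarrow X\cap Y}$ and $\langle\mathbf{x},\mathbf{w}^{\downarrow Y-X}\rangle=\mathbf{w}$, the definition of $CO$ (with $d(\phi^{\downarrow Y})\cap X=Y\cap X$) certifies $\mathbf{w}\in CO(c_{\phi^{\downarrow X}},\phi^{\downarrow Y})$, completing the inclusion.

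The main obstacle I anticipate is purely bookkeeping: one must invoke the two-step extension-system condition twice, with carefully chosen starting and intermediate domains, and then track the tuple-domain identities ($d(\phi)-X=Y-X$, $Y-(X\cap Y)=Y-X$, $d(\phi^{\downarrow Y})\cap X=X\cap Y$) to be sure the projections being compared really coincide. There is no conceptual difficulty beyond recognising that $c_\phi\subseteq CO(c_{\phi^{\downarrow X}},\phi^{\downarrow Y})$ holds for \emph{every} extension system, so that projective completability is needed only for the opposite inclusion --- which, incidentally, is precisely the side on which Theorem~\ref{thm:8.1-2} fails in the counterexamples.
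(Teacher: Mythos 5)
Your proof is correct and takes essentially the same approach as the paper: the inclusion $CO(c_{\phi^{\downarrow X}},\phi^{\downarrow Y})\subseteq c_{\phi}$ is projective completability itself, and the reverse inclusion $c_{\phi}\subseteq CO(c_{\phi^{\downarrow X}},\phi^{\downarrow Y})$ is derived purely from two applications of the extension-system identity (equation \ref{eq:FCES-condition}) together with Lemma PK8.1, exactly as in the paper's proof. The only difference is bookkeeping in the second application: the paper obtains the key membership $\mathbf{w}^{\downarrow Y-X}\in W_{\phi^{\downarrow Y}}^{X\cap Y}(\mathbf{w}^{\downarrow X\cap Y})$ by projecting the solution to $Y$ and splitting $c_{\phi^{\downarrow Y}}=W_{\phi^{\downarrow Y}}^{\emptyset}(\diamond)$ at intermediate domain $X\cap Y$, whereas you split $W_{\phi}^{X\cap Y}(\mathbf{w}^{\downarrow X\cap Y})$ of the full valuation with intermediate domain $Y$ --- two instantiations of the same identity yielding the same fact.
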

\begin{proof}
By definition of projective completability we have that $CO(c_{\phi^{\downarrow X}},\phi^{\downarrow Y})\subseteq c_{\phi}.$ 

It only remains to prove that $c_{\phi}\subseteq CO(c_{\phi^{\downarrow X}},\phi^{\downarrow Y}).$

Now, for any $\mathbf{s}\in c_{\phi}$ and by applying equation \ref{eq:FCES-condition}
to $c_{\phi}=W_{\phi}^{\emptyset}(\diamond)$ we have that $\mathbf{s}=\langle\mathbf{x},\mathbf{z}\rangle$
where $\mathbf{x}\in c_{\phi^{\downarrow X}}\mbox{ and }\mathbf{z}\in W_{\phi}^{X}(\mathbf{x}).$
Since $c_{\phi^{\downarrow Y}}=c_{\phi}^{\downarrow Y}$ we get $\langle\mathbf{x}^{\downarrow X\cap Y},\mathbf{z}\rangle\in c_{\phi^{\downarrow Y}}.$
Note that $\mathbf{x}\in c_{\phi^{\downarrow X}}$ implies $\mathbf{x}^{\downarrow X\cap Y}\in c_{\phi^{\downarrow X\cap Y}}$
, and applying equation \ref{eq:FCES-condition} to $c_{\phi^{\downarrow Y}}=W_{\phi^{\downarrow Y}}^{\emptyset}(\diamond),$
we get
\[
c_{\phi^{\downarrow Y}}=\{\langle\mathbf{t},\mathbf{z}\rangle|\mathbf{t}\in c_{\phi^{\downarrow X\cap Y}}\mbox{ and }\mathbf{z}\in W_{\phi^{\downarrow Y}}^{X\cap Y}(\mathbf{t})\}
\]
 we can conclude that $\mathbf{z}\in W_{\phi^{\downarrow Y}}^{X\cap Y}(\mathbf{x}^{\downarrow X\cap Y})$,
and hence that $\mathbf{s}\in CO(c_{\phi^{\downarrow X}},\phi^{\downarrow Y})$.
\end{proof}

\begin{defn}[Piecewise completability]
 We say that \emph{piecewise completability (on products)} holds
on $(\Phi,U),$ $\mathcal{W},$ and $\Xi$ if for each valuation $\phi\in\Xi,$
such that $\phi=\xi_{1}\times\xi_{2}$, with domains $X$ and $Y$
respectively, and each $\mathbf{x}\in c_{\phi^{\downarrow X}}$ ,
each completion of $\mathbf{x}$ to $\xi_{2}$ is a  solution of $\phi$
or equivalently 
\[
CO(c_{\phi^{\downarrow X}},\xi_{2})\subseteq c_{\phi}.
\]

We say that piecewise completability is \emph{guaranteed non-empty}
if for each $A$ such that $\emptyset\neq A\subseteq c_{\phi^{\downarrow X}}$,
we have that $CO(A,\xi_{2})\neq\emptyset$. Furthermore we say that
piecewise completability is \emph{total }if any  solution can be obtained
by piecewise completion, that is if $CO(c_{\phi^{\downarrow X}},\xi_{2})=c_{\phi}.$

\begin{figure}
\centering{}\subfloat[Projective completion]{\usetikzlibrary{arrows,shapes}
\tikzstyle{vertex}=[solid, ellipse, black, draw, minimum width=1cm, minimum height=1cm]
\tikzstyle{edge} = [solid,black,draw,line width=1pt,-]
\tikzstyle{smallvaluation}=[solid, black, draw, minimum width=1.5cm, minimum height=0.5cm]
\begin{tikzpicture}[scale=1, auto,swap]
	
    \node (l2) at (6,6.9) {$Y$}; 
 \draw[<->,blue] (5.25,6.7) - -  (6.75,6.7);
\node (l) at (5,6.7) {$X$};
 \draw[<->,blue] (4.25,6.5) - -  (5.75,6.5);
	\node[smallvaluation] (xi1) at (5,6) {$\xi_1$};
    \node[smallvaluation] (xi2) at (6,5.3) {$\xi_2$};  
	\node[smallvaluation,minimum width=2.5cm] (xi2) at (5.5,4.5) {$\phi = \xi_1\times \xi_2$}; 
\draw[->,red,dashed](4.25,4.2)--(5.25,3.8);
    \draw[->,red,dashed](6.75,4.2)--(6.75,3.8);
\node[smallvaluation] (py) at (6,3.5) {$\phi^{\downarrow Y}$}; 
    \draw[->,blue,dashed](4.25,4.2)--(4.25,3);
    \draw[->,blue,dashed](6.75,4.2)--(5.75,3);
	\node[smallvaluation] (px) at (5,2.7) {$\phi^{\downarrow X}$};
	\node (x) at(5,1.5) {$\mathbf{x}$};
    \node[scale=0.6] (sol) at (5,2) {\ \ Take a solution};
     \draw[black,->] (px)--(sol);
     \draw[black,->] (sol)--(x);
\node[scale=0.6] (cpl) at (6,1.5) {Complete it};
 \draw[black,->] (x)--(cpl);
 \draw[black,->] (py)--(cpl);
\node[] (cplsol) at (6,0.8) {$\langle \mathbf{x},\mathbf{y} \rangle$};
 \draw[black,->] (cpl)--(cplsol);
\end{tikzpicture}

}\qquad{}\qquad{}\qquad{}\qquad{}\subfloat[Piecewise completion]{\usetikzlibrary{arrows,shapes}
\tikzstyle{vertex}=[solid, ellipse, black, draw, minimum width=1cm, minimum height=1cm]
\tikzstyle{edge} = [solid,black,draw,line width=1pt,-]
\tikzstyle{smallvaluation}=[solid, black, draw, minimum width=1.5cm, minimum height=0.5cm]
\begin{tikzpicture}[scale=1, auto,swap]
	
    \node (l2) at (6,6.9) {$Y$}; 
 \draw[<->,blue] (5.25,6.7) - -  (6.75,6.7);
\node (l) at (5,6.7) {$X$};
 \draw[<->,blue] (4.25,6.5) - -  (5.75,6.5);
	\node[smallvaluation] (xi1) at (5,6) {$\xi_1$};
    \node[smallvaluation] (xi2) at (6,5.3) {$\xi_2$};  
	\node[smallvaluation,minimum width=2.5cm] (phi) at (5.5,4.5) {$\phi = \xi_1\times \xi_2$}; 
    \draw[->,blue,dashed](4.25,4.2)--(4.25,3);
    \draw[->,blue,dashed](6.75,4.2)--(5.75,3);
	\node[smallvaluation] (px) at (5,2.7) {$\phi^{\downarrow X}$};
	\node (x) at(5,1.5) {$\mathbf{x}$};
    \node[scale=0.6] (sol) at (5,2) {\ \ Take a solution};
     \draw[black,->] (px)--(sol);
     \draw[black,->] (sol)--(x);
\node[scale=0.6] (cpl) at (6,1.5) {Complete it};
 \draw[black,->] (x)--(cpl);
 \draw[black,->] (xi2) edge[out=0,in=0,looseness=0.6] (cpl);
\node[] (cplsol) at (6,0.8) {$\langle \mathbf{x},\mathbf{y} \rangle$};
 \draw[black,->	] (cpl)  edge (cplsol);
\end{tikzpicture}

}\protect\caption{Process of building a solution by completion}
\end{figure}
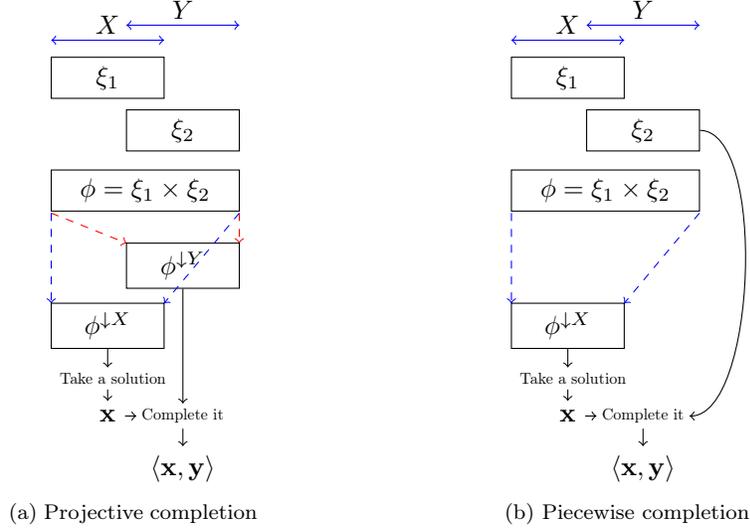

\end{defn}

\subsubsection{Classifying extension systems based on projective and piecewise completability}

In this section we investigate the relationship between piecewise
and projective completability. Since the proofs for these results
rely on valuation algebras on semirings, we only formulate the results
here, leaving the proof to the appendix. \begin{restatable}{prop}{completabilities}
\label{prop:completabilities}
There are valuation algebras and extension system satisfying:
\begin{enumerate}
\item neither projective nor piecewise completability, 
\item projective completability but not piecewise completability,
\item piecewise completability but not projective completability,
\item both piecewise and projective completability. 
\end{enumerate}
\end{restatable}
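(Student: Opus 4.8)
The plan is to prove the proposition by exhibiting, for each of the four cases, a concrete valuation algebra together with an extension system and a projection-closed, combination-breakable subset $\Xi$ on which the stated combination of properties holds. Before building the examples it pays to record one structural observation that dictates which case is hard. For the \emph{canonical} extension systems attached to a selective semiring (where $c_\phi$ collects the configurations attaining the optimal value and $W_\phi^X(\conf x)$ is the corresponding set of optimal completions), axiom A5 gives $\phi^{\downarrow Y}=\xi_2\times\xi_1^{\downarrow(X\cap Y)}$ whenever $\phi=\xi_1\times\xi_2$ with $d(\xi_1)=X$ and $d(\xi_2)=Y$. Since semiring multiplication is monotone with respect to the selection order, scaling $\xi_2$ by the fixed factor $\xi_1^{\downarrow(X\cap Y)}$ can only enlarge the set of maximizers, so $W_{\xi_2}^{X\cap Y}(\cdot)\subseteq W_{\phi^{\downarrow Y}}^{X\cap Y}(\cdot)$ and hence $CO(c_{\phi^{\downarrow X}},\xi_2)\subseteq CO(c_{\phi^{\downarrow X}},\phi^{\downarrow Y})$. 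Combined with Corollary~\ref{cor:Total-Projective-Completability}, this shows that for such systems projective completability \emph{implies} piecewise completability. Consequently cases (1), (3) and (4) can be realised with semiring systems, whereas case (2) cannot and will require a hand-built extension system.

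For case (4) I would take a strictly monotone selective semiring, e.g.\ the arctic semiring $(\mathbb{R}\cup\{-\infty\},\max,+)$, with its canonical extension system: strict monotonicity makes every scaling preserve maximizers, so both completability properties hold (this is the regime underlying Pouly and Kohlas' strict-monotone result). For case (3) I would use a selective but \emph{non}-strictly-monotone semiring, concretely $A=\{0,1,2\}$ with $+=\max$ and the saturating product $a\times b=\min(2,ab)$, together with the factors $\xi_1=(0\mapsto2,1\mapsto1)$ and $\xi_2=(0\mapsto1,1\mapsto2)$ on two Boolean variables $x,y$. A direct computation gives $c_\phi=\{(0,0),(0,1),(1,1)\}$ while $CO(c_{\phi^{\downarrow X}},\phi^{\downarrow Y})=\Omega_{\{x,y\}}$ (so projective fails, since $(1,0)$ is produced but is not a solution) and $CO(c_{\phi^{\downarrow X}},\xi_2)=\{(0,1),(1,1)\}\subseteq c_\phi$ (so piecewise holds). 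For case (1) I would reuse the domain-only extension system $\overline{\mathcal W}$ of Counterexample~\ref{cnt:Boolean2}: because $\overline{W}$ depends only on the domain of its valuation, the completions to $\phi^{\downarrow Y}$ and to $\xi_2$ coincide, and both equal $\Omega_{X\cup Y}\neq\overline c_\phi$, so neither property holds.

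The delicate construction is case (2), projective but not piecewise, and this is where I expect the main difficulty. By the structural observation it cannot be obtained from a monotone (semiring) system, so I would instead start from a freely chosen solution concept $c$ and build the associated canonical extension system $W_\phi^X(\conf x)=\{\conf z\mid\langle\conf x,\conf z\rangle\in c_\phi\}$; a short check shows this satisfies Definition~\ref{def:FCES} for any $c$ that is projection-consistent in the sense of Lemma~PK8.1 (the verification uses only $c_\phi^{\downarrow Y}=c_{\phi^{\downarrow Y}}$). The idea is then to pick $\phi=\xi_1\times\xi_2$ so that $\phi^{\downarrow Y}$ and $\xi_2$ are genuinely \emph{different} valuations (e.g.\ in the indicator algebra take $\xi_1\equiv0$, forcing $\phi^{\downarrow Y}\equiv0\neq\xi_2$), and to assign $c_\phi$ a ``box'' $c_{\phi^{\downarrow X}}\times c_{\phi^{\downarrow Y}}$ while declaring $c_{\xi_2}$ to contain a configuration outside $c_{\phi^{\downarrow Y}}$. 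Because $\phi^{\downarrow Y}\neq\xi_2$ these two solution sets may legitimately differ; the box structure makes $CO(c_{\phi^{\downarrow X}},\phi^{\downarrow Y})=c_\phi$ (projective), whereas the misaligned $c_{\xi_2}$ yields $CO(c_{\phi^{\downarrow X}},\xi_2)\not\subseteq c_\phi$ (piecewise fails).

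The main obstacle is therefore not the chosen factorization but the \emph{global} bookkeeping: $\Xi$ must be projection-closed and combination-breakable, so it drags in every factor and projection of the valuations used, and projective completability is required at \emph{all} of these factorizations simultaneously. The plan is to keep the underlying algebra finite and $\Xi$ as small as possible, to verify projection-consistency of $c$ valuation by valuation, and to confirm that every product in $\Xi$ receives a box-shaped solution set, so that projective completability is not accidentally broken elsewhere while piecewise completability fails at the single designed factorization.
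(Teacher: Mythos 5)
Your proposal is correct and, for three of the four cases, follows the same route as the paper: case (1) reuses the domain-only extension system of Counterexample \ref{cnt:Boolean2}, case (4) takes a strictly monotone selective semiring with its canonical (optimization) extension system, and case (3) takes a selective but non-square-ordered semiring in the style of Counterexample \ref{cnt:NotSquareOrdered} (your saturating product on $\{0,1,2\}$ is a minor variant of the paper's product table on $\{0,1,2,3\}$). One caveat on case (3): your single computation $CO(c_{\phi^{\downarrow X}},\xi_2)\subseteq c_\phi$ does not by itself establish piecewise completability, which quantifies over \emph{all} factorizations of \emph{all} valuations in $\Xi$; you need to invoke Theorem \ref{thm:optimizationSSFP}, which gives (non-empty) piecewise completability for the optimization extension system of \emph{any} selective commutative semiring --- the paper's own appendix relies on exactly this.

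Where you genuinely diverge is case (2), and the divergence is instructive. Both you and the paper exploit the same mechanism: an extension system must be hand-built so that the solution set of a \emph{factor} $\xi_2$ differs from that of the \emph{projection} $\phi^{\downarrow Y}$, which is only possible when $\xi_2\neq\phi^{\downarrow Y}$ as valuations. The paper achieves this by shrinking the valuation algebra itself to $\Psi=\{\phi_1^a\times\phi_2^b\times(\phi_1^{\downarrow\emptyset})^c\times(\phi_2^{\downarrow\emptyset})^d\}$ over $(\mathbb{R},\max,+)$: there, projecting any two-variable product down to $\{y\}$ picks up a strictly positive additive constant, so $\phi_2$ itself is never a projection of anything in $\Psi$, and the full solution sets assigned to $\phi_1,\phi_2$ never conflict with the singleton sets assigned to everything else; the bookkeeping stays local by construction. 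You instead keep the full indicator algebra and force the separation with $\xi_1\equiv 0$. This makes the bookkeeping strictly heavier than you acknowledge: since $0_\emptyset\times\psi=0_{d(\psi)}$ for every $\psi$, \emph{every} valuation over $\{x,y\}$ is a factor of your $\phi$, so combination-breakability forces $\Xi$ to be the whole algebra, and the box condition must be checked at every product-support valuation. The plan still closes --- e.g.\ assign $c_{\xi_2}=\{y\mapsto 1\}$ for the single designated factor $\xi_2=1_{\{y\}}$, assign every other valuation the singleton $\{(0,\dots)\}$ with $y$-coordinate $1$ exactly when it projects onto $1_{\{y\}}$, and verify that projective completability only ever constrains solution sets of \emph{projections} (which are boxes by construction) while piecewise fails at $0_{\{x\}}\times 1_{\{y\}}$ --- but that global verification is the actual content of the proof and is left as a sketch in your write-up. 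Finally, your opening observation that projective completability implies piecewise completability for canonical semiring extension systems (monotonicity of multiplication with respect to the canonical order preserves maximizer sets) is a genuinely useful structural fact that the paper does not state; it explains, rather than merely exemplifies, why case (2) cannot be realized by any optimization extension system.
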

\begin{proof}
See appendix \ref{app:Piecewise-and-projective}
\end{proof}

\subsection{Necessary and sufficient condition for \EGPName}

\label{sub:EGPSufficientAndNecessaryCondition}

We start by seeing that the projective completability properties,
required only for products of two valuations, can be extended to larger
products by virtue of Lemma \ref{lem:Decomposition}, as long as the
conditions imposed by the lemma hold.
\begin{lem}
\label{lem:ExtendendSolutionsFromIntersection-2} Assume projective
completability holds on $(\Phi,D),$ $\mathcal{W},$ and $\Xi$. Then,
given $\phi=\phi_{1}\times\dots\times\phi_{n},$ and any rooted covering
join tree $\mathcal{T}=(V,E,\lambda,U)$ for that factorization, for
any node $i$ of $V,$ any domain $X\subseteq\lambda^{nde}(i),$ such
that $s_{i}\subseteq X,$ we have that $CO(c_{\phi^{\downarrow X}},\phi^{\downarrow\lambda(i)})=c_{\phi^{\downarrow(X\cup\lambda(i))}}.$
\end{lem}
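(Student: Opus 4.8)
The plan is to reduce the statement to a single application of the Decomposition Lemma followed by the total version of projective completability established in Corollary~\ref{cor:Total-Projective-Completability}. Write $\psi=\phi^{\downarrow(X\cup\lambda(i))}$. The hypotheses on the node are precisely $X\subseteq\lambda^{nde}(i)$ and $s_i\subseteq X$, which are exactly the conditions required by Lemma~\ref{lem:Decomposition}. I would therefore first invoke that lemma to obtain a factorization $\psi=\alpha\times\beta$ with $\alpha,\beta\in\Xi$, $d(\alpha)=X$ and $d(\beta)=\lambda(i)$. This step, and the argument as a whole, uses $\phi\in\Xi$; since $\Phi$ is itself projection-closed and combination-breakable, this is the natural reading of the hypotheses. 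Because $\Xi$ is projection-closed and $\phi\in\Xi$, the projection $\psi=\phi^{\downarrow(X\cup\lambda(i))}$ also lies in $\Xi$, so the factorization $\psi=\alpha\times\beta$ into two factors with domains $X$ and $\lambda(i)$ meets the hypotheses of Corollary~\ref{cor:Total-Projective-Completability}.

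Applying that corollary to $\psi=\alpha\times\beta$ (with $\xi_1=\alpha$, $\xi_2=\beta$, and hence the two domains being $X$ and $\lambda(i)$) yields
\begin{equation*}
CO\bigl(c_{\psi^{\downarrow X}},\,\psi^{\downarrow\lambda(i)}\bigr)=c_{\psi}.
\end{equation*}
It then remains only to rewrite the three occurrences of $\psi$. Since $\phi^{\downarrow(X\cup\lambda(i))}$ is a well-defined projection we have $X\cup\lambda(i)\subseteq d(\phi)$, and both $X$ and $\lambda(i)$ are subsets of $X\cup\lambda(i)$. Transitivity (axiom A4) therefore gives $\psi^{\downarrow X}=(\phi^{\downarrow(X\cup\lambda(i))})^{\downarrow X}=\phi^{\downarrow X}$ and likewise $\psi^{\downarrow\lambda(i)}=\phi^{\downarrow\lambda(i)}$, while $c_\psi=c_{\phi^{\downarrow(X\cup\lambda(i))}}$ holds by the definition of $\psi$. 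Substituting these three identities into the displayed equation produces exactly $CO(c_{\phi^{\downarrow X}},\phi^{\downarrow\lambda(i)})=c_{\phi^{\downarrow(X\cup\lambda(i))}}$, which is the desired conclusion.

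I do not expect a genuine obstacle at this level: all of the real work is carried out inside Lemma~\ref{lem:Decomposition}, whose repeated use of axioms A4 and A5 together with the minimal-labeling identity produces the crucial product decomposition of $\phi^{\downarrow(X\cup\lambda(i))}$ with factors of scope exactly $X$ and $\lambda(i)$. The only points deserving care are confirming that $\psi$ inherits membership in $\Xi$ from projection-closedness, so that Corollary~\ref{cor:Total-Projective-Completability} is applicable, and checking the inclusions $X,\lambda(i)\subseteq X\cup\lambda(i)\subseteq d(\phi)$ that license the transitivity rewrites. Notably, the passage from a product of two valuations to the full product $\phi=\phi_1\times\cdots\times\phi_n$ is immediate here, because the corollary is applied to the two-factor decomposition $\psi=\alpha\times\beta$ supplied by the Decomposition Lemma rather than to $\phi$ itself.
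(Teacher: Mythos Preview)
Your proposal is correct and follows essentially the same approach as the paper: apply Lemma~\ref{lem:Decomposition} to obtain the factorization $\phi^{\downarrow(X\cup\lambda(i))}=\alpha\times\beta$, then invoke Corollary~\ref{cor:Total-Projective-Completability} with $\phi^{\downarrow(X\cup\lambda(i))}$ playing the role of $\phi$ and $\lambda(i)$ the role of $Y$. Your write-up is in fact more careful than the paper's, which leaves the transitivity rewrites and the check that $\psi\in\Xi$ implicit.
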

\begin{proof}
We can apply Lemma \ref{lem:Decomposition} to get that $\phi^{\downarrow(X\cup\lambda(i))}=\alpha\times\beta$,
with $d(\alpha)=X$ and $d(\beta)\subseteq\lambda(i).$ Then, we can
apply Corollary \ref{cor:Total-Projective-Completability} (with $\phi^{\downarrow(X\cup\lambda(i))}$
in the place of $\phi$ and $\lambda(i)$ in that of $Y$) getting
$CO(c_{\phi^{\downarrow X}},\phi^{\downarrow\lambda(i)})=c_{\phi^{\downarrow(X\cup\lambda(i))}}.$

\end{proof}
Now, we are ready to establish the sufficient condition for algorithm
\EGPName, which is basically projective completability.
\begin{thm}
Let $(\Phi,U)$ be a valuation algebra, and $\mathcal{W}$ a extension
system. Let $\Xi\subseteq\Phi$ be a subset of valuations projection-closed
and combination-breakable. Let $\phi\in\Xi,$ and $\mathcal{T}$ a
rooted covering join tree for a given factorization $\phi=\phi_{1}\times\dots\times\phi_{n}.$
Let $c$ be the set of configurations assessed by algorithm \EGPName.
If projective completability holds on $\Xi$ then $c=c_{\phi}.$ \end{thm}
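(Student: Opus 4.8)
The plan is to prove the stronger loop invariant that, writing the downward ordering used by the algorithm as $I=\langle i_{1},\dots,i_{n}\rangle$ (so that $i_{1}=r$) and setting $X_{k}=\lambda(i_{1})\cup\dots\cup\lambda(i_{k})$, the value of $c$ after the $k$-th iteration equals $c_{\phi^{\downarrow X_{k}}}$; I would then argue by induction on $k$. For the base case $k=1$ I would invoke the identity $CO(\{\diamond\},\psi)=c_{\psi}$ recorded just after the definition of completion: the first iteration sets $c=CO(\{\diamond\},\phi^{\downarrow\lambda(r)})=c_{\phi^{\downarrow\lambda(r)}}=c_{\phi^{\downarrow X_{1}}}$. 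Starting the induction at $k=1$ rather than $k=0$ is deliberate, since it sidesteps having to argue $c_{\phi^{\downarrow\emptyset}}=\{\diamond\}$, which would implicitly require solution sets to be non-empty.

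For the inductive step I would assume $c=c_{\phi^{\downarrow X_{k}}}$ after iteration $k$ and analyse iteration $k+1$, which sets $c\leftarrow CO(c_{\phi^{\downarrow X_{k}}},\phi^{\downarrow\lambda(i_{k+1})})$. The aim is to apply Lemma \ref{lem:ExtendendSolutionsFromIntersection-2} with $i=i_{k+1}$ and $X=X_{k}$, which would immediately give $CO(c_{\phi^{\downarrow X_{k}}},\phi^{\downarrow\lambda(i_{k+1})})=c_{\phi^{\downarrow(X_{k}\cup\lambda(i_{k+1}))}}=c_{\phi^{\downarrow X_{k+1}}}$ and close the induction. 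That lemma carries two hypotheses on $X_{k}$ that must be discharged, namely $s_{i_{k+1}}\subseteq X_{k}$ and $X_{k}\subseteq\lambda^{nde}(i_{k+1})$; note that $\phi\in\Xi$ and projective completability on $\Xi$, both assumed in the theorem, are exactly what licenses the use of the lemma for our $\phi$.

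Verifying these two topological conditions is the real content of the argument, and I expect it to be the main obstacle — not because it is deep, but because it is where the meaning of \emph{downward} must be read off correctly. Since in a downward ordering every node precedes all of its children, no descendant of $i_{k+1}$ can occur among $i_{1},\dots,i_{k}$; as these nodes are also distinct from $i_{k+1}$ itself, they all lie in $nde(i_{k+1})$, whence $X_{k}=\bigcup_{j\le k}\lambda(i_{j})\subseteq\lambda^{nde}(i_{k+1})$. For the separator, $i_{k+1}\neq r$, so it has a parent $p_{i_{k+1}}$, and in a downward ordering the parent precedes the child, so $p_{i_{k+1}}\in\{i_{1},\dots,i_{k}\}$; therefore $s_{i_{k+1}}=\lambda(i_{k+1})\cap\lambda(p_{i_{k+1}})\subseteq\lambda(p_{i_{k+1}})\subseteq X_{k}$. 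Finally, after the last iteration $c=c_{\phi^{\downarrow X_{n}}}$ with $X_{n}=\bigcup_{i\in V}\lambda(i)$; since the covering property gives $d(\phi)\subseteq\bigcup_{i}\lambda(i)$ while well-definedness of the inputs $\phi^{\downarrow\lambda(i)}$ forces $\lambda(i)\subseteq d(\phi)$, we obtain $X_{n}=d(\phi)$, and axiom A6 yields $c=c_{\phi^{\downarrow d(\phi)}}=c_{\phi}$, as required.
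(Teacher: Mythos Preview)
Your proof is correct and follows essentially the same approach as the paper: a loop invariant $c=c_{\phi^{\downarrow X_k}}$ maintained via Lemma~\ref{lem:ExtendendSolutionsFromIntersection-2}, with the final identification $X_n=d(\phi)$. You are more explicit than the paper in two respects: you actually discharge the two topological hypotheses $s_{i_{k+1}}\subseteq X_k$ and $X_k\subseteq\lambda^{nde}(i_{k+1})$ of the lemma (the paper just asserts ``the conditions of Lemma~\ref{lem:ExtendendSolutionsFromIntersection-2} are satisfied''), and you start the induction at $k=1$ via $CO(\{\diamond\},\psi)=c_{\psi}$ rather than at $k=0$ via $\{\diamond\}=c_{\phi^{\downarrow\emptyset}}$, which neatly avoids an implicit non-emptiness assumption the paper makes.
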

\begin{proof}
Take as loop invariant $c=c_{\phi^{\downarrow\bigcup_{i\in Visited}\lambda(i)}},$
where $Visited$ is the set of nodes of the join tree that have been
visited by the loop up to some point. At the beginning of the first
iteration the invariant is satisfied, since $c=\{\diamond\}=c_{\phi^{\downarrow\emptyset}}.$
For the update of $c$ that is made at each interation, the conditions
of Lemma \ref{lem:ExtendendSolutionsFromIntersection-2} are satisfied,
and hence, the lemma guarantees that if the invariant is true at the
beginning of an iteration, it is true at the end. When the last iteration
finishes, we have visited all the nodes and since $d(\phi)=\cup_{i\in V}\lambda(i)$
by Lemma \ref{lem:LambdasInDomain}, we have that $c=c_{\phi}.$ 
\end{proof}
In the first counterexample provided in section \ref{sec:GenericSolutions}
we had $CO(c_{\phi^{\downarrow X}},\phi^{\downarrow Y})=\Omega_{X\cup Y}$
whereas $c_{\phi}=\{(x\mapsto0,y\mapsto0),(x\mapsto1,y\mapsto1)\}.$
Thus, projective completability does not hold in that counterexample
and hence we do not have any guarantee that the algorithm will work.
In the second counterexample as the extension system is derived from
this one, projective completability does not hold either. Therefore
the misbehaviour of both counterexamples is correctly covered by the
new result.

In the next theorem we establish that projective completability is
also a necessary condition, in the sense that, if for any product
valuation, the algorithm is guaranteed to find a subset of its solutions,
then projective completability must hold.
\begin{thm}
Let $(\Phi,U)$ be a valuation algebra, and $\mathcal{W}$ a extension
system. Let $\Xi\subseteq\Phi$ be a subset of valuations projection-closed
and combination-breakable. If for each valuation $\phi\in\Xi$ which
factorizes as $\phi=\xi_{1}\times\xi_{2}$ and for each rooted covering
join tree $\mathcal{T}$, \EGPName\  assesses $c$ such that $c\subseteq c_{\phi},$
then projective completability holds on $\Xi$.\end{thm}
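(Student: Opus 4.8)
The plan is to prove the contrapositive-free statement directly: for an arbitrary $\phi\in\Xi$ that factorizes as $\phi=\xi_1\times\xi_2$ with $d(\xi_1)=X$ and $d(\xi_2)=Y$, I would establish $CO(c_{\phi^{\downarrow X}},\phi^{\downarrow Y})\subseteq c_\phi$, which is exactly the defining inclusion of projective completability. The key idea is that the hypothesis quantifies over \emph{every} rooted covering join tree, so I get to choose one tailored tree on which \EGPName\ returns precisely the set $CO(c_{\phi^{\downarrow X}},\phi^{\downarrow Y})$; then the assumed guarantee $c\subseteq c_\phi$ delivers the inclusion for free.

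First I would construct that tree. Let $\mathcal{T}$ consist of just two nodes, a root $r$ and its single child $t$, joined by the edge $\{r,t\}$, with labels $\lambda(r)=X$ and $\lambda(t)=Y$, and with the valuation assignment sending $\xi_1$ to $r$ and $\xi_2$ to $t$, so that $\psi_r=\xi_1$ and $\psi_t=\xi_2$. I would verify that this is an admissible covering join tree for the factorization: the covering property holds because $d(\xi_1)=X\subseteq\lambda(r)$ and $d(\xi_2)=Y\subseteq\lambda(t)$; the running intersection property is vacuous, since the only pair of nodes is $(r,t)$ and the path between them has no interior node; and the separator is $s_{rt}=X\cap Y$. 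I would also check minimal labeling (Assumption \ref{assu:Minimal}): for the root, $d(\psi_r)\cup\bigcup_{j\in ne(r)-\{t\}}s_{rj}=X\cup\emptyset=\lambda(r)$, and symmetrically $\lambda(t)=Y$ for the child.

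Next I would trace \EGPName\ on $\mathcal{T}$, fed with the true projections $\{\phi^{\downarrow X},\phi^{\downarrow Y}\}$. A downward order visits $r$ first and then $t$. Starting from $c=\{\diamond\}$, processing the root sets $c\leftarrow CO(\{\diamond\},\phi^{\downarrow X})=c_{\phi^{\downarrow X}}$, using the identity $c_\psi=CO(\{\diamond\},\psi)$ noted after the definition of completion. Processing $t$ then sets $c\leftarrow CO(c_{\phi^{\downarrow X}},\phi^{\downarrow Y})$, which is the returned value. Since $d(\phi)=X\cup Y$ by axiom A2, the hypothesis applies to this particular tree and gives $c\subseteq c_\phi$, that is $CO(c_{\phi^{\downarrow X}},\phi^{\downarrow Y})\subseteq c_\phi$. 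Because $\phi$, the factorization $\xi_1\times\xi_2$, and the domains $X,Y$ were arbitrary, projective completability holds on $\Xi$.

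I expect the only delicate point to be confirming that the two-node tree genuinely qualifies as a covering join tree under all the conventions used throughout the paper, in particular that it satisfies the minimal-labeling Assumption \ref{assu:Minimal} used implicitly by \EGPName; once that bookkeeping is in place, the remainder is a direct unwinding of the loop of the algorithm and a single appeal to the hypothesis.
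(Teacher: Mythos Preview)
Your proposal is correct and follows essentially the same approach as the paper: construct a two-node rooted covering join tree with root labeled $X$ covering $\xi_1$ and child labeled $Y$ covering $\xi_2$, observe that \EGPName\ returns exactly $CO(c_{\phi^{\downarrow X}},\phi^{\downarrow Y})$, and invoke the hypothesis. Your write-up is more explicit about verifying the covering, running-intersection, and minimal-labeling conditions, but the argument is the same.
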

\begin{proof}
Assume that \EGPName\ \textit{ }\textit{\emph{always assesses}} a
subset of $c_{\phi}.$ For any $\phi\in\Xi,$ $\phi=\xi_{1}\times\xi_{2},$
with domains $X$ and $Y$ respectively, we define a covering join
tree with two nodes: $v_{1},$ with label $\lambda(v_{1})=X$ covering
$\xi_{1},$ and its single child $v_{2},$ with label $\lambda(v_{2})=Y$
covering $\xi_{2}.$ We can run \EGPName\  on $\mathcal{T}$, assessing
$c$ which by our assumption will be a subset of $c_{\phi}.$ By manual
expansion of the expressions in the algorithm, we see that, for this
small tree the solution set assessed is $c=CO(CO(\{\diamond\},\phi^{\downarrow X}),\phi^{\downarrow Y})$.
Now, since $c_{\phi^{\downarrow X}}=CO(\{\diamond\},\phi^{\downarrow X})$,
we have that $c=CO(c_{\phi^{\downarrow X}},\phi^{\downarrow Y})$
and since we had that $c\subseteq c_{\phi},$ we have that $CO(c_{\phi^{\downarrow X}},\phi^{\downarrow Y})\subseteq c_{\phi}.$
Since this holds for any $\phi\in\Xi,$ $\phi=\xi_{1}\times\xi_{2},$
projective completability must hold.
\end{proof}
As noticed by the counterexamples provided in section \ref{sec:GenericSolutions}
the necessary and sufficient conditions for the \EGPName\textit{
}algorithms were not correctly understood in the former literature.
We have provided a characterization of the subsets of a valuation
algebra where the \EGPName\  algorithm works by means of identifying
a sufficient and necessary condition, namely projective completability.

\subsection{Necessary and sufficient condition for \ESName}

\label{sec:ESNecessaryAndSufficient}

As we did in the previous section, we start by seeing that piecewise
completability properties, required only for products of two valuations,
can be extended to larger products by virtue of Lemma \ref{lem:Decomposition},
as long as the conditions imposed by the lemma hold.
\begin{lem}
\label{lem:PiecewiseSolutionsFromIntersection2}Let $(\Phi,U)$ be
a valuation algebra, and $\mathcal{W}$ a extension system. Let $\Xi\subseteq\Phi$
be a subset of valuations projection-closed and combination-breakable.
Then, for any $\phi\in\Xi,$ any rooted covering join tree $\mathcal{T}$
for a given factorization $\phi=\phi_{1}\times\dots\times\phi_{n},$
any node $i$ of $\mathcal{T},$ any domain $X\subseteq\lambda^{nde}(i),$
such that $s_{i}\subseteq X,$ and any set $A\subseteq c_{\phi^{\downarrow X}}$,
we define $\beta=\left(\psi_{i}\times\prod_{j\in de(i)}\psi_{j}\right)^{\downarrow\lambda(i)}$
and we have that 
\begin{enumerate}
\item If piecewise completability holds, then $CO(A,\beta)\subseteq c_{\phi^{\downarrow(X\cup\lambda(i))}},$ 
\item If piecewise completability is guaranteed non-empty, then whenever
$A\neq\emptyset$ we have that $CO(A,\beta)\neq\emptyset.$ 
\item If piecewise completability is complete we have that $CO(c_{\phi^{\downarrow X}},\beta)=c_{\phi^{\downarrow(X\cup\lambda(i))}}.$
\end{enumerate}
\end{lem}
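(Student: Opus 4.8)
The plan is to mirror the proof of Lemma~\ref{lem:ExtendendSolutionsFromIntersection-2}, replacing the appeal to Corollary~\ref{cor:Total-Projective-Completability} by the corresponding forms of piecewise completability. The workhorse is again the Decomposition Lemma. Since $\phi\in\Xi$ and the hypotheses on $X$ (namely $s_i\subseteq X\subseteq\lambda^{nde}(i)$) are exactly those required by Lemma~\ref{lem:Decomposition}, I would first invoke it to obtain the factorization
\[
\phi^{\downarrow(X\cup\lambda(i))}=\alpha\times\beta,
\]
with $d(\alpha)=X$, $d(\beta)=\lambda(i)$, and crucially $\alpha,\beta\in\Xi$, where $\beta=\left(\psi_{i}\times\prod_{j\in de(i)}\psi_{j}\right)^{\downarrow\lambda(i)}$ is precisely the valuation named in the statement. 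The fact that both factors lie in $\Xi$ is what lets piecewise completability---stated only for products of two valuations inside $\Xi$---apply to this decomposition.

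The second ingredient is a transitivity reduction. Since $X\subseteq X\cup\lambda(i)\subseteq d(\phi)$, axiom A4 gives $\bigl(\phi^{\downarrow(X\cup\lambda(i))}\bigr)^{\downarrow X}=\phi^{\downarrow X}$, hence $c_{(\phi^{\downarrow(X\cup\lambda(i))})^{\downarrow X}}=c_{\phi^{\downarrow X}}$. With this identification in hand, I would apply each of the three forms of piecewise completability to the product $\phi^{\downarrow(X\cup\lambda(i))}=\alpha\times\beta$ (playing the role of $\phi=\xi_1\times\xi_2$, with $\beta$ in the role of $\xi_2$). Plain piecewise completability yields $CO(c_{\phi^{\downarrow X}},\beta)\subseteq c_{\phi^{\downarrow(X\cup\lambda(i))}}$; the guaranteed-non-empty form yields $CO(A,\beta)\neq\emptyset$ for every $\emptyset\neq A\subseteq c_{\phi^{\downarrow X}}$, which is exactly claim~2; and the total form yields $CO(c_{\phi^{\downarrow X}},\beta)=c_{\phi^{\downarrow(X\cup\lambda(i))}}$, which is exactly claim~3.

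For claim~1 the statement asks for $CO(A,\beta)\subseteq c_{\phi^{\downarrow(X\cup\lambda(i))}}$ for an \emph{arbitrary} $A\subseteq c_{\phi^{\downarrow X}}$, whereas piecewise completability only directly bounds $CO(c_{\phi^{\downarrow X}},\beta)$. The gap is closed by the monotonicity of $CO$ in its first argument: from the definition of $CO(A,\beta)$ as a union of completions indexed by the elements of $A$, it follows immediately that $A\subseteq c_{\phi^{\downarrow X}}$ implies $CO(A,\beta)\subseteq CO(c_{\phi^{\downarrow X}},\beta)$, and composing with the inclusion above gives the result. I do not expect a genuine obstacle here; the only point demanding care is verifying that the conditions of Lemma~\ref{lem:Decomposition} are met and that its conclusion places \emph{both} $\alpha$ and $\beta$ in $\Xi$, so that piecewise completability is legitimately applicable to the decomposition. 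The rest is bookkeeping with A4 and the monotonicity of $CO$.
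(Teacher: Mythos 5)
Your proof is correct and follows essentially the same route as the paper: invoke Lemma~\ref{lem:Decomposition} to factorize $\phi^{\downarrow(X\cup\lambda(i))}=\alpha\times\beta$ with $\alpha,\beta\in\Xi$, then apply each flavor of piecewise completability to that product. The two points you spell out explicitly---the identification $c_{(\phi^{\downarrow(X\cup\lambda(i))})^{\downarrow X}}=c_{\phi^{\downarrow X}}$ via axiom A4, and the monotonicity of $CO$ in its first argument needed to pass from $c_{\phi^{\downarrow X}}$ to an arbitrary $A\subseteq c_{\phi^{\downarrow X}}$ in claim~1---are left implicit in the paper's proof, so your write-up is if anything more complete.
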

\begin{proof}
We can apply Lemma \ref{lem:Decomposition} to get that $\phi^{\downarrow(X\cup\lambda(i))}=\alpha\times\beta$,
with $d(\alpha)=X$, $d(\beta)=\lambda(i),$ and $\beta=\left(\psi_{i}\times\prod_{j\in de(i)}\psi_{j}\right)^{\downarrow\lambda(i)}.$
The conditions to apply piecewise completability hold (with $\phi^{\downarrow(X\cup\lambda(i))}$
in place of $\phi$ and $\lambda(i)$ in that of $Y$) getting that
$CO(A,\beta)\subseteq c_{\phi^{\downarrow(X\cup\lambda(i))}}.$ The
second and third statements can be proven the same way.
\end{proof}
Following what we did with \EGPName, now we are ready to establish
the sufficient contitions for the \ESName\  algorithm, namely piecewise
completability in its different flavors.
\begin{thm}
\label{thm:Extend-to-Subtree-Sufficient}Let $(\Phi,U)$ be a valuation
algebra, and $\mathcal{W}$ a extension system. Let $\Xi\subseteq\Phi$
be a subset of valuations projection-closed and combination-breakable.
Let $\phi\in\Xi,$ and $\mathcal{T}$ be a rooted covering join tree
for a given factorization $\phi=\phi_{1}\times\dots\times\phi_{n}.$
Let $c$ be the set of configurations assessed by algorithm \ESName.
We have that
\begin{enumerate}
\item If piecewise completability holds on $\Xi$, then $c$ is a subset
of $c_{\phi}.$ 
\item If piecewise completability is guaranteed non-empty on $\Xi$ then
also $c\neq\emptyset$. 
\item If piecewise completability is total on $\Xi$, then $c=c_{\phi}.$
\end{enumerate}
\end{thm}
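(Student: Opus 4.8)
The plan is to mirror the loop-invariant argument already used for \EGPName, replacing Lemma~\ref{lem:ExtendendSolutionsFromIntersection-2} by its piecewise analogue, Lemma~\ref{lem:PiecewiseSolutionsFromIntersection2}. The crucial observation is that the valuations fed to the algorithm are exactly the subtree projections produced by \noun{Collect}: by Theorem~\ref{thm:Collect} we have $\psi'_i=\left(\psi_i\times\prod_{j\in de(i)}\psi_j\right)^{\downarrow\lambda(i)}$, which is precisely the valuation $\beta$ appearing in Lemma~\ref{lem:PiecewiseSolutionsFromIntersection2}. Hence each update $c\leftarrow CO(c,\psi'_i)$ performed by \ESName\ is an application of $CO(\cdot,\beta)$ for the node $i$ currently being processed, and the three parts of Lemma~\ref{lem:PiecewiseSolutionsFromIntersection2} translate directly into the three statements of the theorem.

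Before the lemma can be invoked at a given iteration, I must check that its hypotheses on the accumulated domain are met. Writing $Visited$ for the set of nodes processed so far and $X=\bigcup_{k\in Visited}\lambda(k)$, the two conditions needed are $X\subseteq\lambda^{nde}(i)$ and $s_i\subseteq X$, which are exactly the hypotheses on $X$ in Lemma~\ref{lem:PiecewiseSolutionsFromIntersection2}. Both follow from the fact that the loop traverses the tree in a downward order. Since every node appears before each of its children, the descendants of $i$ have not yet been processed when $i$ is reached, so $Visited\subseteq nde(i)$ and therefore $X\subseteq\lambda^{nde}(i)$. Moreover the parent $p_i$ was processed strictly before $i$, so $\lambda(p_i)\subseteq X$, whence $s_i=\lambda(i)\cap\lambda(p_i)\subseteq\lambda(p_i)\subseteq X$. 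Thus the hypotheses hold at every iteration.

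With the hypotheses secured, each statement follows from a separate loop invariant. For statement~1 I take the invariant $c\subseteq c_{\phi^{\downarrow X}}$, which holds initially since $c=\{\diamond\}=c_{\phi^{\downarrow\emptyset}}$ and is preserved because part~1 of the lemma gives $CO(c,\psi'_i)\subseteq c_{\phi^{\downarrow(X\cup\lambda(i))}}$ whenever $c\subseteq c_{\phi^{\downarrow X}}$. For statement~2 I strengthen the invariant to ``$c\subseteq c_{\phi^{\downarrow X}}$ and $c\neq\emptyset$''; the first conjunct is maintained as before (using that piecewise completability holds), and the second is maintained by part~2 of the lemma, which guarantees $CO(c,\psi'_i)\neq\emptyset$ whenever $c\neq\emptyset$. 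Carrying both conjuncts is necessary because part~2 requires $c$ to be a subset of $c_{\phi^{\downarrow X}}$ in order to apply. For statement~3 I use the equality invariant $c=c_{\phi^{\downarrow X}}$, preserved by part~3 of the lemma. In every case, once all nodes have been visited we have $X=\bigcup_{i\in V}\lambda(i)=d(\phi)$ by Lemma~\ref{lem:LambdasInDomain}, so the invariant yields $c\subseteq c_\phi$, $c\neq\emptyset$, and $c=c_\phi$ respectively.

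The argument is almost entirely routine once the identification $\psi'_i=\beta$ is made, so the only genuinely delicate point is the verification of the domain hypotheses at each step --- in particular confirming that the downward ordering forces every already-visited node to be a non-descendant of the current node, which is what lets $X$ satisfy $X\subseteq\lambda^{nde}(i)$ and $s_i\subseteq X$. I expect no further obstacles, since the non-emptiness bookkeeping in statement~2 and the passage from the two-valuation completability properties to the subtree setting are already absorbed into Lemma~\ref{lem:PiecewiseSolutionsFromIntersection2}.
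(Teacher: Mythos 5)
Your proof is correct and takes essentially the same route as the paper's: the same loop invariants, preserved at each iteration via Lemma~\ref{lem:PiecewiseSolutionsFromIntersection2} and concluded with Lemma~\ref{lem:LambdasInDomain}. You in fact supply details the paper leaves implicit --- the identification $\psi'_i=\beta$ via Theorem~\ref{thm:Collect}, the check that the downward order forces $X\subseteq\lambda^{nde}(i)$ and $s_i\subseteq X$ (only the trivial root case, where $s_i=\emptyset$, goes unmentioned), and the explicit strengthened invariants for statements~2 and~3.
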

\begin{proof}
To prove statement 1, take as loop invariant $c\subseteq c_{\phi^{\downarrow\bigcup_{i\in Visited}\lambda(i)}},$
where $Visited$ is the set of nodes of the join tree that have been
visited by the loop up to some point. At the beginning of the first
iteration the invariant is satisfied, since $c=\{\diamond\}\subseteq c_{\phi^{\downarrow\emptyset}}.$
For the update of $c$ that is made at each interation, the conditions
of Lemma \ref{lem:PiecewiseSolutionsFromIntersection2} are satisfied,
and hence, the lemma guarantees that if the invariant is true at the
beginning of an iteration, it is true at the end. When the last iteration
finishes, we have visited all the nodes and since Lemma \ref{lem:LambdasInDomain}
shows that $d(\phi)=\cup_{i\in V}\lambda(i),$ we have that $c\subseteq c_{\phi}.$
Statements 2 and 3 can be proven the same way.
\end{proof}
Again, piecewise completability is not only a sufficient condition,
but also necessary if the \ESName\ \noun{ }algorithm works in a consistent
manner, as proven by the following theorem.
\begin{thm}
\label{thm:Extend-to-Subtree-Necessary}Let $(\Phi,U)$ be a valuation
algebra, and $\mathcal{W}$ a extension system. Let $\Xi\subseteq\Phi$
be a subset of valuations projection-closed and combination-breakable.
If for each valuation $\phi\in\Xi,$ $\phi=\xi_{1}\times\xi_{2}$
and for each rooted covering join tree $\mathcal{T}$, \ESName\ 
assesses $c$ such that 
\begin{enumerate}
\item $c\subseteq c_{\phi},$ then piecewise completability holds on $\Xi$.
\item $\emptyset\neq c\subseteq c_{\phi},$ then piecewise completability
is guaranteed non-empty on $\Xi$.
\item $c=c_{\phi},$ then piecewise completability is total on $\Xi$.
\end{enumerate}
\end{thm}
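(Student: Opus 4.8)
The plan is to mirror the construction used in the preceding necessity theorem for \EGPName, now tracing \ESName\ instead. Fix any $\phi\in\Xi$ that factorizes as $\phi=\xi_{1}\times\xi_{2}$ with $d(\xi_{1})=X$ and $d(\xi_{2})=Y$, and build the canonical two-node covering join tree: a root $v_{1}$ with $\lambda(v_{1})=X$ to which $\xi_{1}$ is assigned, and its single child $v_{2}$ with $\lambda(v_{2})=Y$ to which $\xi_{2}$ is assigned. This is a valid, minimally labelled covering join tree (the running intersection property is trivial on a single edge, and Assumption~\ref{assu:Minimal} is immediate since each node has a single neighbour). First I would run \textsc{Collect} on it and invoke Theorem~\ref{thm:Collect}: since $de(v_{2})=\emptyset$ we obtain $\psi'_{v_{2}}=\xi_{2}$, and since $de(v_{1})=\{v_{2}\}$ we obtain $\psi'_{v_{1}}=(\xi_{1}\times\xi_{2})^{\downarrow X}=\phi^{\downarrow X}$.

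Next I would trace \ESName\ on these subtree projections. The only downward order is $v_{1},v_{2}$, so starting from $c=\{\diamond\}$ the first iteration gives $c=CO(\{\diamond\},\phi^{\downarrow X})=c_{\phi^{\downarrow X}}$ (using $c_{\psi}=CO(\{\diamond\},\psi)$), and the second gives $c=CO(c_{\phi^{\downarrow X}},\xi_{2})$. Hence the set assessed by \ESName\ on this tree is exactly the left-hand side appearing in the three flavours of piecewise completability. From here statements 1 and 3 are immediate: if the hypothesis guarantees $c\subseteq c_{\phi}$ for every such $\phi$, then $CO(c_{\phi^{\downarrow X}},\xi_{2})\subseteq c_{\phi}$, which is precisely piecewise completability; and if it guarantees $c=c_{\phi}$, then $CO(c_{\phi^{\downarrow X}},\xi_{2})=c_{\phi}$, which is totality. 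Since $\phi$ ranges over all products in $\Xi$, the respective properties hold on $\Xi$.

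Statement 2 is where the real work lies. Its containment clause $c\subseteq c_{\phi}$ again yields piecewise completability exactly as above, so it remains to derive the guaranteed-non-empty clause, namely $CO(A,\xi_{2})\neq\emptyset$ for every nonempty $A\subseteq c_{\phi^{\downarrow X}}$. I would first reduce this to a pointwise statement: from the definition of $CO$ one has $CO(A,\xi_{2})=\bigcup_{\conf{x}\in A}CO(\{\conf{x}\},\xi_{2})$, so the guaranteed-non-empty clause is equivalent to $CO(\{\conf{x}\},\xi_{2})\neq\emptyset$, i.e. $W_{\xi_{2}}^{X\cap Y}(\conf{x}^{\downarrow X\cap Y})\neq\emptyset$, for every single $\conf{x}\in c_{\phi^{\downarrow X}}$. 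The plain two-node tree above only yields $CO(c_{\phi^{\downarrow X}},\xi_{2})\neq\emptyset$, which merely witnesses that \emph{some} configuration of $c_{\phi^{\downarrow X}}$ extends to $\xi_{2}$; because \ESName\ accumulates the completions of \emph{all} surviving partial solutions, a single non-extending $\conf{x}$ can be masked by the others, so the naive tree is insufficient here.

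Overcoming this masking is the main obstacle. The plan is to expose an individual $\conf{x}\in c_{\phi^{\downarrow X}}$ by choosing a factorization and covering join tree for which the partial-solution set accumulated immediately before the node carrying $\xi_{2}$ is the singleton $\{\conf{x}\}$, so that the algorithm's output collapses to $CO(\{\conf{x}\},\xi_{2})$; the non-emptiness hypothesis then forces $W_{\xi_{2}}^{X\cap Y}(\conf{x}^{\downarrow X\cap Y})\neq\emptyset$, and combining over all $\conf{x}$ gives guaranteed non-emptiness. Engineering such a singleton accumulated set — essentially realizing $\conf{x}$ as the unique partial solution reaching that node, which amounts to arranging that the relevant projected valuation has a unique solution — is the delicate step, since it is precisely where the union-collecting (optimistic) behaviour of \ESName\ must be circumvented. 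Once this isolation is secured the argument closes exactly as in statements 1 and 3.
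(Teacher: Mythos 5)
For statements 1 and 3 your argument is exactly the paper's: the same two-node minimally labeled covering join tree ($v_{1}$ labelled $X$ covering $\xi_{1}$, child $v_{2}$ labelled $Y$ covering $\xi_{2}$), the same trace of \textsc{Collect} and \ESName\ showing that the output is $CO(CO(\{\diamond\},\phi^{\downarrow X}),\xi_{2})=CO(c_{\phi^{\downarrow X}},\xi_{2})$, and the same reading-off of piecewise completability from $c\subseteq c_{\phi}$ and of totality from $c=c_{\phi}$. Your explicit appeal to Theorem \ref{thm:Collect} to get $\psi'_{v_{1}}=\phi^{\downarrow X}$ and $\psi'_{v_{2}}=\xi_{2}$ is the justification the paper states without proof.

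For statement 2 your diagnosis is correct, and it exposes a gap in the paper's own proof, which disposes of that clause with ``Statements 2 and 3 can be proven the same way.'' The same way yields only $CO(c_{\phi^{\downarrow X}},\xi_{2})\neq\emptyset$, i.e. non-emptiness for the single choice $A=c_{\phi^{\downarrow X}}$, whereas guaranteed non-emptiness demands $CO(A,\xi_{2})\neq\emptyset$ for \emph{every} nonempty $A\subseteq c_{\phi^{\downarrow X}}$, equivalently $W_{\xi_{2}}^{X\cap Y}(\conf{x}^{\downarrow X\cap Y})\neq\emptyset$ for every individual $\conf{x}\in c_{\phi^{\downarrow X}}$. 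When $X\cap Y\neq\emptyset$ these conditions genuinely differ, by exactly the masking effect you describe (they coincide when $X\cap Y=\emptyset$, since then the extension set does not depend on $\conf{x}$, which may be why the gap went unnoticed). However, your own proposal also stops short: the isolation step is announced but not constructed, and there is a real obstacle to constructing it. The sets accumulated by \ESName\ are not free parameters of the prover: the first one is always the full set $c_{\phi^{\downarrow\lambda(r)}}$, and by Theorem \ref{thm:Collect} every subsequent $\psi'_{i}$ is a projection of a partial product of the given factors (plus identities), so the intermediate sets are dictated by the extension system and cannot in general be forced down to a prescribed singleton $\{\conf{x}\}$ in an abstract valuation algebra. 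As submitted, your proof therefore establishes statements 1 and 3 and only the containment half of statement 2; closing it would require finding the missing isolation construction, or weakening the conclusion of statement 2 to the aggregate condition the two-node tree actually certifies, or producing a counterexample to the necessity of guaranteed non-emptiness as defined --- and the same verdict applies to the paper's own one-line proof of that clause.
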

\begin{proof}
We start proving statement 1. Assume that \ESName\ \textit{ }\textit{\emph{always
assesses}} a subset of $c_{\phi}.$ Given $\phi\in\Xi,$ $\phi=\xi_{1}\times\xi_{2},$
with domains $X$ and $Y$ respectivelty, we define a join tree with
two nodes: $v_{1},$ with label $\lambda(v_{1})=X$ covering $\xi_{1},$
and its single child $v_{2},$ with label $\lambda(v_{2})=Y$ covering
$\xi_{2}.$ We can run \noun{Extend-To-Subtree-Projections} on $\mathcal{T}$,
getting $c\subseteq c_{\phi}.$ However in this particular case we
can see that $c=CO(CO(\{\diamond\},\phi^{\downarrow X}),\xi_{2})$.
Now, since $c_{\phi^{\downarrow X}}=CO(\{\diamond\},\phi^{\downarrow X})$,
we have that $c=CO(c_{\phi^{\downarrow X}},\xi_{2})$. Since the algorithm
is guaranteed to return $c\subseteq c_{\phi},$ piecewise completability
must hold. Statements 2 and 3 can be proven the same way.
\end{proof}

\subsection{Sufficient conditions for \SESName}

Finally, we show that the \SESName\  algorithm can be applied if
guaranteed non-empty piecewise completability holds. 
\begin{thm}
\label{thm:PiecewiseSETSP}Let $(\Phi,U)$ be a valuation algebra,
and $\mathcal{W}$ a extension system. Let $\Xi\subseteq\Phi$ be
a subset of valuations projection-closed and combination-breakable.
Let $\phi\in\Xi,$ and $\mathcal{T}$ be a rooted covering join tree
for a given factorization $\phi=\phi_{1}\times\dots\times\phi_{n}.$
If guaranteed non-empty piecewise completability holds on $\Xi$,
then\noun{ }\SESName\  assesses a configuration $\mathbf{x}$ which
is a solution to $\phi.$ \end{thm}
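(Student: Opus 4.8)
The plan is to mirror the loop-invariant argument used for \ESName\ in Theorem \ref{thm:Extend-to-Subtree-Sufficient}, but adapted to the fact that \SESName\ carries a single configuration rather than a set, so the invariant must additionally certify that the algorithm never gets stuck. First I would record the bridge to the \noun{Collect} output: by Theorem \ref{thm:Collect} the input quantities satisfy $\psi'_i = \left(\psi_i \times \prod_{j \in de(i)} \psi_j\right)^{\downarrow \lambda(i)}$, which is exactly the valuation $\beta$ appearing in Lemma \ref{lem:PiecewiseSolutionsFromIntersection2}. This identification $\beta = \psi'_i$ is what licenses invoking that lemma at each step of the downward sweep.

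The loop invariant I would take is: after the loop has visited the node set $Visited$, the current configuration $\mathbf{x}$ is well-defined and $\mathbf{x} \in c_{\phi^{\downarrow X}}$, where $X = \bigcup_{j \in Visited} \lambda(j)$. The base case is immediate, since before any iteration $Visited = \emptyset$, $\mathbf{x} = \diamond$, and $c_{\phi^{\downarrow \emptyset}} = \{\diamond\}$. For the inductive step at a node $i$, I would first verify the hypotheses of Lemma \ref{lem:PiecewiseSolutionsFromIntersection2}. Because the nodes are processed in downward order, no descendant of $i$ has been visited yet, so $Visited \subseteq nde(i)$ and hence $X \subseteq \lambda^{nde}(i)$; and because the parent $p_i$ has already been visited, $s_i = \lambda(i) \cap \lambda(p_i) \subseteq \lambda(p_i) \subseteq X$ (with $s_r = \emptyset$ for the root). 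Taking $A = \{\mathbf{x}\}$, which is nonempty and, by the invariant, contained in $c_{\phi^{\downarrow X}}$, the lemma applies with $\beta = \psi'_i$.

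The heart of the argument is the dual use of the lemma. Since guaranteed non-empty piecewise completability supplies both the containment property and its non-emptiness strengthening, statement~2 yields $CO(\{\mathbf{x}\}, \psi'_i) \neq \emptyset$, so a completion of $\mathbf{x}$ to $\psi'_i$ genuinely exists and the assignment in the loop can be carried out without the algorithm stalling; statement~1 yields $CO(\{\mathbf{x}\}, \psi'_i) \subseteq c_{\phi^{\downarrow(X \cup \lambda(i))}}$, so whichever completion is chosen, the updated configuration lies in $c_{\phi^{\downarrow(X \cup \lambda(i))}}$, reestablishing the invariant with $Visited$ enlarged by $i$. After the final iteration $Visited = V$, so that $X = \bigcup_{i \in V} \lambda(i) = d(\phi)$ by Lemma \ref{lem:LambdasInDomain}, and axiom A6 gives $\phi^{\downarrow d(\phi)} = \phi$; combining these I conclude $\mathbf{x} \in c_{\phi^{\downarrow d(\phi)}} = c_\phi$, i.e.\ $\mathbf{x}$ is a solution of $\phi$.

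I expect the main obstacle to be precisely the well-definedness step rather than the containment. In contrast with the set-valued \ESName, where an empty completion set merely returns an empty set, here an empty completion set would make the line ``$\mathbf{x} \leftarrow$ a completion of $\mathbf{x}$ to $\psi'_i$'' fail outright; so guaranteed non-emptiness is doing genuine work and must be invoked at \emph{every} iteration, not only to conclude at the end. The remaining care is bookkeeping: checking the two downward-order domain conditions $X \subseteq \lambda^{nde}(i)$ and $s_i \subseteq X$ that license the lemma, which is where the running intersection structure of the join tree is used.
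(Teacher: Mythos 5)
Your proposal is correct and follows essentially the same route as the paper's proof: the identical loop invariant $\mathbf{x}\in c_{\phi^{\downarrow\bigcup_{j\in Visited}\lambda(j)}}$, maintained via Lemma \ref{lem:PiecewiseSolutionsFromIntersection2} (using non-emptiness to guarantee a completion exists and containment to preserve the invariant), and concluded via Lemma \ref{lem:LambdasInDomain}. You merely spell out details the paper leaves implicit, namely the identification $\psi'_i=\beta$ via Theorem \ref{thm:Collect} and the verification that the downward order yields $X\subseteq\lambda^{nde}(i)$ and $s_i\subseteq X$.
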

\begin{proof}
Take as loop invariant $\mathbf{x}\in c_{\phi^{\downarrow\bigcup_{i\in Visited}\lambda(i)}},$
where $Visited$ is the set of nodes of the join tree that have been
visited by the loop up to some point. At the beginning of the first
iteration the invariant is satisfied, since $\diamond\in c_{\phi^{\downarrow\emptyset}}.$
The update of $\mathbf{x}$ that made at each interation, is possible
because the conditions of Lemma \ref{lem:PiecewiseSolutionsFromIntersection2}
(including guaranteed non-emptyness) are satisfied, and hence, there
is a completion that we can select, store in $\mathbf{x},$ and it
is guaranteed to maintain the invariant. When the last iteration finishes,
we have visited all the nodes and since $d(\phi)=\cup_{i\in V}\lambda(i)$
due to Lemma \ref{lem:LambdasInDomain}, we have that $\mathbf{x}\in c_{\phi}.$
\end{proof}
In the last three sections we have characterized under which circumstances
can we apply each algorithm. In the next section we compare with the
sufficient conditions provided by Pouly and Kohlas.

\subsection{Alternative proofs for the PK results}

As we argued before, Pouly and Kohlas stated that \EGPName\  always
assessed $c_{\phi},$ and we disproved by means of counterexamples.
However, we have proven that projective completability is a sufficient
and necessary condition for the algorithm. They did also provide sufficient
conditions for the algorithms \ESName\  and \SESName. We will see
that, although the proofs relied on a disproved theorem, the results
provided were correct. We do that by proving that the sufficient conditions
established by them and described in section \ref{sec:PK-Sufficient-Conditions}
imply our sufficient conditions.

As can be seen in Table \ref{tab:Impact}, CPK1 and CPK2 were proposed
as sufficient condition for \ESName\ \noun{ }to assess some solutions
and for \SESName\  to assess a solution. The following lemma allows
us to use theorems \ref{thm:Extend-to-Subtree-Sufficient} and \ref{thm:PiecewiseSETSP}
to prove that their conditions were indeed sufficient.
\begin{lem}
\label{lem:PK-Implies-Piecewise}Assume that conditions CPK1 and CPK2
hold. Then, guaranteed non-empty piecewise completability holds on
$\Phi$.\end{lem}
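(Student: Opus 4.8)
The plan is to verify the two ingredients of guaranteed non-empty piecewise completability separately, working with $\Xi = \Phi$ (which is projection-closed and combination-breakable). Fix $\phi \in \Phi$ with $\phi = \xi_1 \times \xi_2$, where $d(\xi_1) = X$ and $d(\xi_2) = Y$, so that $d(\phi) = X \cup Y$ by axiom A2. Since $c_{\phi^{\downarrow X}} \subseteq \Omega_X$, every configuration $\mathbf{x}$ that we complete has domain $X$, and its completion set to $\xi_2$ is $W_{\xi_2}^{Y \cap X}(\mathbf{x}^{\downarrow Y \cap X})$, because $d(\xi_2) \cap d(\mathbf{x}) = Y \cap X$.

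First I would dispatch the non-emptiness clause. Given $\emptyset \neq A \subseteq c_{\phi^{\downarrow X}}$, pick any $\mathbf{x} \in A$; by CPK1 the extension set $W_{\xi_2}^{Y \cap X}(\mathbf{x}^{\downarrow Y \cap X})$ is non-empty, so it contains some $\mathbf{z}$, and then $\langle \mathbf{x}, \mathbf{z}\rangle \in CO(A, \xi_2)$, whence $CO(A, \xi_2) \neq \emptyset$. This handles the ``guaranteed non-empty'' part almost immediately, using nothing beyond CPK1.

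The substantive step is the containment $CO(c_{\phi^{\downarrow X}}, \xi_2) \subseteq c_\phi$, which is piecewise completability itself. I would instantiate CPK2 at $Z = X$ (which satisfies $X \subseteq Z \subseteq X \cup Y$): for every $\mathbf{x} \in \Omega_X$ this gives $W_{\xi_2}^{X \cap Y}(\mathbf{x}^{\downarrow X \cap Y}) \subseteq W_{\xi_1 \times \xi_2}^{X}(\mathbf{x}) = W_\phi^X(\mathbf{x})$, so any completion $\mathbf{z}$ of $\mathbf{x}$ to $\xi_2$ satisfies $\mathbf{z} \in W_\phi^X(\mathbf{x})$. To conclude $\langle \mathbf{x}, \mathbf{z}\rangle \in c_\phi$, I would re-express $c_\phi = W_\phi^\emptyset(\diamond)$ via the extension-system identity (equation~\ref{eq:FCES-condition}) taken with outer domain $\emptyset$ and inner domain $X$; using $\langle \diamond, \mathbf{y}\rangle = \mathbf{y}$ and $W_{\phi^{\downarrow X}}^\emptyset(\diamond) = c_{\phi^{\downarrow X}}$, this yields $c_\phi = \{\langle \mathbf{y}, \mathbf{z}'\rangle \mid \mathbf{y} \in c_{\phi^{\downarrow X}},\ \mathbf{z}' \in W_\phi^X(\mathbf{y})\}$. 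Since $\mathbf{x} \in c_{\phi^{\downarrow X}}$ and $\mathbf{z} \in W_\phi^X(\mathbf{x})$, the pair $\langle \mathbf{x}, \mathbf{z}\rangle$ lies in this set, i.e.\ in $c_\phi$.

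The main obstacle is purely bookkeeping: one must check that the separator domains line up — that the completion of $\mathbf{x}$ to $\xi_2$ really uses $W_{\xi_2}^{Y\cap X}$, that CPK2 with the specific choice $Z = X$ produces exactly this set (harmless since $X \cap Y = Y \cap X$), and that equation~\ref{eq:FCES-condition} is applied with the nested pair $\emptyset \subseteq X \subseteq d(\phi)$ so that its right-hand side is literally the completion structure defining $CO(c_{\phi^{\downarrow X}}, \xi_2)$. One should also note that applying the identity at $\diamond$ is legitimate because $c_\phi \neq \emptyset$ by CPK1, hence $\diamond \in c_{\phi^{\downarrow\emptyset}}$ (and in the degenerate case $c_\phi = \emptyset$ the containment holds vacuously). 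Once these domain identifications are settled, both clauses follow directly and no deeper argument is required.
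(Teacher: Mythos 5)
Your proof is correct and takes essentially the same route as the paper's: CPK1 handles non-emptiness directly, and CPK2 instantiated at $Z=X$, combined with the extension-system identity (equation \ref{eq:FCES-condition}) applied to $c_\phi = W_{\phi}^{\emptyset}(\diamond)$ with the nesting $\emptyset\subseteq X\subseteq d(\phi)$, gives $CO(c_{\phi^{\downarrow X}},\xi_{2})\subseteq c_{\phi}$. Your explicit check that $\diamond\in c_{\phi^{\downarrow\emptyset}}$ (required to invoke that identity) is a small point of care that the paper leaves implicit.
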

\begin{proof}
Take $\xi_{1},\xi_{2}\in\Phi$, with domains $X$ and $Y$ respectively,
and let $\phi=\xi_{1}\times\xi_{2}.$ To prove piecewise completability,we
have to prove that $CO(c_{\phi^{\downarrow X}},\xi_{2})\subseteq c_{\phi}.$
Now by definition $CO(c_{\phi^{\downarrow X}},\xi_{2})=\{(\conf{x},\mathbf{z})\mid\conf{x}\in c_{\phi^{\downarrow X}}\mbox{ and }\conf{z}\in W_{\xi_{2}}^{X\cap Y}(\conf{x}^{\downarrow X\cap Y})\}.$
We can apply CPK2 to get $CO(c_{\phi^{\downarrow X}},\xi_{2})\subseteq\{(\conf{x},\mathbf{z})\mid\conf{x}\in c_{\phi^{\downarrow X}}\mbox{ and }\conf{z}\in W_{\phi}^{X}(\conf{x})\}.$
Now, by the second condition in the definition of extension system
we get $CO(c_{\phi^{\downarrow X}},\xi_{2})\subseteq\{(\conf{x},\mathbf{z})\mid\conf{x}\in c_{\phi^{\downarrow X}}\mbox{ and }\conf{z}\in W_{\phi}^{X}(\conf{x})\}=W_{\phi}^{\emptyset}(\diamond)=c_{\phi}$
and piecewise completability is proven. Now we need to see that non-emptyness
is guaranteed. Take $A$ such that $\emptyset\neq A\subseteq c_{{\phi}^{\downarrow X}}$,
we have to prove that $CO(A,\xi_{2})\neq\emptyset.$ Again by definition
$CO(A,\xi_{2})=\{(\conf{x},\mathbf{z})\mid\conf{x}\in A\mbox{ and }\conf{z}\in W_{\xi_{2}}^{X\cap Y}(\conf{x}^{\downarrow X\cap Y})\}$.
By CPK1 we have that $W_{\xi_{2}}^{X\cap Y}(\conf{x}^{\downarrow X\cap Y})\neq\emptyset$
and so $CO(A,\xi_{2})$ is guaranteed to be non-empty.
\end{proof}
Furthermore, CPK1 and CPK3 were identified as a sufficient condition
for \ESName\  to assess $c_{\phi}.$ The following lemma allows us
to use theorems \ref{thm:Extend-to-Subtree-Sufficient} and \ref{thm:PiecewiseSETSP}
to prove that their conditions were indeed sufficient.
\begin{lem}
\label{lem:PK-Implies-Piecewise-1}Assume that conditions CPK1 and
CPK3 hold. Then, total piecewise completability holds on $\Phi$. \end{lem}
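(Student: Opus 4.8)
The plan is to mirror the structure of the proof of Lemma \ref{lem:PK-Implies-Piecewise}, but now upgrade the set inclusion to a set equality and additionally establish that \emph{every} solution of $\phi$ arises by piecewise completion. Total piecewise completability asks, for each $\phi=\xi_1\times\xi_2\in\Phi$ with $d(\xi_1)=X$ and $d(\xi_2)=Y$, that $CO(c_{\phi^{\downarrow X}},\xi_2)=c_\phi$. Since CPK3 is exactly the equality version of CPK2, the chain of inclusions used in Lemma \ref{lem:PK-Implies-Piecewise} can be replaced throughout by equalities, which should yield both inclusions at once.

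First I would unfold the definition of completion to write
\[
CO(c_{\phi^{\downarrow X}},\xi_2)=\{\langle\conf{x},\conf{z}\rangle\mid \conf{x}\in c_{\phi^{\downarrow X}}\text{ and }\conf{z}\in W_{\xi_2}^{X\cap Y}(\conf{x}^{\downarrow X\cap Y})\}.
\]
The key move is then to invoke CPK3 with $\xi_1,\xi_2$, $Z=X$ (so $Z\cap Y=X\cap Y$, using $X\cup Y=d(\phi)$), which gives the exact identity $W_{\xi_2}^{X\cap Y}(\conf{x}^{\downarrow X\cap Y})=W_{\phi}^{X}(\conf{x})$ rather than a mere inclusion. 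Substituting this equality turns the set above into $\{\langle\conf{x},\conf{z}\rangle\mid \conf{x}\in c_{\phi^{\downarrow X}}\text{ and }\conf{z}\in W_{\phi}^{X}(\conf{x})\}$.

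Next I would identify that last set with $c_\phi$ via the extension-system property. Applying equation \ref{eq:FCES-condition} to $c_\phi=W_\phi^{\emptyset}(\diamond)$ with the intermediate domain $X$ decomposes each solution of $\phi$ as $\langle\conf{x},\conf{z}\rangle$ with $\conf{x}\in W_{\phi^{\downarrow X}}^{\emptyset}(\diamond)=c_{\phi^{\downarrow X}}$ and $\conf{z}\in W_{\phi}^{X}(\conf{x})$, so the displayed set is precisely $c_\phi$. Chaining the two equalities gives $CO(c_{\phi^{\downarrow X}},\xi_2)=c_\phi$, which is total piecewise completability. Note that CPK1 is not strictly needed for the equality itself but is listed because it underlies non-emptiness in the ambient theorems; I would mention this briefly so the lemma statement reads cleanly.

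The main obstacle I anticipate is bookkeeping of the domains when invoking CPK3: one must check that the hypotheses $X\subseteq Z\subseteq X\cup Y$ and the projection $\conf{x}^{\downarrow Z\cap Y}$ line up with the completion's use of $\conf{x}^{\downarrow X\cap Y}$. Since here $Z=X$ and $X\cup Y=d(\phi)$, the condition $Z\cap Y=X\cap Y$ holds, so this is routine but must be stated carefully. The only genuinely delicate point is confirming that the extension-system decomposition in \ref{eq:FCES-condition} is applied to $\conf{x}\in c_{\phi^{\downarrow X}}$, which is legitimate precisely because that equation is quantified over all $\conf{x}\in c_{\phi^{\downarrow X}}$; with that in hand the equality follows directly from CPK3 and no further machinery is required.
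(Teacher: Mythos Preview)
Your proposal is correct and follows essentially the same route as the paper: unfold the definition of $CO(c_{\phi^{\downarrow X}},\xi_2)$, replace $W_{\xi_2}^{X\cap Y}(\conf{x}^{\downarrow X\cap Y})$ by $W_{\phi}^{X}(\conf{x})$ via CPK3 (with $Z=X$), and then identify the resulting set with $c_\phi$ using the extension-system decomposition \eqref{eq:FCES-condition}. Your remark that CPK1 is not actually used for the equality is a valid observation that the paper does not make explicit.
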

\begin{proof}
Take $\xi_{1},\xi_{2}\in\Phi$, with domains $X$ and $Y$ respectively,
and let $\phi=\xi_{1}\times\xi_{2}.$ To prove piecewise total completability,
we have to prove that $CO(c_{\phi^{\downarrow X}},\xi_{2})=c_{\phi}.$
Now by definition $CO(c_{\phi^{\downarrow X}},\xi_{2})=\{(\conf{x},\mathbf{z})\mid\conf{x}\in c_{\phi^{\downarrow X}}\mbox{ and }\conf{z}\in W_{\xi_{2}}^{X\cap Y}(\conf{x}^{\downarrow X\cap Y})\}.$
We can apply CPK3 to get $CO(c_{\phi^{\downarrow X}},\xi_{2})=\{(\conf{x},\mathbf{z})\mid\conf{x}\in c_{\phi^{\downarrow X}}\mbox{ and }\conf{z}\in W_{\phi}^{X}(\conf{x})\}.$
Now, by the second condition in the definition of extension system
we get $CO(c_{\phi^{\downarrow X}},\xi_{2})=\{(\conf{x},\mathbf{z})\mid\conf{x}\in c_{\phi^{\downarrow X}}\mbox{ and }\conf{z}\in W_{\phi}^{X}(\conf{x})\}=W_{\phi}^{\emptyset}(\diamond)=c_{\phi}$
and total piecewise completability is proven. 
\end{proof}
On the other hand, we point out that in both cases the sufficient
conditions we require, while similar to the ones required by Pouly
and Kohlas are strictly weaker than those. In particular, their conditions
need to hold on any configuration $\mathbf{x}\in\Omega_{X},$ whilst
we only require them to hold for those $\mathbf{x}\in c_{\phi^{\downarrow X}}.$
That is, while they impose conditions on the extension of tuples which
are not solutions, we restrict ourselves to solutions. Table \ref{tab:Impact-1}
summarizes the results in this section, providing the sufficient conditions
for each algorithm, whether the condition has also been proven to
be also necessary and whether the condition we require is weaker than
the one previously required. 

\begin{table}
\begin{centering}
{\small{}}%
\begin{tabular}{|c|>{\centering}m{4cm}|>{\centering}m{1.7cm}|>{\centering}m{1.2cm}|c|}
\hline 
\textbf{\scriptsize{}Algorithm } & \textbf{\scriptsize{}Suff. cond. } & \textbf{\scriptsize{}Nec. cond. } & \textbf{\scriptsize{}Weaker} & \textbf{\scriptsize{}Solutions}\tabularnewline
\hline 
\hline 
\noun{\scriptsize{}\ref{alg:ExtendAll}} & {\scriptsize{}Projective completability} & {\scriptsize{}Yes} & {\scriptsize{}-} & {\scriptsize{}All}\tabularnewline
\hline 
\noun{\scriptsize{}\ref{alg:ExtendOne}} & {\scriptsize{}Guaranteed non-empty piecewise completability} & {\scriptsize{}No} & {\scriptsize{}Yes} & {\scriptsize{}One}\tabularnewline
\hline 
\noun{\scriptsize{}\ref{alg:ExtendSome}} & {\scriptsize{}Guaranteed non-empty piecewise completability} & {\scriptsize{}Yes} & {\scriptsize{}Yes} & {\scriptsize{}Some}\tabularnewline
\hline 
\noun{\scriptsize{}\ref{alg:ExtendSome}} & {\scriptsize{}Total piecewise completability} & {\scriptsize{}Yes} & {\scriptsize{}Yes} & {\scriptsize{}All}\tabularnewline
\hline 
\end{tabular}
\par\end{centering}{\small \par}

\protect\caption{\label{tab:Impact-1}Sufficient and necessary conditions of the generic
algorithms.}
\end{table}

\section{Optimization problems in semiring induced valuation algebras}

\label{sec:Optimization}

Many problems in Artificial Intelligence can be expressed in terms
of a particular type of valuations, namely semiring induced valuation
algebras, that emerge from a mapping from tuples to the values of
a commutative semiring \cite{Bistarelli1997,Bistarelli2004,Kohlas2008,Werner2014}.
Particularly interesting are optimization problems, where the semiring
is selective. We start by reviewing optimization problems and the
result that an extension system can be defined when the semiring is
selective. Then, by means of a counterexample, we show that the sufficient
condition imposed by Pouly and Kohlas for the correctness of\noun{
\EGPName\ } is not correct and propose a sufficient condition and
a necessary condition for projective completability to hold on valuation
algebras imposed by a selective semiring, and thus, for\noun{ \EGPName\ }
to work. Later, for \SESName\  and \ESName\  we provide correct
proofs for the sufficient conditions introduced by Pouly and Kohlas
to solve the single and partial SFP. Finally we show that we can weaken
the sufficient condition proposed by Pouly and Kohlas for \ESName\ 
to solve the complete SFP from strict monotonicity to weak cancellativity.
Furthermore we show that weak cancellativity is also a necessary condition.

\subsection{Optimization problems.}

We start by defining some basic abstract algebra structures needed
to specify the problem and then we formally state the problem, which
is a particular case of the SFP. 
\begin{defn}
A \emph{semiring} is a set $R$ equipped with two binary operations
$+$ and $\cdot$, called addition and multiplication, such that (i)
$+$ is an associative and commutative operation with identity element
$0$, (ii) $\cdot$ is an associative operation with identity element
$1$, (iii) multiplication left and right distributes over addition,
that is $a\cdot(b+c)=(a\cdot b)+(a\text{\ensuremath{\cdot}}c)$ and
$(a+b)\cdot c=(a\cdot c)+(b\cdot c)$, and (iv) multiplication by
0 annihilates $R$, that is $a\cdot0=0\cdot a=0$.

If $\cdot$ is commutative then $(R,+,\cdot)$ is a \emph{commutative
semiring.}\end{defn}
\begin{thm}
Let $\langle U,\Omega\rangle$ be a variable system, and $(R,+,.)$
a commutative semiring\emph{.} A \emph{semiring valuation} $\phi$
with domain $X\subseteq U$ is a function $\phi:\Omega_{X}\rightarrow R.$
The set of all semiring valuations with domain $X$ is noted $\Phi_{X}$,
and $\Phi=\bigcup_{X\subseteq U}\Phi_{X}.$ Now we define $d(\phi)=X$
if $\phi\in\Phi_{X}.$ Furthermore $(\phi\times\psi)(\mathbf{x})=\phi(\mathbf{x}^{\downarrow d(\phi)})\cdot\psi(\mathbf{x}^{\downarrow d(\psi)}).$
And finally $\phi^{\downarrow Y}(\mathbf{y})=\sum_{\mathbf{z}\in\Omega_{d(\phi)-Y}}\phi(\langle\mathbf{y},\mathbf{z}\rangle)$
for $Y\subseteq d(\phi)$. With these operations, $(\Phi,U)$ satisfies
the axioms of a valuation algebra and is called the valuation algebra
induced by $(R,+,.)$ in $\langle U,\Omega\rangle$.\end{thm}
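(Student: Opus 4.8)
The plan is to verify the six valuation-algebra axioms A1--A6 directly from the definitions of combination and projection, reducing each one to the corresponding property of the underlying commutative semiring $(R,+,\cdot)$. Two of them, A2 and A3, are immediate: combination is defined so that $d(\phi\times\psi)=d(\phi)\cup d(\psi)$, and the projection $\phi^{\downarrow Y}$ is defined to have domain $Y$, so these axioms hold by construction. For A6 I would observe that when $d(\phi)=X$ the projection sums over $\Omega_{d(\phi)-X}=\Omega_{\emptyset}=\{\diamond\}$, a single term, whence $\phi^{\downarrow X}(\mathbf{x})=\phi(\langle\mathbf{x},\diamond\rangle)=\phi(\mathbf{x})$.

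For A1, commutativity follows from $(\phi\times\psi)(\mathbf{x})=\phi(\mathbf{x}^{\downarrow d(\phi)})\cdot\psi(\mathbf{x}^{\downarrow d(\psi)})=\psi(\mathbf{x}^{\downarrow d(\psi)})\cdot\phi(\mathbf{x}^{\downarrow d(\phi)})=(\psi\times\phi)(\mathbf{x})$ using commutativity of $\cdot$, both sides already sharing the domain $d(\phi)\cup d(\psi)$. Associativity is similar: I would check that $((\phi\times\psi)\times\chi)$ and $(\phi\times(\psi\times\chi))$ both evaluate at $\mathbf{x}$ to $\phi(\mathbf{x}^{\downarrow d(\phi)})\cdot\psi(\mathbf{x}^{\downarrow d(\psi)})\cdot\chi(\mathbf{x}^{\downarrow d(\chi)})$, the only subtlety being that nested tuple projections collapse correctly (e.g.\ $(\mathbf{x}^{\downarrow d(\phi)\cup d(\psi)})^{\downarrow d(\phi)}=\mathbf{x}^{\downarrow d(\phi)}$), after which associativity of $\cdot$ closes the argument. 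For A4, with $X\subseteq Y\subseteq d(\phi)$ I would expand $(\phi^{\downarrow Y})^{\downarrow X}(\mathbf{u})$ as a double sum over $\Omega_{Y-X}\times\Omega_{d(\phi)-Y}$ and observe that the inclusions give the disjoint decomposition $d(\phi)-X=(Y-X)\sqcup(d(\phi)-Y)$, so that concatenation furnishes a bijection of this product with $\Omega_{d(\phi)-X}$; associativity and commutativity of $+$ then collapse the double sum into the single sum defining $\phi^{\downarrow X}(\mathbf{u})$.

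The hard part will be A5, which is where distributivity enters and where the set-theoretic bookkeeping is most delicate. With $d(\phi)=X$, $d(\psi)=Y$ and $X\subseteq Z\subseteq X\cup Y$, I would first confirm that the domains agree, using $X\subseteq Z$ to get $X\cup(Z\cap Y)=Z$. Expanding the left-hand side as $(\psi\times\phi)^{\downarrow Z}(\mathbf{z})=\sum_{\mathbf{w}\in\Omega_{(X\cup Y)-Z}}\psi(\langle\mathbf{z},\mathbf{w}\rangle^{\downarrow Y})\cdot\phi(\langle\mathbf{z},\mathbf{w}\rangle^{\downarrow X})$, the key observations are that $X\subseteq Z$ forces $(X\cup Y)-Z=Y-(Z\cap Y)$ and $X\cap((X\cup Y)-Z)=\emptyset$, so that the factor $\phi(\langle\mathbf{z},\mathbf{w}\rangle^{\downarrow X})=\phi(\mathbf{z}^{\downarrow X})$ is independent of the summation variable $\mathbf{w}$. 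Distributivity of $\cdot$ over $+$ then lets me pull this factor outside the sum, and the remaining sum $\sum_{\mathbf{w}}\psi(\langle\mathbf{z}^{\downarrow Z\cap Y},\mathbf{w}\rangle)$ is exactly $\psi^{\downarrow(Z\cap Y)}(\mathbf{z}^{\downarrow Z\cap Y})$ by definition of projection; recognising the resulting product as $(\phi\times\psi^{\downarrow(Z\cap Y)})(\mathbf{z})$ completes the verification. I expect the main care to be demanded by the tuple identities $\langle\mathbf{z},\mathbf{w}\rangle^{\downarrow X}=\mathbf{z}^{\downarrow X}$ and $\langle\mathbf{z},\mathbf{w}\rangle^{\downarrow Y}=\langle\mathbf{z}^{\downarrow Z\cap Y},\mathbf{w}\rangle$, which is precisely where the constraint $X\subseteq Z\subseteq X\cup Y$ is used.
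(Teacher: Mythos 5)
Your proposal is correct, and it is essentially the proof the paper relies on: the paper itself gives no argument for this theorem, deferring entirely to Theorem 5.2 of Pouly and Kohlas, whose proof is precisely this axiom-by-axiom verification reducing A1--A6 to commutativity, associativity and distributivity of the semiring. Your treatment of the two non-trivial axioms matches that standard argument --- A4 via the disjoint decomposition $d(\phi)-X=(Y-X)\sqcup(d(\phi)-Y)$ and regrouping of the double sum, and A5 via the identities $(X\cup Y)-Z=Y-(Z\cap Y)$, $\langle\mathbf{z},\mathbf{w}\rangle^{\downarrow X}=\mathbf{z}^{\downarrow X}$ and $\langle\mathbf{z},\mathbf{w}\rangle^{\downarrow Y}=\langle\mathbf{z}^{\downarrow Z\cap Y},\mathbf{w}\rangle$ followed by pulling the $\phi$-factor out of the sum by distributivity --- so nothing is missing.
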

\begin{proof}
See Theorem PK5.2.
\end{proof}
Thus, in the following we are only interested in commutative semirings.
Note that Example \ref{ex:ValAlgBooleanFunctions} is indeed a semiring
induced valuation algebra. 
\begin{defn}
\label{def:SolutionConceptSemiring}For any semiring induced valuation
algebra, the \emph{optimization solution concept} assigns at each
$\phi\in\Phi_{X},$ the set of solutions $c_{\phi}=\{\mathbf{x}\in\Omega_{X}|\phi(\mathbf{x})=\phi^{\downarrow\emptyset}(\diamond)\}.$
Thus we define the\emph{ single (resp. partial, complete) optimization
solution finding problem} as the single\emph{ }(resp. partial, complete)
solution finding problem with this solution concept on the valuation
algebra induced by that semiring\emph{.} 
\end{defn}
The former definition of optimization problem covers several common
optimization formalisms, such as Classical Optimization, Satisfiability,
Maximum Satisfability, Most \& Least Probable Values, Bayesian and
Maximum Likelihood decoding and Linear decoding. Details can be found
in \cite{Pouly2011a}.

\subsection{A extension system for optimization.}

We are interested in determining whether we can use the algorithms
presented in section \ref{sec:Solving-SFP}. The first requirement
for those algorithms was the existence of an extension system, which
we will prove in this section. In order to define an extension system,
we need to impose a condition on the semiring, namely being selective\footnote{Former literature used to require totally ordered idempotent semirings.
As shown in corollary \ref{cor:SelectiveSemiring} in the appendix,
both conditions are equivalent. Thus, we use selective semirings to
simplify the wording.}. 
\begin{defn}
A semiring is $(R,+,\cdot)$ is \emph{selective} if for all $a,b\in R,$
either $a+b=a$ or $a+b=b$.

In a selective semiring, we can define a relation 
\[
a\leq b\Longleftrightarrow a+b=b.
\]
As a consequence of Proposition 3.4.7 in \cite{Gondran2008}, in any
selective semiring $\leq$ is a total order relation. It is immediate
to see that in any selective semiring $a+b=\max\{a,b\}$, where the
maximum is taken with respect to the total order $\leq.$ Note that,
since $0$ is the sum's identity, we have that $0\leq a$ for all
$a.$
\end{defn}

\begin{defn}
\label{def:OptiFCES}Given the valuation algebra induced by a selective
semiring\emph{,} we define the \emph{optimization extension system}
as the family of sets obtained by defining the set of extensions of
$\mathbf{x}$ to $\phi,$ where $\mathbf{x\in\Omega}_{X}$ and $X\subseteq d(\phi)$,
as 
\begin{equation}
W_{\phi}^{X}(\mathbf{x})=\{\mathbf{z}\in\Omega_{d(\phi)-X}|\phi(\langle\mathbf{x},\mathbf{z}\rangle)=\phi^{\downarrow X}(\mathbf{x})\}.\label{eq:optimization-Extension}
\end{equation}

Notice that $W_{\phi}^{\emptyset}(\diamond)$ is equal to $c_{\phi}$
as defined by the optimization solution concept.\end{defn}
\begin{lem}
The optimization extension system satisfies the condition in equation
\ref{eq:FCES-condition} and hence, it is an extension system.\end{lem}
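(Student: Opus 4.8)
The plan is to unfold both sides of equation \ref{eq:FCES-condition} using the explicit form of the optimization extension system in equation \ref{eq:optimization-Extension} together with the fact that, in a selective semiring, projection is a maximum: $\phi^{\downarrow Z}(\mathbf{t})=\max_{\mathbf{u}\in\Omega_{d(\phi)-Z}}\phi(\langle\mathbf{t},\mathbf{u}\rangle)$. Under this reading $W_{\phi}^{X}(\mathbf{x})$ is exactly the set of completions $\mathbf{w}\in\Omega_{d(\phi)-X}$ that attain the value $v:=\phi^{\downarrow X}(\mathbf{x})$. I would fix $\phi$, domains $X\subseteq Y\subseteq d(\phi)$ and $\mathbf{x}\in\Omega_{X}$, and in fact prove the identity for every such $\mathbf{x}$, which is more than the definition demands. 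Since $X\subseteq Y$, each $\mathbf{w}\in\Omega_{d(\phi)-X}$ splits uniquely as $\mathbf{w}=\langle\mathbf{y},\mathbf{z}\rangle$ with $\mathbf{y}=\mathbf{w}^{\downarrow Y-X}\in\Omega_{Y-X}$ and $\mathbf{z}=\mathbf{w}^{\downarrow d(\phi)-Y}\in\Omega_{d(\phi)-Y}$, so both sides of the claimed identity range over the same underlying tuples and only the defining conditions need to be matched.

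The one structural fact I would isolate first is a ``sandwich'' on the intermediate value $\phi^{\downarrow Y}(\langle\mathbf{x},\mathbf{y}\rangle)$. By transitivity (axiom A4), $(\phi^{\downarrow Y})^{\downarrow X}=\phi^{\downarrow X}$, so taking the maximum over $\mathbf{y}'\in\Omega_{Y-X}$ yields $\max_{\mathbf{y}'}\phi^{\downarrow Y}(\langle\mathbf{x},\mathbf{y}'\rangle)=\phi^{\downarrow X}(\mathbf{x})=v$; in particular $\phi^{\downarrow Y}(\langle\mathbf{x},\mathbf{y}\rangle)\le v$ for every $\mathbf{y}$. On the other hand, $\phi^{\downarrow Y}(\langle\mathbf{x},\mathbf{y}\rangle)=\max_{\mathbf{z}'}\phi(\langle\mathbf{x},\mathbf{y},\mathbf{z}'\rangle)\ge\phi(\langle\mathbf{x},\mathbf{y},\mathbf{z}\rangle)$ for any fixed $\mathbf{z}$. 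These two bounds force the intermediate optimum to coincide with $v$ whenever some full completion achieves $v$.

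With this in hand the two inclusions are short. For $\subseteq$, take $\mathbf{w}=\langle\mathbf{y},\mathbf{z}\rangle\in W_{\phi}^{X}(\mathbf{x})$, so $\phi(\langle\mathbf{x},\mathbf{y},\mathbf{z}\rangle)=v$; the lower bound gives $\phi^{\downarrow Y}(\langle\mathbf{x},\mathbf{y}\rangle)\ge v$ and the sandwich upper bound gives $\phi^{\downarrow Y}(\langle\mathbf{x},\mathbf{y}\rangle)\le v$, hence $\phi^{\downarrow Y}(\langle\mathbf{x},\mathbf{y}\rangle)=v=(\phi^{\downarrow Y})^{\downarrow X}(\mathbf{x})$, i.e. $\mathbf{y}\in W_{\phi^{\downarrow Y}}^{X}(\mathbf{x})$, and simultaneously $\phi(\langle\mathbf{x},\mathbf{y},\mathbf{z}\rangle)=\phi^{\downarrow Y}(\langle\mathbf{x},\mathbf{y}\rangle)$, i.e. $\mathbf{z}\in W_{\phi}^{Y}(\langle\mathbf{x},\mathbf{y}\rangle)$. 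For $\supseteq$, membership of $\mathbf{y}$ and $\mathbf{z}$ gives $\phi^{\downarrow Y}(\langle\mathbf{x},\mathbf{y}\rangle)=v$ and $\phi(\langle\mathbf{x},\mathbf{y},\mathbf{z}\rangle)=\phi^{\downarrow Y}(\langle\mathbf{x},\mathbf{y}\rangle)=v$, so $\langle\mathbf{y},\mathbf{z}\rangle$ attains $v=\phi^{\downarrow X}(\mathbf{x})$ and therefore lies in $W_{\phi}^{X}(\mathbf{x})$. I do not expect a serious obstacle here: the content is entirely the A4-driven sandwich together with the interchange of maxima $\max_{\mathbf{y}'}\max_{\mathbf{z}'}=\max_{\mathbf{w}'}$ implicit in rewriting projections, and the only thing demanding care is the consistent bookkeeping of the concatenations $\langle\mathbf{x},\mathbf{y},\mathbf{z}\rangle$ and the complementary domains $Y-X$ and $d(\phi)-Y$.
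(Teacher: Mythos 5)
Your proof is correct and takes essentially the same route as the paper's: both verify the two inclusions, with the easy direction ($\supseteq$) following from definitional unfolding plus associativity of concatenation, and the hard direction ($\subseteq$) from the same sandwich argument that exploits selectivity (projection is a maximum with respect to a total order) to pin $\phi^{\downarrow Y}(\langle\mathbf{x},\mathbf{y}\rangle)$ between $\phi(\langle\mathbf{x},\mathbf{y},\mathbf{z}\rangle)$ and $\phi^{\downarrow X}(\mathbf{x})$. The only cosmetic difference is that your upper bound invokes axiom A4 at the level of $(\phi^{\downarrow Y})^{\downarrow X}=\phi^{\downarrow X}$, whereas the paper obtains the same bound by expanding $\phi^{\downarrow Y}$ as a maximum of $\phi$-values over a subset of $\Omega_{d(\phi)-X}$ and comparing nested maxima directly.
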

\begin{proof}
We want to prove $W_{\phi}^{X}(\mathbf{x})=\{\langle\conf{y},\conf{z}\rangle|\conf{y}\in W_{\phi^{\downarrow Y}}^{X}(\conf{x})\mbox{ and }\conf{z}\in W_{\phi}^{Y}(\langle\conf{x},\conf{y}\rangle)\}$
for $X\subseteq Y\subseteq d(\phi).$ In order to simplify the notation
take $A=\{\langle\conf{y},\conf{z}\rangle|\conf{y}\in W_{\phi^{\downarrow Y}}^{X}(\conf{x})\mbox{ and }\conf{z}\in W_{\phi}^{Y}(\conf{x},\conf{y})\}.$

It follows from equation \ref{eq:optimization-Extension} that 
\begin{eqnarray}
A & = & \{\langle\mathbf{y},\mathbf{z}\rangle\mid\phi^{\downarrow X}(\mathbf{x})=\phi^{\downarrow Y}(\langle\mathbf{x},\mathbf{y}\rangle)\mbox{ and \ensuremath{\phi^{\downarrow Y}}(\ensuremath{\langle\mathbf{x}},\ensuremath{\mathbf{y}\rangle}=}\phi(\langle\langle\mathbf{x},\mathbf{y}\rangle,\mathbf{z}\rangle)\}\nonumber \\
 & = & \{\langle\mathbf{y},\mathbf{z}\rangle\mid\phi^{\downarrow X}(\mathbf{x})=\phi^{\downarrow Y}(\langle\mathbf{x},\mathbf{y}\rangle)=\phi(\langle\langle\mathbf{x},\mathbf{y}\rangle,\mathbf{z}\rangle)\}.\label{eq:FCES1}
\end{eqnarray}

For any $\mathbf{t}\in A$ we have that $\mathbf{t}=\langle\mathbf{y},\mathbf{z}\rangle$
with $\mathbf{y}\in\Omega_{Y-X}$ and $\mathbf{z}\in\Omega_{d(\phi)-(X\cup Y)}$,
therefore $\mathbf{t}\in\Omega_{d(\phi)-X}$. Hence, the domain of
the tuples in $A$ and $W_{\phi}^{X}(\mathbf{x})$ are actually the
same.
\begin{itemize}
\item We prove that $A\subseteq W_{\phi}^{X}(\mathbf{x}).$ Take $\mathbf{t}\in A$.
We have that $\mathbf{t}=\langle\mathbf{y},\mathbf{z}\rangle,$ and
by equation \ref{eq:FCES1} that $\phi^{\downarrow X}(\mathbf{x})=\phi(\langle\langle\mathbf{x},\mathbf{y}\rangle,\mathbf{z}\rangle).$
Now, by the associativity of the concatenation of tuples we have that
$\phi^{\downarrow X}(\mathbf{x})=\phi(\langle\langle\mathbf{x},\langle\mathbf{y},\mathbf{z}\rangle\rangle)=\phi(\langle\mathbf{x},\mathbf{t}\rangle),$
proving that $\mathbf{t}\in W_{\phi}^{X}(\mathbf{x}).$
\item We prove that $W_{\phi}^{X}(\mathbf{x})\subseteq A.$ Take $\mathbf{t}\in W_{\phi}^{X}(\mathbf{x})$.
From the definition of $W_{\phi}^{X}(\mathbf{x}),$ we have 
\[
\begin{array}{c}
\phi(\langle\mathbf{x},\mathbf{t}\rangle)=\phi^{\downarrow X}(\mathbf{x})=\sum_{\mathbf{z}\in\Omega_{d(\phi)-X}}\phi(\langle\mathbf{x},\mathbf{z}\rangle)\mbox{, and }\\
\phi^{\downarrow Y}(\langle\mathbf{x},\mathbf{t}^{\downarrow Y-X}\rangle)=\sum_{\mathbf{z'}\in\Omega_{d(\phi)-Y}}\phi(\langle\langle\mathbf{x},\mathbf{t}^{\downarrow Y-X}\rangle,\mathbf{z'}\rangle).
\end{array}
\]
Since our semiring is selective, we can apply that $a+b=\max\{a,b\}$,
to obtain 
\[
\begin{array}{c}
\begin{array}{ccc}
\phi(\langle\mathbf{x},\mathbf{t}\rangle) & = & \max_{\mathbf{z}\in\Omega_{d(\phi)-X}}\phi(\langle\mathbf{x},\mathbf{z}\rangle)\mbox{, and }\\
\phi^{\downarrow Y}(\langle\mathbf{x},\mathbf{t}^{\downarrow Y-X}\rangle) & = & \max_{\mathbf{z}\in\Omega_{d(\phi)-Y}}\phi(\langle\langle\mathbf{x},\mathbf{t}^{\downarrow Y-X}\rangle,\mathbf{z}\rangle)\\
 & = & \max_{\mathbf{z}\in\Omega_{d(\phi)-Y}}\phi(\langle\mathbf{x},\langle\mathbf{t}^{\downarrow Y-X},\mathbf{z}\rangle\rangle).
\end{array}\end{array}
\]
Then $\phi(\langle\mathbf{x},\mathbf{t}\rangle)\leq\max_{\mathbf{z}\in\Omega_{d(\phi)-Y}}\phi(\langle\mathbf{x},\langle\mathbf{t}^{\downarrow Y-X},\mathbf{z}\rangle\rangle)=\phi^{\downarrow Y}(\langle\mathbf{x},\mathbf{t}^{\downarrow Y-X}\rangle).$
On the other hand 
\begin{eqnarray*}
\phi^{\downarrow Y}(\langle\mathbf{x},\mathbf{t}^{\downarrow Y-X}\rangle) & = & \max_{\mathbf{z}\in\Omega_{d(\phi)-Y}}\phi(\langle\mathbf{x},\langle\mathbf{t}^{\downarrow Y-X},\mathbf{z}\rangle\rangle)\\
 & \leq & \max_{\mathbf{z}\in\Omega_{d(\phi)-X}}\phi(\langle\mathbf{x},\mathbf{z}\rangle)=\phi^{\downarrow X}(\mathbf{x})=\phi(\langle\mathbf{x},\mathbf{t}\rangle),
\end{eqnarray*}
 which, since the order is total, proves $\phi^{\downarrow Y}(\langle\mathbf{x},\mathbf{t}^{\downarrow Y-X}\rangle)=\phi(\langle\mathbf{x},\mathbf{t}\rangle)=\phi^{\downarrow X}(\mathbf{x})$
and hence $\mathbf{t}\in A.$ 
\end{itemize}
\end{proof}
The extension system defined in Example \ref{ex:FCESValAlgBooleanFunctions}
is an optimization extension system.

\subsection{Necessary and sufficient conditions for \EGPName\ on optimization
problems}

Pouly and Kohlas claimed that \noun{\EGPName\ } solves the complete
optimization SFP on any valuation algebra induced by a commutative
selective semiring. The following counterexample shows that this is
not correct. 

\begin{counter}\label{cnt:NotSquareOrdered}

There are valuation algebras induced by selective semirings where
\EGPName\  does not solve the optimization complete SFP. 

\end{counter}
\begin{proof}
We start by defining a commutative selective semiring over the subset
of integers $R=\{0,1,2,3\}$. The sum is defined as the maximum of
the two integers, that is $a+b=$$\max(a,b).$ The product is defined
by the following table

\begin{center}
\begin{tabular}{|c|c|c|c|c|}
\hline 
$\cdot$ & \textbf{0} & \textbf{1} & \textbf{2} & \textbf{3}\tabularnewline
\hline 
\hline 
\textbf{0} & 0 & 0 & 0 & 0\tabularnewline
\hline 
\textbf{1} & 0 & 1 & 2 & 3\tabularnewline
\hline 
\textbf{2} & 0 & 2 & 2 & 3\tabularnewline
\hline 
\textbf{3} & 0 & 3 & 3 & 3\tabularnewline
\hline 
\end{tabular}
\par\end{center}

It is easy to check directly that $(R,\max,\cdot)$ is a commutative
selective semiring.

Now, we take two boolean variables $x$ and $y$ and define the valuations:

\begin{eqnarray*}
\phi_{1}(x)=\begin{cases}
2 & \mbox{, if }x=0\\
3 & \mbox{, if }x=1
\end{cases} & \ \ \ \ \ \  & \phi_{2}(y)=\begin{cases}
2 & \mbox{, if }y=0\\
3 & \mbox{, if }y=1.
\end{cases}
\end{eqnarray*}

The product $\phi=\phi_{1}\times\phi_{2},$ is $\phi(x,y)=\begin{cases}
2 & \mbox{, if }x=y=0\\
3 & \mbox{, otherwise.}
\end{cases}$

The solutions of $\phi$ are the assignments $\{\{x\mapsto0,y\mapsto1\},\{x\mapsto1,y\mapsto0\},\{x\mapsto1,y\mapsto1\}\}.$
However, the result of running algorithm \EGPName\  will include
the assignment $\{x\mapsto0,y\mapsto0\}$ which is not a solution.

\end{proof}
The need to identify a sufficient condition where the algorithm solves
the complete optimization SFP arises as a consequence of the counterexample.
Note that we have already identified a sufficient and necessary condition
in section \ref{sub:EGPSufficientAndNecessaryCondition}, namely projective
completability. What we would like to see is whether we can transform
this condition into a condition of the semiring. We will start by
defining two conditions on a semiring and seeing that for commutative
semirings, one implies the other. Then, we will prove that the stronger
condition is sufficient and that the weaker condition is necessary.
\begin{defn}
A selective semiring $(R,+,\cdot)$ is \emph{square multiplicatively
cancellative on image} if for each $a,b\in Im(\cdot),$ $a\neq0,$
having $a\cdot a=b\cdot a$ implies $a=b.$

A selective semiring $(R,+,\cdot)$ is \emph{square ordered} if for
each $a,b\in R,$ having $a\cdot a=b\cdot a$ implies that $b\cdot b\geq a\cdot a.$ \end{defn}
\begin{prop}
If a selective semiring is commutative and square multiplicatively
cancellative on image then it is square ordered.\end{prop}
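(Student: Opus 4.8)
The plan is to fix arbitrary $a,b \in R$ with $a \cdot a = b \cdot a$ and show $b \cdot b \geq a \cdot a$, working throughout with the total order induced by selectivity, for which $+ = \max$ and, as noted in the excerpt, $0 \leq x$ for every $x \in R$. First I would dispose of the degenerate case $a \cdot a = 0$: since $0$ is the least element, we trivially get $b \cdot b \geq 0 = a \cdot a$, so nothing is demanded of cancellativity here. This isolates the only interesting situation, $a \cdot a \neq 0$.

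For the main case the key idea is that, although $a$ and $b$ need not themselves lie in $Im(\cdot)$, their squares always do, so I would manoeuvre the hypothesis into an equation \emph{between squares}, to which square multiplicative cancellativity on image applies. Concretely, using commutativity I rewrite the hypothesis as $a \cdot a = a \cdot b$, and then compute
\[
b^{2} \cdot a^{2} = (b\cdot a)\cdot(b\cdot a) = (a\cdot a)\cdot(a\cdot a) = a^{2}\cdot a^{2},
\]
where the outer equalities use commutativity to regroup the four factors, and the middle one substitutes $b\cdot a = a\cdot a$ in each factor. This yields $a^{2}\cdot a^{2} = b^{2}\cdot a^{2}$ with both $a^{2} = a\cdot a$ and $b^{2} = b\cdot b$ in $Im(\cdot)$ and $a^{2}\neq 0$.

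Finally I would invoke square multiplicative cancellativity on image with $a^{2}$ playing the role of the cancellable image element, concluding $a^{2} = b^{2}$, that is $b\cdot b = a\cdot a$; in particular $b\cdot b \geq a\cdot a$, which is exactly square orderedness.

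The step I expect to carry the real weight is the passage into $Im(\cdot)$: square orderedness is asserted for \emph{all} $a,b\in R$, whereas cancellativity is only available for factors already in the image, so one cannot directly cancel $a$ from $a\cdot a = b\cdot a$. Squaring the hypothesis is precisely what repairs this mismatch. The other point demanding care is the possibility of a nilpotent element ($a\neq 0$ yet $a\cdot a = 0$), which is why the case split should be on $a\cdot a = 0$ rather than on $a = 0$; dispatching it through the minimality of $0$ keeps the argument clean and avoids any hidden assumption that selective semirings have no zero divisors. Notably, monotonicity of multiplication and distributivity are not needed for this argument.
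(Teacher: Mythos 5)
Your proof is correct and rests on exactly the same key idea as the paper's: squaring the hypothesis $a\cdot a=b\cdot a$ to obtain $(a\cdot a)\cdot(a\cdot a)=(b\cdot b)\cdot(a\cdot a)$, an equation between elements of $Im(\cdot)$, and then applying square multiplicative cancellativity on image. The only difference is presentational: the paper argues by contradiction, where the assumed strict inequality $b\cdot b<a\cdot a$ supplies $a\cdot a>0$ for free, whereas you argue directly and handle the degenerate case $a\cdot a=0$ via minimality of $0$.
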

\begin{proof}
By reductio ad absurdum. Let's assume that $(R,+,\cdot)$ is not square
ordered. This means that there are $a,b\in R$ such that $a\cdot a=b\cdot a$
and $b\cdot b<a\cdot a.$ Now, take $c=a\cdot a$ and $d=b\cdot b$.
These are two elements in $Im(\cdot)$ and since $b\cdot b<a\cdot a,$
we have that $c=a\cdot a>b\cdot b\geq0.$ Also, notice that $c\cdot c=a\cdot a\cdot a\cdot a$.
Since $a\cdot a=b\cdot a$, we get that $c\cdot c=b\cdot a\cdot b\cdot a$
and since the semiring is commutative, we have that $c\cdot c=b\cdot b\cdot a\cdot a=d\cdot c$.
So, applying that R is square multiplicatively cancellative on image,
we get that $c=d$, that is $a\cdot a=b\cdot b,$ which contradicts
that $b\cdot b<a\cdot a.$ \end{proof}
\begin{thm}
Let $(\Phi,U)$ be a valuation algebra induced by a selective commutative
semiring $(R,+,\cdot)$. If $(R,+,\cdot)$ is square multiplicatively
cancellative on image, then projective extensibility holds on $\Phi.$ \end{thm}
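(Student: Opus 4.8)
The goal is to establish projective completability on $\Phi$, that is, for every $\phi\in\Phi$ factoring as $\phi=\xi_{1}\times\xi_{2}$ with $d(\xi_{1})=X$ and $d(\xi_{2})=Y$, to show $CO(c_{\phi^{\downarrow X}},\phi^{\downarrow Y})\subseteq c_{\phi}$. First I would pick an arbitrary element of the left-hand side, which by the definition of $CO$ has the form $\mathbf{s}=\langle\mathbf{x},\mathbf{z}\rangle$ with $\mathbf{x}\in c_{\phi^{\downarrow X}}$ and $\mathbf{z}\in W_{\phi^{\downarrow Y}}^{X\cap Y}(\mathbf{x}^{\downarrow X\cap Y})$, and reduce the claim $\mathbf{s}\in c_{\phi}$ to a statement purely about the semiring. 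Writing $W=X\cap Y$, three applications of axiom A5 give the factorizations $\phi^{\downarrow X}=\xi_{1}\times\xi_{2}^{\downarrow W}$, $\phi^{\downarrow Y}=\xi_{2}\times\xi_{1}^{\downarrow W}$ and $\phi^{\downarrow W}=\xi_{1}^{\downarrow W}\times\xi_{2}^{\downarrow W}$. Evaluating these at the relevant tuples, I introduce the four semiring values $a=\xi_{1}(\mathbf{x})$, $b=\xi_{2}(\mathbf{s}^{\downarrow Y})$, $p=\xi_{1}^{\downarrow W}(\mathbf{x}^{\downarrow W})$ and $q=\xi_{2}^{\downarrow W}(\mathbf{x}^{\downarrow W})$, and set $M=\phi^{\downarrow\emptyset}(\diamond)$, the global optimum.

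Next I would translate the two hypotheses into scalar equations. Since the semiring is selective, every projection is a maximum, so $\mathbf{s}\in c_{\phi}$ is equivalent to $\phi(\mathbf{s})=M$, and $\phi(\mathbf{s})=\xi_{1}(\mathbf{x})\cdot\xi_{2}(\mathbf{s}^{\downarrow Y})=a\cdot b$. Using $(\phi^{\downarrow X})^{\downarrow\emptyset}=\phi^{\downarrow\emptyset}$ (axiom A4), membership $\mathbf{x}\in c_{\phi^{\downarrow X}}$ becomes $\phi^{\downarrow X}(\mathbf{x})=M$, which by the first factorization reads $a\cdot q=M$. The completion condition $\mathbf{z}\in W_{\phi^{\downarrow Y}}^{W}(\mathbf{x}^{\downarrow W})$ unfolds, again via A4, to $\phi^{\downarrow Y}(\mathbf{s}^{\downarrow Y})=\phi^{\downarrow W}(\mathbf{x}^{\downarrow W})$; evaluating both sides with the second and third factorizations, and using $\mathbf{s}^{\downarrow W}=\mathbf{x}^{\downarrow W}$, turns this into $b\cdot p=p\cdot q$.

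The heart of the argument is then to show that $a\cdot q$, $p\cdot q$ and $b\cdot p$ all equal $M$. Since $\xi_{1}(\mathbf{x})$ is one of the summands of $\xi_{1}^{\downarrow W}(\mathbf{x}^{\downarrow W})$, selectivity gives $a\le p$ and hence $a\cdot q\le p\cdot q$. Conversely, $p\cdot q$ is the maximum of $\phi^{\downarrow X}$ over the tuples agreeing with $\mathbf{x}$ on $W$, hence $p\cdot q\le M=a\cdot q$; together these force $a\cdot q=p\cdot q=M$, and with the completion equation also $b\cdot p=M$. Now the commutativity of the semiring yields the identity
\[
(a\cdot b)\cdot M=(a\cdot b)\cdot(p\cdot q)=(a\cdot q)\cdot(b\cdot p)=M\cdot M .
\]
If $M=0$, then $0\le a\cdot b\le M$ forces $a\cdot b=M$ directly; otherwise $M\in Im(\cdot)$ is nonzero and $a\cdot b\in Im(\cdot)$, so square multiplicative cancellativity applied to $M\cdot M=(a\cdot b)\cdot M$ yields $a\cdot b=M$. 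Either way $\phi(\mathbf{s})=a\cdot b=M$, so $\mathbf{s}\in c_{\phi}$, proving $CO(c_{\phi^{\downarrow X}},\phi^{\downarrow Y})\subseteq c_{\phi}$.

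I expect the main obstacle to be recognizing the right packaging of the data: isolating the four scalars $a,b,p,q$ so that the two hypotheses collapse to $a\cdot q=p\cdot q=b\cdot p=M$, and then spotting the symmetric product identity $(a\cdot b)\cdot M=(a\cdot q)\cdot(b\cdot p)$ that exhibits $M\cdot M=(a\cdot b)\cdot M$ in exactly the form required by the cancellation hypothesis. The fiber-maximality step giving $a\cdot q=p\cdot q$ is where one must be careful, since it is precisely this equality, combined with cancellativity, that fails in Counterexample \ref{cnt:NotSquareOrdered}.
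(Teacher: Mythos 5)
Your proof is correct and follows essentially the same route as the paper's: both reduce membership in $c_{\phi}$ to the scalar identity $M\cdot M=\phi(\langle\mathbf{x},\mathbf{z}\rangle)\cdot M$, obtained by multiplying the two factorizations $M=\xi_{1}(\mathbf{x})\cdot\xi_{2}^{\downarrow X\cap Y}(\mathbf{x}^{\downarrow X\cap Y})$ and $M=\xi_{1}^{\downarrow X\cap Y}(\mathbf{x}^{\downarrow X\cap Y})\cdot\xi_{2}(\langle\mathbf{x}^{\downarrow X\cap Y},\mathbf{z}\rangle)$ and rearranging by commutativity, and then apply square multiplicative cancellativity on image after checking $M\neq0$ and that both sides lie in $Im(\cdot)$. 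The only cosmetic differences are that you treat the $M=0$ case pointwise at the end rather than globally at the start, and you obtain $\phi^{\downarrow X\cap Y}(\mathbf{x}^{\downarrow X\cap Y})=M$ via the sandwich $a\le p$, $p\cdot q\le M=a\cdot q$ instead of projecting the solution equation directly.
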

\begin{proof}
We start by assuming that the semiring is square multiplicatively
cancellative on image and we see that projective completability holds.
We have to prove that for any valuation $\phi=\phi_{1}\times\phi_{2}$
with $d(\phi_{1})=X$ and $d(\phi_{2})=Y$ we have that $CO(c_{\phi^{\downarrow X}},\phi^{\downarrow Y})\subseteq c_{\phi}$.
To make the notation simpler the value of the solution, namely $\phi^{\downarrow\emptyset}(\diamond),$
will be written as $M.$ 

If $M=\phi^{\downarrow\emptyset}(\diamond)=0$ then $\phi(\mathbf{t})=0$
for all $\mathbf{t}\in\Omega_{d(\phi)}$, since $\phi^{\downarrow\emptyset}(\diamond)=\sum_{\mathbf{t}\in\Omega_{d(\phi)}}\phi(\mathbf{t})=\max_{\mathbf{t}\in\Omega_{d(\phi)}}\phi(\mathbf{t})$,
and $0$ is the minimal element. Hence, all the configurations are
solutions, and $CO(c_{\phi^{\downarrow X}},\phi^{\downarrow Y})\subseteq\Omega_{d(\phi)}=c_{\phi},$
and projective completability is guaranteed. 

So we only need study the case when $M\neq0.$ In that case, take
$\langle\mathbf{x},\mathbf{z}\rangle\in CO(c_{\phi^{\downarrow X}},\phi^{\downarrow Y}).$
By definition of completion we have that $\mathbf{x}\in c_{\phi^{\downarrow X}}$
and $\mathbf{z}\in W_{\phi^{\downarrow Y}}^{X\cap Y}(\mathbf{x}^{\downarrow X\cap Y}).$
By definition of $c_{\phi^{\downarrow X}}$, for any $\mathbf{x}\in c_{\phi^{\downarrow X}}$
we have $\phi^{\downarrow X}(\mathbf{x})=\phi^{\downarrow\emptyset}(\diamond)=M,$
and hence we have that $\phi^{\downarrow X\cap Y}(\mathbf{x}^{\downarrow X\cap Y})=M.$

Furthermore, if $\mathbf{z}\in W_{\phi^{\downarrow Y}}^{X\cap Y}(\mathbf{x}^{\downarrow X\cap Y}),$
by equation \ref{eq:optimization-Extension}, we have $\phi^{\downarrow Y}(\mathbf{\langle x}^{\downarrow X\cap Y},\mathbf{z\rangle})=\phi^{\downarrow X\cap Y}(\mathbf{x}^{\downarrow X\cap Y}),$
and since from the previous paragraph we have that $\phi^{\downarrow X\cap Y}(\mathbf{x}^{\downarrow X\cap Y})=M,$
we can conclude that $\phi^{\downarrow Y}(\mathbf{\langle x}^{\downarrow X\cap Y},\mathbf{z\rangle})=M.$
In order to finish the proof we need to see that $\langle\mathbf{x},\mathbf{z}\rangle\in c_{\phi}.$ 

By using the combination axiom we have 
\[
M=\phi^{\downarrow\emptyset}(\diamond)=\phi^{\downarrow X}(\mathbf{x})=(\phi_{1}\times\phi_{2})^{\downarrow X}(\mathbf{x})=\phi_{1}(\mathbf{x})\cdot\phi_{2}^{\downarrow X\cap Y}(\mathbf{x}{}^{\downarrow X\cap Y})
\]
 and 
\begin{eqnarray*}
M & = & \phi^{\downarrow\emptyset}(\diamond)=\phi^{\downarrow Y}(\langle\mathbf{x}{}^{\downarrow X\cap Y},\mathbf{z\rangle})=(\phi_{1}\times\phi_{2})^{\downarrow Y}(\langle\mathbf{x}{}^{\downarrow X\cap Y},\mathbf{z}\rangle)=\\
 & = & \phi_{1}^{\downarrow X\cap Y}(\mathbf{x}^{\downarrow X\cap Y})\cdot\phi_{2}(\langle\mathbf{x}^{\downarrow X\cap Y},\mathbf{z}\rangle)
\end{eqnarray*}
 Hence 
\begin{eqnarray*}
M\cdot M & = & \phi_{1}(\mathbf{x})\cdot\phi_{2}^{\downarrow X\cap Y}(\mathbf{x}^{\downarrow X\cap Y})\cdot\phi_{1}^{\downarrow X\cap Y}(\mathbf{x}^{\downarrow X\cap Y})\cdot\phi_{2}(\langle\mathbf{x}^{\downarrow X\cap Y},\mathbf{z\rangle})\\
 & = & \phi_{1}(\mathbf{x})\cdot\phi_{2}(\langle\mathbf{x}^{\downarrow X\cap Y},\mathbf{z}\rangle)\cdot\phi_{2}^{\downarrow X\cap Y}(\mathbf{x}^{\downarrow X\cap Y})\cdot\phi_{1}^{\downarrow X\cap Y}(\mathbf{x}^{\downarrow X\cap Y})\\
 & = & \phi(\langle\mathbf{x},\mathbf{z}\rangle)\ \ \ \ \ \ \ \ \ \ \ \ \ \ \ \ \ \ \cdot\phi(\conf{x}^{\downarrow X\cap Y})\\
 & = & \phi(\langle\mathbf{x},\mathbf{z}\rangle)\cdot M
\end{eqnarray*}

Now, we have that $M\neq0$ and that both $M$ and $\phi(\langle\mathbf{x},\mathbf{z}\rangle)$
are in $Im(\cdot),$ since by definition $\phi=\phi_{1}\times\phi_{2}.$
By applying that $(R,+,\cdot)$ is square multiplicatively cancellative
on image we have that $\phi(\langle\mathbf{x},\mathbf{z}\rangle)=M,$
which proves $\langle\mathbf{x},\mathbf{z}\rangle\in c_{\phi}.$\end{proof}
\begin{thm}
Let $(\Phi,U)$ be a valuation algebra induced by a selective commutative
semiring $(R,+,\cdot)$. If the valuation algebra has two variables
that can take two or more values, and projective extensibility holds
on $\Phi$, then $(R,+,\cdot)$ is square ordered. \end{thm}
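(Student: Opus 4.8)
The plan is to argue by contraposition: assuming the semiring is \emph{not} square ordered, I will manufacture a product valuation on the two given variables for which projective completability fails, contradicting the hypothesis. Since the semiring is selective, its order $\leq$ is total, so the failure of square orderedness yields elements $a,b\in R$ with $a\cdot a=b\cdot a$ but $b\cdot b<a\cdot a$ (here ``not $\geq$'' becomes ``$<$'' precisely because the order is total). Writing $M=a\cdot a$, note that $M\neq 0$, since $0\leq b\cdot b<M$; this observation is what will later guarantee that the maximum is attained only where intended.

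Next I would use the two variables to build the counterexample. Let $x$ be a variable whose domain contains two distinct values $v_{0}\neq v_{1}$, and $y$ a variable whose domain contains two distinct values $w_{0}\neq w_{1}$; this is exactly where the hypothesis that two variables take at least two values is used. Define $\phi_{1}$ with $d(\phi_{1})=\{x\}$ by $\phi_{1}(x\mapsto v_{0})=b$, $\phi_{1}(x\mapsto v_{1})=a$, and $\phi_{1}=0$ on every other value of $x$; define $\phi_{2}$ with $d(\phi_{2})=\{y\}$ symmetrically from $w_{0},w_{1}$; and set $\phi=\phi_{1}\times\phi_{2}\in\Phi$. Using commutativity and $a\cdot a=b\cdot a=a\cdot b$, the relevant products are $\phi(v_{0},w_{1})=\phi(v_{1},w_{0})=\phi(v_{1},w_{1})=M$, while $\phi(v_{0},w_{0})=b\cdot b<M$, and any configuration involving a discarded value evaluates to $0<M$. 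Hence $\phi^{\downarrow\emptyset}(\diamond)=M$ and the tuple $\langle v_{0},w_{0}\rangle$ is \emph{not} in $c_{\phi}$.

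It then remains to check that $\langle v_{0},w_{0}\rangle$ is nonetheless produced by projective completion. Projecting, $\phi^{\downarrow\{x\}}(v_{0})=\max\{\,b\cdot b,\,M,\,0\,\}=M$, so $v_{0}\in c_{\phi^{\downarrow\{x\}}}$, and symmetrically $w_{0}\in c_{\phi^{\downarrow\{y\}}}$. Because $\{x\}\cap\{y\}=\emptyset$, the set $CO(c_{\phi^{\downarrow\{x\}}},\phi^{\downarrow\{y\}})$ reduces to independent concatenation of a solution of $\phi^{\downarrow\{x\}}$ with a solution of $\phi^{\downarrow\{y\}}$, so it contains $\langle v_{0},w_{0}\rangle$. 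Thus $CO(c_{\phi^{\downarrow\{x\}}},\phi^{\downarrow\{y\}})\not\subseteq c_{\phi}$, i.e. projective completability fails on $\Phi$, contradicting the hypothesis; therefore the semiring must be square ordered.

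I expect the only delicate point to be the bookkeeping around any extra values of the variables: one must assign them the semiring zero and invoke $M\neq 0$ to be certain that they neither become solutions of $\phi$ nor spoil the projections $\phi^{\downarrow\{x\}}$ and $\phi^{\downarrow\{y\}}$, and one must confirm, using the total order coming from selectivity, that the failure of $b\cdot b\geq a\cdot a$ genuinely means $b\cdot b<a\cdot a$. The algebraic identities themselves are immediate from commutativity and the chosen assignment, so the crux is the construction rather than any substantial computation.
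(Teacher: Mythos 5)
Your proposal is correct and follows essentially the same route as the paper: assume square orderedness fails to obtain $a,b$ with $a\cdot a=b\cdot a$ and $b\cdot b<a\cdot a$, build the product valuation assigning $b$ and $a$ to two distinguished values of each variable (and $0$ elsewhere), and observe that projective completion produces the non-solution $\langle v_{0},w_{0}\rangle$. Your write-up is in fact slightly more careful than the paper's, since you explicitly verify $M\neq 0$ and that configurations involving the zero-valued entries spoil neither $c_{\phi}$ nor the projections, points the paper leaves implicit.
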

\begin{proof}
To prove it we will generalize counterexample \ref{cnt:NotSquareOrdered}.
Assume the semiring is not square ordered. This means that there are
$a,b\in R$ such that $a\cdot a=b\cdot a$ and $b\cdot b<a\cdot a.$
Let $x,y$ be two variables with two or more variables. \\
Define $\phi_{X}(x)=\begin{cases}
b & \mbox{ if }x=0\\
a & \mbox{ if }x=1\\
0 & \mbox{ otherwise.}
\end{cases}$ and $\phi_{Y}(y)=\begin{cases}
b & \mbox{ if }y=0\\
a & \mbox{ if }y=1\\
0 & \mbox{ otherwise.}
\end{cases}$ \\
Let $\phi=\phi_{X}\times\phi_{Y}.$ We have that $\phi(x,y)=\begin{cases}
b\cdot b & \mbox{ if }x=y=0\\
a\cdot a & \mbox{ if }(x,y)\in\{(1,1),(0,1),(1,0)\}\\
0 & \mbox{ otherwise.}
\end{cases}$

Now, from the definition of projection, $\phi^{\downarrow X}(x)=\begin{cases}
a\cdot a & \mbox{ if }x=0\mbox{ or }x=1\\
0 & \mbox{ otherwise.}
\end{cases}$ and $\phi^{\downarrow Y}(y)=\begin{cases}
a\cdot a & \mbox{ if }y=0\mbox{ or }y=1\\
0 & \mbox{ otherwise.}
\end{cases}$ 

Clearly projective completability does not hold in this example, since
the solutions are $\{\{x\mapsto0,y\mapsto1\},\{x\mapsto1,y\mapsto0\},\{x\mapsto1,y\mapsto1\}\}$
and by projective completability we also find $\{x\mapsto0,y\mapsto0\}$.
\end{proof}

\subsection{Piecewise completability on optimization problems}

In Theorem \ref{thm:Extend-to-Subtree-Sufficient} we have shown that
piecewise completability is the sufficient condition for \ESName\ 
solving the partial optimization SFP. Furthermore, non-empty piecewise
completability is the necessary condition for \SESName\  solving
the single SFP. In this section we show that the optimization extension
system guarantees non-empty piecewise completability. As a consequence,
we can use \SESName\  to find a solution and \ESName\  to find some
solutions in any optimization SFP. 
\begin{thm}
\label{thm:optimizationSSFP}The optimization extension system satisfies
non-empty piecewise completability on $\Phi$.\end{thm}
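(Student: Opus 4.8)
The plan is to unfold \emph{guaranteed non-empty piecewise completability} into its two constituent obligations for an arbitrary product $\phi=\xi_{1}\times\xi_{2}$ with $d(\xi_{1})=X$ and $d(\xi_{2})=Y$: first, the inclusion $CO(c_{\phi^{\downarrow X}},\xi_{2})\subseteq c_{\phi}$ (piecewise completability proper), and second, that $CO(A,\xi_{2})\neq\emptyset$ whenever $\emptyset\neq A\subseteq c_{\phi^{\downarrow X}}$. Throughout I write $M=\phi^{\downarrow\emptyset}(\diamond)$ for the optimal value and use that, because the semiring is selective, every projection is a maximum, so $c_{\phi}=\{\mathbf{t}\in\Omega_{X\cup Y}\mid\phi(\mathbf{t})=M\}$ and, by transitivity (A4), $\phi^{\downarrow X}(\mathbf{x})=M$ for every $\mathbf{x}\in c_{\phi^{\downarrow X}}$.

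For the inclusion I would take an arbitrary $\langle\mathbf{x},\mathbf{z}\rangle\in CO(c_{\phi^{\downarrow X}},\xi_{2})$, so that $\mathbf{x}\in c_{\phi^{\downarrow X}}$ and $\mathbf{z}\in W_{\xi_{2}}^{X\cap Y}(\mathbf{x}^{\downarrow X\cap Y})$, and show $\phi(\langle\mathbf{x},\mathbf{z}\rangle)=M$. The key algebraic step is to apply the combination axiom A5 (with $Z=X$) together with commutativity to factor the projection as $\phi^{\downarrow X}=\xi_{1}\times\xi_{2}^{\downarrow X\cap Y}$, which, evaluated at $\mathbf{x}$, gives $M=\xi_{1}(\mathbf{x})\cdot\xi_{2}^{\downarrow X\cap Y}(\mathbf{x}^{\downarrow X\cap Y})$. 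By the definition of the optimization extension system, the membership $\mathbf{z}\in W_{\xi_{2}}^{X\cap Y}(\mathbf{x}^{\downarrow X\cap Y})$ means precisely $\xi_{2}(\langle\mathbf{x}^{\downarrow X\cap Y},\mathbf{z}\rangle)=\xi_{2}^{\downarrow X\cap Y}(\mathbf{x}^{\downarrow X\cap Y})$. Since the definition of combination gives $\phi(\langle\mathbf{x},\mathbf{z}\rangle)=\xi_{1}(\mathbf{x})\cdot\xi_{2}(\langle\mathbf{x}^{\downarrow X\cap Y},\mathbf{z}\rangle)$, substituting the extension identity yields $\phi(\langle\mathbf{x},\mathbf{z}\rangle)=\xi_{1}(\mathbf{x})\cdot\xi_{2}^{\downarrow X\cap Y}(\mathbf{x}^{\downarrow X\cap Y})=M$, so $\langle\mathbf{x},\mathbf{z}\rangle\in c_{\phi}$. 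I would stress that this substitution is direct and, unlike the projective case, needs neither cancellativity nor a case split on whether $M=0$; this is exactly why piecewise completability holds unconditionally for every optimization system.

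For the non-emptiness obligation I would observe that in the optimization extension system \emph{every} extension set is non-empty: $W_{\xi_{2}}^{X\cap Y}(\mathbf{w})$ is the set of maximisers of $\xi_{2}(\langle\mathbf{w},\cdot\rangle)$ over $\Omega_{Y-X}$, and this maximum is attained because $\Omega_{Y-X}$ is finite and non-empty (finiteness is already forced by the semiring-sum definition of projection, and the variable domains are non-empty). Hence, given $\emptyset\neq A\subseteq c_{\phi^{\downarrow X}}$, I pick any $\mathbf{x}\in A$, pick $\mathbf{z}\in W_{\xi_{2}}^{X\cap Y}(\mathbf{x}^{\downarrow X\cap Y})\neq\emptyset$, and conclude $\langle\mathbf{x},\mathbf{z}\rangle\in CO(A,\xi_{2})$, so the completion set is non-empty.

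The only place demanding care is the domain bookkeeping in the A5 factorisation: checking $X\subseteq Z\subseteq X\cup Y$ for $Z=X$, tracking that $d(\xi_{2}^{\downarrow X\cap Y})=X\cap Y$, and that $Y-(X\cap Y)=Y-X$ so the extension tuples live in $\Omega_{Y-X}$. This is notational rather than conceptual, and once the factorisation $\phi^{\downarrow X}=\xi_{1}\times\xi_{2}^{\downarrow X\cap Y}$ is in hand, both obligations fall out immediately. I therefore expect the main (modest) obstacle to be purely the correct handling of domains.
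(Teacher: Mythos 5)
Your proof is correct and follows essentially the same route as the paper's: the core step in both is to multiply the extension identity $\xi_{2}(\langle\mathbf{x}^{\downarrow X\cap Y},\mathbf{z}\rangle)=\xi_{2}^{\downarrow X\cap Y}(\mathbf{x}^{\downarrow X\cap Y})$ by $\xi_{1}(\mathbf{x})$ and use axiom A5 to recognize the result as $\phi^{\downarrow X}(\mathbf{x})$, with non-emptiness following from attainment of maxima over finite non-empty tuple sets. The only (cosmetic) difference is that the paper phrases the conclusion as the extension-set inclusion $W_{\xi_{2}}^{X\cap Y}(\mathbf{x}^{\downarrow X\cap Y})\subseteq W_{\xi_{1}\times\xi_{2}}^{X}(\mathbf{x})$ and then invokes the extension-system decomposition of $c_{\phi}$, whereas you conclude directly from $\phi(\langle\mathbf{x},\mathbf{z}\rangle)=\phi^{\downarrow X}(\mathbf{x})=M$ that $\langle\mathbf{x},\mathbf{z}\rangle\in c_{\phi}$; the two are equivalent given $\phi^{\downarrow X}(\mathbf{x})=M$ for $\mathbf{x}\in c_{\phi^{\downarrow X}}$.
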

\begin{proof}
Take a valuation $\phi=\xi_{1}\times\xi_{2}$ where $\xi_{1},\xi_{2}\in\Phi$,
with domains $X$ and $Y$ respectively. We have to prove that $CO(c_{\phi^{\downarrow X}},\xi_{2})\subseteq c_{\phi}.$ 

By definition of set of completions, we have that 
\[
CO(c_{\phi^{\downarrow X}},\xi_{2})=\{\langle\mathbf{x},\mathbf{z}\rangle|\mathbf{x}\in c_{\phi^{\downarrow X}}\text{ and }\mathbf{z}\in W_{\xi_{2}}^{X\cap Y}(\mathbf{x}^{\downarrow X\cap Y})\}.
\]

Now, from the definition of $c_{\phi},$ we have that 
\[
c_{\phi}=W_{\phi}^{\emptyset}(\diamond)=\{\langle\mathbf{x},\mathbf{z\rangle})|\mathbf{x}\in c_{\phi^{\downarrow X}}\text{ and }\mathbf{z}\in W_{\xi_{1}\times\xi_{2}}^{X}(\mathbf{x})\}.
\]

So, piecewise completability is satisfied if, and only if, for each
$\mathbf{x}\in c_{\phi^{\downarrow X}},$ we have that $W_{\xi_{2}}^{X\cap Y}(\mathbf{x}^{\downarrow X\cap Y})\subseteq W_{\xi_{1}\times\xi_{2}}^{X}(\mathbf{x}).$

Now take $\mathbf{z}\in W_{\xi_{2}}^{X\cap Y}(\mathbf{x}^{\downarrow X\cap Y}).$
From the definition we have that $\xi_{2}(\langle\mathbf{x}^{\downarrow X\cap Y},\mathbf{z}\rangle)=\xi_{2}^{\downarrow X\cap Y}(\mathbf{x}^{\downarrow X\cap Y}).$
Multiplying by $\xi_{1}(\mathbf{x})$ we get that 
\begin{equation}
\xi_{1}(\mathbf{x})\cdot\xi_{2}(\langle\mathbf{x}^{\downarrow X\cap Y},\mathbf{z}\rangle)=\xi_{1}(\mathbf{x})\cdot\xi_{2}^{\downarrow X\cap Y}(\mathbf{x}^{\downarrow X\cap Y}).\label{eq:Xi-2-Marginal}
\end{equation}

Next, we have that
\begin{eqnarray*}
(\xi_{1}\times\xi_{2})(\mathbf{x},\mathbf{z}) & = & \xi_{1}(\mathbf{x})\cdot\xi_{2}(\langle\mathbf{x}^{\downarrow X\cap Y},\mathbf{z}\rangle)=\xi_{1}(\mathbf{x})\cdot\xi_{2}^{\downarrow X\cap Y}(\mathbf{x}^{\downarrow X\cap Y})\\
 & = & (\xi_{1}\times\xi_{2}^{\downarrow X\cap Y})(\mathbf{x})=(\xi_{1}\times\xi_{2})^{\downarrow X}(\mathbf{x}).
\end{eqnarray*}

where the first equality is the definition of combination, the second
is equation \ref{eq:Xi-2-Marginal}, and the remaining are basic valuation
algebra manipulations

But now, since $W_{\xi_{1}\times\xi_{2}}^{X}(\mathbf{x})=$$\{\mathbf{z}\in\Omega_{d(\phi)-X}|(\xi_{1}\times\xi_{2})(\mathbf{x},\mathbf{z})=(\xi_{1}\times\xi_{2})^{\downarrow X}(\mathbf{x})\}$,
it is clear that $\mathbf{z}\in W_{\xi_{1}\times\xi_{2}}^{X}(\mathbf{x}).$
Hence , $W_{\xi_{2}}^{X\cap Y}(\mathbf{x}^{\downarrow X\cap Y})\subseteq W_{\xi_{1}\times\xi_{2}}^{X}(\mathbf{x}).$

Since the optimization extension system is a total implementation
of a solution concept which is guaranteed non-empty, $c_{\phi^{\downarrow X}}$
is non-empty and by the definition of extension set in equation \ref{eq:optimization-Extension},
the set of extensions cannot be empty, hence $CO(c_{\phi^{\downarrow X}},\xi_{2})$
is non-empty. This provides guaranteed non-empty piecewise completability. 
\end{proof}

As a result of Theorem \ref{thm:optimizationSSFP}, non-empty piecewise
completability is guaranteed on any optimization extension system.
In Theorem \ref{thm:Extend-to-Subtree-Necessary} we have seen that
total piecewise completability is a necessary and sufficient condition
for \ESName\  to solve the complete SFP. We are interested in characterizing
for which semirings does the optimization extension system satisfy
total piecewise completability. Theorem \ref{thm:optimizationSSFP}
proved non-empty piecewise completability on $\text{\ensuremath{\Phi}}.$
It turns out that it is not possible to extend this result to total
piecewise completability. However, sometimes the completability conditions
only hold for a subset of the valuations and this is the case here.
We will see that total piecewise completability does only hold if
the valuation $\phi$ for which we try to find a solution has at least
one configuration whose value is not zero. However to do that first
we take a detour to talk about valuation algebras with null elements.
\begin{defn}
\label{def:Null-Element}An element $0_{X}\in\Phi_{X}$ is a \emph{null
element }if 
\begin{enumerate}
\item For each $\phi\in\Phi_{X},$ we have $\phi\times0_{X}=0_{X}\times\phi=0_{X}.$ 
\item For $X\subseteq Y\subseteq U$ and $\phi\in\Phi_{Y},$ we have that
$\phi^{\downarrow X}=0_{X}$ if and only if $\phi=0_{Y}.$
\end{enumerate}
\end{defn}

\begin{lem}
In a valuation algebra, the set $NN\subseteq\Phi$ of non-null elements
is projection-closed and combination-breakable. \end{lem}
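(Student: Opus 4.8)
The plan is to derive both properties by contraposition directly from the two defining conditions of a null element in Definition~\ref{def:Null-Element}. For projection-closedness the needed implication is essentially already packaged in condition~(2), so that part will be immediate. For combination-breakability the difficulty is that condition~(1) only governs the combination of $0_X$ with valuations sharing its scope $X$, whereas we must combine $0_X$ with a factor of arbitrary scope; bridging this domain mismatch is the heart of the argument and the step I expect to be the main obstacle.

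Projection-closedness I would dispatch first. Let $\phi\in NN$ with $d(\phi)=Y$ and fix $X\subseteq Y$. If $\phi^{\downarrow X}$ were null, i.e.\ $\phi^{\downarrow X}=0_X$, then condition~(2) of Definition~\ref{def:Null-Element} (with $X\subseteq Y$ and $\phi\in\Phi_Y$) would force $\phi=0_Y$, contradicting $\phi\in NN$. Hence $\phi^{\downarrow X}\in NN$.

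For combination-breakability I would first isolate the auxiliary fact that a null element is absorbing under combination: for any null element $0_X$ and any $\psi$ with $d(\psi)=Y$,
\[
0_X\times\psi=0_{X\cup Y}.
\]
Granting this, the claim is immediate: if $\phi=\xi_1\times\xi_2\in NN$ and, say, $\xi_1$ were null (so $\xi_1=0_{d(\xi_1)}$), the auxiliary fact would give $\phi=0_{d(\phi)}$, contradicting $\phi\in NN$; the symmetric case for $\xi_2$ follows using commutativity A1.

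It remains to prove the auxiliary fact, and here the plan is a two-level "project down, absorb, lift back" manoeuvre. Writing $Z=X\cup Y$, it suffices by condition~(2) to show $(0_X\times\psi)^{\downarrow X}=0_X$, since condition~(2) then upgrades this to $0_X\times\psi=0_Z$. Using A1 together with the combination axiom A5 to project onto $X$, one rewrites $(0_X\times\psi)^{\downarrow X}=0_X\times\psi^{\downarrow(X\cap Y)}$, reducing matters to the special case $0_X\times\eta=0_X$ where $\eta=\psi^{\downarrow(X\cap Y)}$ has scope $W:=X\cap Y\subseteq X$. That special case is settled by the same device one level down: project onto $W$, apply A1 and A5 to get $(0_X\times\eta)^{\downarrow W}=\eta\times 0_X^{\downarrow W}$, use condition~(2) to identify $0_X^{\downarrow W}=0_W$, and finally invoke condition~(1)\,---\,now legitimately, since $\eta$ and $0_W$ share the domain $W$\,---\,to conclude $\eta\times 0_W=0_W$; condition~(2) then lifts $(0_X\times\eta)^{\downarrow W}=0_W$ back to $0_X\times\eta=0_X$. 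The genuinely new content is exactly this reduction of arbitrary-scope absorption to the equal-scope absorption of condition~(1); everything else is routine bookkeeping with A1, A5, and the two defining conditions.
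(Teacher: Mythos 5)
Your proof is correct, and it is in fact more careful than the paper's own. The paper argues exactly along the lines of your top-level contraposition: projection-closedness is read off from condition (2), and combination-breakability is obtained by assuming one factor is null and concluding that $\phi$ is null ``by the first condition in Definition \ref{def:Null-Element}.'' But the paper stops there, invoking condition (1) directly even though, as you observe, that condition only licenses absorption when the other factor shares the domain $X$ of the null element, whereas $\xi_{1}$ and $\xi_{2}$ generally have different domains. Your auxiliary fact $0_{X}\times\psi=0_{X\cup Y}$, established by the two-level ``project down, absorb, lift back'' manoeuvre (A1 and A5 to rewrite $(0_{X}\times\psi)^{\downarrow X}$ as $0_{X}\times\psi^{\downarrow X\cap Y}$, condition (2) to identify $0_{X}^{\downarrow W}=0_{W}$ for $W=X\cap Y$, condition (1) at equal domains to absorb, then condition (2) twice to lift back up), is precisely the bridging step the paper leaves implicit, and each application of A5 and of condition (2) in your argument checks out. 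So your route is the same contradiction argument at the top level, but it supplies a genuinely missing lemma; what this buys is that the statement really does hold for an arbitrary valuation algebra equipped with null elements as defined, rather than relying on cross-domain absorption being evident, as it is in the concrete semiring-induced case the paper has in mind.
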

\begin{proof}
From the second condition in Definition \ref{def:Null-Element}, we
have that the set of non-null elements is projection-closed. To prove
that it is combination breakable pick any $\phi$ that is non-null
and such that $\phi=\xi_{1}\times\xi_{2}.$ Now assume that either
$\xi_{1}$ or $\xi_{2}$ is null. Then by the first condition in Definition
\ref{def:Null-Element} we have that $\phi$ is null which is a contradiction.
Hence, both $\xi_{1}$ and $\xi_{2}$ must be non-null.
\end{proof}
In selective semiring induced valuation algebras, a valuation $\phi$
is null if and only if $\phi^{\downarrow\emptyset}(\diamond)=0.$
Thus, all possible configurations in $\mbox{\ensuremath{\Omega}}_{d(\phi)}$
are solutions. Since \noun{\ESName\ } runs the \noun{Collect} algorithm
as a previous step, it is easy to determine whether $\phi$ is constant
$0$ by assessing $\phi^{\downarrow\emptyset}(\diamond)=\left(\psi_{r}'\right)^{\downarrow\emptyset}(\diamond)$
and checking whether it is equal to $0.$ In that case we can directly
return $\Omega_{d(\phi)}$. Thus, we can easily identify and solve
null valuations. So, we have to concentrate on when does total piecewise
completability hold on $NN.$ Next, we define weak multiplicative
cancellativity and prove that it is the sufficient and necessary condition
on a semiring for \ESName\ to solve the complete optimization SFP.
\begin{defn}
A commutative semiring $R$ is weakly multiplicatively cancellative
if for any $a,b,c\in R,$ we have that 
\[
a\cdot c\neq0\mbox{, and }a\cdot c=b\cdot c\,\,\,\,\,\,\,\,\,\,\,\,\,\,\,\,\,\,\,\mbox{ implies that \,\,\,\,\,\,\,\,\,\,\,\,\,\,\,\,\,\,\,\,\ensuremath{a=b}.}
\]
\end{defn}
\begin{thm}
Let $R$ be a commutative selective semiring. If $R$ is weakly multiplicatively
cancellative then its induced valuation algebra satisfies total piecewise
completability on $NN$. On the other hand, if the valuation algebra
has one variable that can take two or more values, and total piecewise
completability on $NN$ is satisfied, then $R$ is weakly multiplicatively
cancellative. \end{thm}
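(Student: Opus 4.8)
The plan is to prove the two implications separately, throughout exploiting that in a selective semiring $a+b=\max\{a,b\}$ and that, by Theorem~\ref{thm:optimizationSSFP}, the optimization extension system already satisfies the inclusion $CO(c_{\phi^{\downarrow X}},\xi_{2})\subseteq c_{\phi}$ for every factorization $\phi=\xi_{1}\times\xi_{2}$. Consequently, establishing \emph{total} piecewise completability on $NN$ reduces to proving only the reverse inclusion $c_{\phi}\subseteq CO(c_{\phi^{\downarrow X}},\xi_{2})$ for non-null $\phi$, and the necessity direction amounts to producing a witness where this reverse inclusion fails.

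For the sufficiency direction I would fix a non-null $\phi=\xi_{1}\times\xi_{2}$ with $d(\xi_{1})=X$, $d(\xi_{2})=Y$, write $M=\phi^{\downarrow\emptyset}(\diamond)\neq0$, and take an arbitrary $\mathbf{s}\in c_{\phi}$. Setting $\mathbf{x}=\mathbf{s}^{\downarrow X}$ and $\mathbf{z}=\mathbf{s}^{\downarrow Y-X}$, so that $\langle\mathbf{x},\mathbf{z}\rangle=\mathbf{s}$, Lemma~PK8.1 gives $\mathbf{x}\in c_{\phi^{\downarrow X}}$, and it only remains to check $\mathbf{z}\in W_{\xi_{2}}^{X\cap Y}(\mathbf{x}^{\downarrow X\cap Y})$. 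The crux is a double use of axiom A5: from $\phi(\mathbf{s})=M$ I obtain $\xi_{1}(\mathbf{x})\cdot\xi_{2}(\langle\mathbf{x}^{\downarrow X\cap Y},\mathbf{z}\rangle)=M$, while from $\phi^{\downarrow X}(\mathbf{x})=M$ together with the identity $(\xi_{1}\times\xi_{2})^{\downarrow X}=\xi_{1}\times\xi_{2}^{\downarrow X\cap Y}$ I obtain $\xi_{1}(\mathbf{x})\cdot\xi_{2}^{\downarrow X\cap Y}(\mathbf{x}^{\downarrow X\cap Y})=M$. Since $M\neq0$, weak multiplicative cancellativity applied with the common factor $\xi_{1}(\mathbf{x})$ forces $\xi_{2}(\langle\mathbf{x}^{\downarrow X\cap Y},\mathbf{z}\rangle)=\xi_{2}^{\downarrow X\cap Y}(\mathbf{x}^{\downarrow X\cap Y})$, which is exactly the membership condition of equation~\ref{eq:optimization-Extension}. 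Hence $\mathbf{s}\in CO(c_{\phi^{\downarrow X}},\xi_{2})$, and the reverse inclusion follows.

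For the necessity direction I would argue by contraposition, building a minimal witness on a single variable $x$ that takes two or more values. Suppose $a\cdot c\neq0$ and $a\cdot c=b\cdot c$ but $a\neq b$; by totality of the order assume without loss of generality $a>b$. Choose two distinct values $v_{0},v_{1}$ of $x$, set $X=\emptyset$ and $Y=\{x\}$, and take $\xi_{1}\in\Phi_{\emptyset}$ equal to the constant $c$ and $\xi_{2}\in\Phi_{\{x\}}$ with $\xi_{2}(x\mapsto v_{0})=a$, $\xi_{2}(x\mapsto v_{1})=b$, and $\xi_{2}=0$ on any remaining value. Then $\phi=\xi_{1}\times\xi_{2}$ satisfies $\phi^{\downarrow\emptyset}(\diamond)=a\cdot c\neq0$, so $\phi\in NN$; moreover $c_{\phi^{\downarrow X}}=c_{\phi^{\downarrow\emptyset}}=\{\diamond\}$ yields $CO(c_{\phi^{\downarrow X}},\xi_{2})=c_{\xi_{2}}=\{x\mapsto v_{0}\}$ because $a>b\geq0$, whereas $c_{\phi}=\{x\mapsto v_{0},\,x\mapsto v_{1}\}$ since $\phi$ attains its maximum $a\cdot c=b\cdot c$ at both assignments. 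These two sets differ, so total piecewise completability on $NN$ fails; contrapositively, the hypothesis forces $a=b$, establishing weak multiplicative cancellativity.

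I expect the main obstacle to be bookkeeping rather than conceptual, and to lie in two places. First, in the sufficiency step, one must apply A5 with care — in particular the identity $(\xi_{1}\times\xi_{2})^{\downarrow X}=\xi_{1}\times\xi_{2}^{\downarrow X\cap Y}$ and the matching of $\mathbf{s}^{\downarrow Y}$ with $\langle\mathbf{x}^{\downarrow X\cap Y},\mathbf{z}\rangle$ — and recognize that the restriction to $NN$ is precisely what supplies the $a\cdot c\neq0$ premise of weak cancellativity; without it (null $\phi$) every configuration is a solution and the statement would collapse trivially. Second, in the necessity construction the only delicate point is to guarantee $\phi\in NN$ while still exhibiting a genuine solution, namely $x\mapsto v_{1}$, that piecewise completion fails to recover, which the assumption $a\cdot c\neq0$ secures.
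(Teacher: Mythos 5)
Your proposal is correct and takes essentially the same route as the paper's own proof: in the sufficiency direction both arguments invoke Theorem~\ref{thm:optimizationSSFP} for the inclusion $CO(c_{\phi^{\downarrow X}},\xi_{2})\subseteq c_{\phi}$ and then obtain the reverse one by cancelling the common factor $\xi_{1}(\mathbf{x})$ from $\xi_{1}(\mathbf{x})\cdot\xi_{2}(\langle\mathbf{x}^{\downarrow X\cap Y},\mathbf{z}\rangle)=\xi_{1}(\mathbf{x})\cdot\xi_{2}^{\downarrow X\cap Y}(\mathbf{x}^{\downarrow X\cap Y})$, which is legitimate precisely because $\phi^{\downarrow X}(\mathbf{x})=\phi^{\downarrow\emptyset}(\diamond)\neq0$ on $NN$. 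In the necessity direction your single-variable witness (with $\xi_{1}(\diamond)=c$ and $\xi_{2}$ taking the values $a$, $b$, and $0$) is exactly the paper's construction, with your WLOG assumption $a>b$ replacing the paper's two-case analysis.
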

\begin{proof}
We start proving that, if the semiring is weakly multiplicatively
cancellative, we have total piecewise completability on $NN.$ Take
a valuation $\phi=\xi_{1}\times\xi_{2}$ where $\xi_{1},\xi_{2}\in\Phi$,
with domains $X$ and $Y$ respectively. We have to prove that $CO(c_{\phi^{\downarrow X}},\xi_{2})=c_{\phi}.$ 

By definition of set of completions, we have that 
\[
CO(c_{\phi^{\downarrow X}},\xi_{2})=\{(\mathbf{x},\mathbf{z})|\mathbf{x}\in c_{\phi^{\downarrow X}}\text{ and }\mathbf{z}\in W_{\xi_{2}}^{X\cap Y}(\mathbf{x}^{\downarrow X\cap Y})\}.
\]

Now, from the definition of $c_{\phi},$ we have that 
\[
c_{\phi}=W_{\phi}^{\emptyset}(\diamond)=\{(\mathbf{x},\mathbf{z})|\mathbf{x}\in c_{\phi^{\downarrow X}}\text{ and }\mathbf{z}\in W_{\xi_{1}\times\xi_{2}}^{X}(\mathbf{x})\}.
\]

So, total piecewise completability is satisfied if, and only if, for
each $\mathbf{x}\in c_{\phi^{\downarrow X}},$ we have that $W_{\xi_{2}}^{X\cap Y}(\mathbf{x}^{\downarrow X\cap Y})=W_{\xi_{1}\times\xi_{2}}^{X}(\mathbf{x}).$ 

In Theorem \ref{thm:optimizationSSFP}, we proved that $W_{\xi_{2}}^{X\cap Y}(\mathbf{x}^{\downarrow X\cap Y})\subseteq W_{\xi_{1}\times\xi_{2}}^{X}(\mathbf{x}).$
Thus, it remains to prove that $W_{\xi_{1}\times\xi_{2}}^{X}(\mathbf{x})\subseteq W_{\xi_{2}}^{X\cap Y}(\mathbf{x}^{\downarrow X\cap Y}).$

Now take $\mathbf{z}\in W_{\xi_{1}\times\xi_{2}}^{X}(\mathbf{x}).$
From the definition we have that $(\xi_{1}\times\xi_{2})(\mathbf{x},\mathbf{z})=(\xi_{1}\times\xi_{2})^{\downarrow X}(\mathbf{x}).$
From here,

\[
(\xi_{1}\times\xi_{2})(\mathbf{x},\mathbf{z})=(\xi_{1}\times\xi_{2}^{\downarrow X\cap Y})(\mathbf{x})
\]

and by definition of combination 
\[
\xi_{1}(\mathbf{x})\cdot\xi_{2}((\mathbf{x}^{\downarrow X\cap Y},\mathbf{z}))=\xi_{1}(\mathbf{x})\cdot\xi_{2}^{\downarrow X\cap Y}(\mathbf{x}).
\]

We have that $\xi_{1}(\mathbf{x})\cdot\xi_{2}^{\downarrow X\cap Y}(\mathbf{x})=\phi^{\downarrow X}(\mathbf{x})=\phi^{\downarrow\emptyset}(\diamond)\neq0,$
where the second equality follows because $\mathbf{x}\in c_{\phi^{\downarrow X}},$
and the third one since $\phi\in NN,$ and hence $\phi^{\downarrow\emptyset}(\diamond)\neq0.$
So we can apply weak cancellation to $\xi_{1}(\mathbf{x})$ getting
\[
\xi_{2}((\mathbf{x}^{\downarrow X\cap Y},\mathbf{z}))=\xi_{2}^{\downarrow X\cap Y}(\mathbf{x}^{\downarrow X\cap Y}).
\]

But this is exactly the condition that $\mathbf{z}$ has to satisfy
in order to be in $W_{\xi_{2}}^{X\cap Y}(\mathbf{x}^{\downarrow X\cap Y}).$
We have proven that $W_{\xi_{1}\times\xi_{2}}^{X}(\mathbf{x})\subseteq W_{\xi_{2}}^{X\cap Y}(\mathbf{x}^{\downarrow X\cap Y})$
and in Theorem \ref{thm:optimizationSSFP}, we proved that $W_{\xi_{2}}^{X\cap Y}(\mathbf{x}^{\downarrow X\cap Y})\subseteq W_{\xi_{1}\times\xi_{2}}^{X}(\mathbf{x}).$
Thus, $W_{\xi_{2}}^{X\cap Y}(\mathbf{x}^{\downarrow X\cap Y})=W_{\xi_{1}\times\xi_{2}}^{X}(\mathbf{x}).$ 

The second part of the proof assumes total piecewise completability
on $NN$ and concludes that the semiring must be weakly multiplicatively
cancellative. To prove it, we assume that it is not and will reach
a contradiction. Let $a,b,c\in R,$ such that $a\cdot c\neq0,$ $a\cdot c=b\cdot c$
and $a\neq b.$ Now, we build a valuation which has as domain a single
variable $\mathbf{x}$ with at least two values, namely $x_{0}$ and
$x_{1}.$ 
\[
\phi(\mathbf{x})=(\xi_{1}\times\xi_{2})(\mathbf{x})
\]
with
\[
\xi_{1}(\diamond)=c,
\]
and 
\[
\xi_{2}(\mathbf{x})=\begin{cases}
a & \mbox{\ if \ensuremath{\mathbf{x}=x_{0}},}\\
b & \ \mbox{if \ensuremath{\mathbf{x}=x_{1},}}\\
0 & \ \mbox{otherwise.}
\end{cases}
\]

Note that $\phi$ is non-null and that the set of solutions of $\phi$
is $\{x_{0},x_{1}\}.$ Now, if $a>b,$ then $W_{\xi_{2}}^{\emptyset}(\diamond)=\{x_{0}\},$
and the only solution found by piecewise completing will be $\{x_{0}\}.$
On the other hand, if $b>a$, then then $W_{\xi_{2}}^{\emptyset}(\diamond)=\{x_{1}\},$
and the only solution found by piecewise completing will be $\{x_{1}\}.$
Thus, in both cases we get to a contradiction.
\end{proof}
Table \ref{tab:ImpactOpt-1} summarizes the results in this section,
providing the sufficient and necessary conditions for each algorithm.
We have proven a sufficient condition to \EGPName,  which correctly
deals with counterexample \ref{cnt:NotSquareOrdered}. For \ESName\ 
to solve the complete optimization SFP, Pouly and Kohlas required
strict monotonicity which, for selective semirings, is equivalent
to multiplicative cancellativity (see Proposition \ref{prop:Strict-Iff-Cancellative}).
We have proven that weakly multiplicative cancellativity suffices.
Furthermore, where possible, we have provided also necessary conditions.

\begin{table}
\begin{centering}
{\small{}}%
\begin{tabular}{|>{\centering}p{1.7cm}|>{\centering}m{2.3cm}|>{\centering}m{3.2cm}|>{\centering}m{3.2cm}|}
\hline 
\textbf{\scriptsize{}Algorithm } & \textbf{\scriptsize{}Problem} & \textbf{\scriptsize{}Semiring suff. cond.} & \textbf{\scriptsize{}Semiring nec. cond.}\tabularnewline
\hline 
\hline 
\noun{\scriptsize{}\ref{alg:ExtendAll}} & {\scriptsize{}Complete optimization SFP} & {\scriptsize{}Square multiplicatively cancellative on image} & {\scriptsize{}Square ordered}\tabularnewline
\hline 
\noun{\scriptsize{}\ref{alg:ExtendOne}} & {\scriptsize{}Single optimization SFP} & {\scriptsize{}None} & {\scriptsize{}None}\tabularnewline
\hline 
\noun{\scriptsize{}\ref{alg:ExtendSome}} & {\scriptsize{}Partial optimization SFP} & {\scriptsize{}None} & {\scriptsize{}None}\tabularnewline
\hline 
\noun{\scriptsize{}\ref{alg:ExtendSome}} & {\scriptsize{}Complete optimization SFP} & {\scriptsize{}Weakly multiplicatively cancellative } & {\scriptsize{}Weakly multiplicatively cancellative }\tabularnewline
\hline 
\end{tabular}
\par\end{centering}{\small \par}

\protect\caption{\label{tab:ImpactOpt-1}Sufficient and necessary conditions on optimization
problems (semirings considered are always commutative and selective).}
\end{table}

\section{Conclusions\label{sec:Conclusions}}

The theory for the generic construction of solutions in valuation
based systems \cite{Pouly2011c,Pouly2011a} studies three widely used
dynamic programming algorithms from the most general perspective and
provides necessary conditions for those algorithms to be correct.
We have presented counterexamples to the results presented there and
we have shown that the counterexamples have a deep impact in the theory.
This has opened the way for identifying two properties of extension
systems: projective completability and piecewise completability. We
have proven that such properties constitute sufficient and necessary
conditions for those generic algorithms to be correct, allowing for
a sharper characterization of when each algorithmic scheme can be
applied. To the best of our knowledge, up to know no necessary conditions
for these generic algorithms had been presented in the literature.

A particularly interesting case where these algorithms can be applied
is valuation algebras induced by a commutative selective semiring,
where they constitute the base of well known optimization algorithms.
For that case, we have also corrected a result in \cite{Pouly2011c,Pouly2011a}.
Furthermore, we have been able to translate the sufficient and necessary
conditions for the algorithms into conditions for the semiring, identifying
three new semiring properties: square multiplicatively cancellative
on image, square ordered and weakly multiplicatively cancellative.
Although we have started scratching the relationships between these
semiring properties, a deeper study of their interactions remains
as future work. 

As a result, our corrected theory provides the more general description
of these generic algorithms and the sharpest characterization to date
of their necessary and sufficient conditions.

\section*{Acknowledgements}

The authors would like to thank Professor Jürg Kohlas for his many
valuable comments, suggestions and discussions along the craft of
this paper. This work has been supported by projects COR (TIN2012-38876-C02-01),
GEAR (CSIC - 201350E112) and by the Generalitat of Catalunya grant
2009-SGR-1434.

\appendix
\renewcommand*{\appendixname}{}
\section{Properties of rooted covering join trees}

We prove some poperties of rooted covering join trees which are needed
to ease the proofs of the results presented in the paper. In any rooted
join tree, for each node $i\in V,$ we define $\lambda^{de}(i)$ as
the set of variables that appear in the scope of the descendants of
$i$, namely $\lambda^{de}(i)=\bigcup_{j\in de(i)}\lambda(j)$. Furthermore
we define $\lambda^{nde}(i)$ as the set of variables that appear
in the scope of the non-descendants of $i,$ namely $\lambda^{nde}(i)=\bigcup_{j\in nde(i)}\lambda(j)$. 
\begin{lem}
For any node $i$ of $V$

\textup{
\begin{equation}
s_{i}=\lambda(i)\cap\lambda^{nde}(i)\label{eq:separatorIsIntersectionWithND}
\end{equation}
}
\begin{equation}
\bigcup_{j\in ch(i)}s_{j}=\lambda(i)\cap\lambda^{de}(i)\label{eq:intersectionWithSubtree}
\end{equation}

\textup{
\begin{equation}
\lambda^{nde}(i)\cap\lambda^{de}(i)\subseteq\lambda(i)\cap\lambda^{de}(i)\label{eq:lambdaLargerThanIntersection}
\end{equation}
}\end{lem}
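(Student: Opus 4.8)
The plan is to derive all three identities from the running intersection property, combined with elementary observations about how paths in a rooted tree are forced through particular nodes. The three structural facts I will use are: (i) for $i\neq r$, the path from $i$ to any non-descendant $j\in nde(i)$ passes through the parent $p_i$; (ii) the path from $i$ to any descendant $k\in de(i)$ passes through exactly one child $c\in ch(i)$; and (iii) the path from any non-descendant $j\in nde(i)$ to any descendant $k\in de(i)$ passes through $i$ itself. Each is immediate from the fact that the subtree rooted at $i$ can be entered or left only through $i$ (and, from above, through $p_i$). In every part the engine is the same: combine one of these path facts with the running intersection property, which guarantees $\lambda(a)\cap\lambda(b)\subseteq\lambda(k)$ whenever $k$ lies on the path between $a$ and $b$.

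For \eqref{eq:separatorIsIntersectionWithND} I would first dispatch the root case, where $nde(r)=\emptyset$ so that $\lambda^{nde}(r)=\emptyset=s_r$. For $i\neq r$, the inclusion $s_i\subseteq\lambda(i)\cap\lambda^{nde}(i)$ is immediate because $p_i\in nde(i)$, giving $s_i=\lambda(i)\cap\lambda(p_i)\subseteq\lambda(i)\cap\lambda^{nde}(i)$. For the reverse inclusion I take $u\in\lambda(i)\cap\lambda(j)$ with $j\in nde(i)$; by fact (i) the parent $p_i$ lies on the path between $i$ and $j$, so the running intersection property yields $u\in\lambda(i)\cap\lambda(j)\subseteq\lambda(p_i)$, whence $u\in\lambda(i)\cap\lambda(p_i)=s_i$.

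For \eqref{eq:intersectionWithSubtree} I rewrite $s_j=\lambda(j)\cap\lambda(i)$ for each child $j$ (since $p_j=i$), so that $\bigcup_{j\in ch(i)}s_j=\lambda(i)\cap\bigcup_{j\in ch(i)}\lambda(j)$. The inclusion into $\lambda(i)\cap\lambda^{de}(i)$ is clear because children are descendants. For the reverse, given $u\in\lambda(i)\cap\lambda(k)$ with $k\in de(i)$, fact (ii) puts the appropriate child $c$ on the path from $i$ to $k$, and the running intersection property gives $u\in\lambda(c)$, hence $u\in\lambda(i)\cap\lambda(c)=s_c$; when $k$ is itself a child this is the degenerate case $c=k$, handled the same way.

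Finally, \eqref{eq:lambdaLargerThanIntersection} reduces to showing $\lambda^{nde}(i)\cap\lambda^{de}(i)\subseteq\lambda(i)$, since the factor $\lambda^{de}(i)$ is common to both sides. Given $u$ lying in both $\lambda(j)$ for some $j\in nde(i)$ and $\lambda(k)$ for some $k\in de(i)$, fact (iii) places $i$ on the path between $j$ and $k$, and a last appeal to the running intersection property gives $u\in\lambda(i)$. The only mild subtleties anywhere are the boundary cases (the root in the first identity, and a descendant that happens to be a child in the second), but these are routine; I do not expect a genuine obstacle, as the whole lemma is a packaging of the running intersection property against the three path facts.
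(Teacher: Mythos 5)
Your proof is correct and follows essentially the same route as the paper's: both arguments combine the running intersection property with the structural facts about which nodes (the parent $p_i$, a child, or $i$ itself) are forced onto the relevant paths, the only difference being that you argue element-wise while the paper manipulates the unions and intersections directly. The boundary cases you flag (the root in the first identity, a child as descendant in the second) are handled the same way in the paper, so there is no gap.
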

\begin{proof}
We start proving equation \ref{eq:separatorIsIntersectionWithND}.
If $i$ is the root, then $s_{i}=\emptyset$ and the equation is trivially
satisfied. Assume that $i$ is not the root. By definition of $\lambda^{nde}(i)$,
we have that $\lambda(i)\cap\lambda^{nde}(i)=\lambda(i)\cap\bigcup_{j\in nde(i)}\lambda(j)=\bigcup_{j\in nde(i)}(\lambda(i)\cap\lambda(j))=s_{i}\cup\bigcup_{j\in nde(i)-\{p_{i}\}}(\lambda(i)\cap\lambda(j)).$
For any $j\in nde(i)-\{p_{i}\},$ we have that $p_{i}$ lies in the
path between $j$ and $i$, and by the running intersection property,
$\lambda(i)\cap\lambda(j)\subseteq\lambda(p_{i})$. Since $\lambda(i)\cap\lambda(j)\subseteq\lambda(i),$
we have that $\lambda(i)\cap\lambda(j)\subseteq\lambda(i)\cap\lambda(p_{i})=s_{i}.$
Thus, $\bigcup_{j\in nde(i)-\{p_{i}\}}\lambda(i)\cap\lambda(j)\subseteq s_{i},$
and $\lambda(i)\cap\lambda^{nde}(i)=s_{i}.$

Next, we will prove equation \ref{eq:intersectionWithSubtree}
\begin{eqnarray*}
\lambda(i)\cap\lambda^{de}(i) & = & \lambda(i)\cap\bigcup_{j\in de(i)}\lambda(j)=\lambda(i)\cap\bigcup_{j\in ch(i)}[\lambda(j)\cup\bigcup_{k\in de(j)}\lambda(k)]\\
 & = & \bigcup_{j\in ch(i)}[(\lambda(i)\cap\lambda(j))\cup\bigcup_{k\in de(j)}(\lambda(i)\cap\lambda(k))].
\end{eqnarray*}
Now, by the running intersection property, $\lambda(i)\cap\lambda(k)\subseteq\lambda(i)\cap\lambda(j),$
so we can remove the union $\bigcup_{k\in de(j)}(\lambda(i)\cap\lambda(k))$
leaving

\[
\lambda(i)\cap\lambda^{de}(i)=\bigcup_{j\in ch(i)}[\lambda(i)\cap\lambda(j)]=\bigcup_{j\in ch(i)}s_{j}.
\]

Finally, we will conclude by proving equation \ref{eq:lambdaLargerThanIntersection}.
Applying the definitions we have that $\lambda^{nde}(i)\cap\lambda^{de}(i)=\left(\bigcup_{j\in nde(i)}\lambda(j)\right)\cap\left(\bigcup_{k\in de(i)}\lambda(k)\right)=\bigcup_{j\in nde(i)}\bigcup_{k\in de(i)}\lambda(j)\cap\lambda(k).$
But now node $i$ lies in the path between any node $j$ which is
non-descedant of $i$ and any other node $k$ which is descendant
of $i.$ Thus, by the running intersection property we have that $\lambda(j)\cap\lambda(k)\subseteq\lambda(i),$
and that $\bigcup_{j\in nde(i)}\bigcup_{k\in de(i)}\lambda(j)\cap\lambda(k)\subseteq\lambda(i).$
From here we have that $\lambda^{nde}(i)\cap\lambda^{de}(i)\subseteq\lambda(i)\cap\lambda^{de}(i).$
\end{proof}

\section{Minimally labeled covering join trees\label{sec:MinimallyLabeledJoinTrees}}

In the paper we make the assumption that covering join trees are minimally
labeled (see Assumption \ref{assu:Minimal}). In this appendix we
start by checking that, for a fixed tree, there is no covering join
tree whose labels are smaller that those of a minimally labeled join
tree. Afterwards, we prove that it is easy to build a minimally labeled
covering join tree provided a tree and a valuation assignment function.
Finally we prove some properties of minimally labeled covering join
trees which are used in the proofs in the paper.

We start by proving that there can be no labelling smaller than that
of a minimally labeled covering join tree. 
\begin{lem}
\label{lem:LambdaBiggerThanDomain}Let ($V,E)$ be a tree. Given a
valuation $\phi=\phi_{1}\times\cdots\times\phi_{n},$ there is no
covering join tree $\mathcal{T}=(V,E,\lambda,U)$, valuation assignment
$a$, $i\in V,$ and $k\in ne(i),$ such that $\lambda(i)\nsupseteq d(\psi_{i})\cup\bigcup_{j\in ne(i)\setminus\{k\}}s_{ij}.$\end{lem}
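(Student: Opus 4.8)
The plan is to prove the contrapositive in its strongest form, namely to show that for \emph{every} covering join tree $\mathcal{T}=(V,E,\lambda,U)$, every valuation assignment $a$, every node $i\in V$ and every $k\in ne(i)$, the inclusion
\[
\lambda(i)\supseteq d(\psi_{i})\cup\bigcup_{j\in ne(i)\setminus\{k\}}s_{ij}
\]
always holds. This immediately rules out the existence of the forbidden configuration in the statement. Since $\lambda(i)\supseteq A\cup B$ if and only if $\lambda(i)\supseteq A$ and $\lambda(i)\supseteq B$, I would establish the two inclusions $d(\psi_{i})\subseteq\lambda(i)$ and $\bigcup_{j\in ne(i)\setminus\{k\}}s_{ij}\subseteq\lambda(i)$ separately.

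For the first inclusion, recall from Definition \ref{def:CoveringJoinTree} that $\psi_{i}=\prod_{j\in a^{-1}(i)}\phi_{j}$ and that, by the defining property of a valuation assignment, every $\phi_{j}$ with $j\in a^{-1}(i)$ satisfies $d(\phi_{j})\subseteq\lambda(a(j))=\lambda(i)$. Repeated application of the labeling axiom A2 gives $d(\psi_{i})=\bigcup_{j\in a^{-1}(i)}d(\phi_{j})$, using the convention that the empty product equals the identity valuation $e$, whose domain $\emptyset$ is trivially contained in $\lambda(i)$. As each term of this union lies inside $\lambda(i)$, so does their union $d(\psi_{i})$.

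For the second inclusion, I would simply unfold the definition of the separator: for each $j\in ne(i)$ we have $s_{ij}=\lambda(i)\cap\lambda(j)\subseteq\lambda(i)$, so the union of the $s_{ij}$ over any subset of neighbors, in particular over $ne(i)\setminus\{k\}$, is contained in $\lambda(i)$. Combining the two inclusions yields the claimed containment and finishes the argument.

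There is no real obstacle here: the statement is essentially a sanity check confirming that the minimal-labeling expression of equation \ref{eq:MinimalLabeling} is a genuine lower bound for any admissible labeling of the fixed tree, and both halves follow directly from the covering property and from the definition of the separator. The only point requiring a moment's care is the degenerate case $a^{-1}(i)=\emptyset$, where $\psi_{i}=e$; this is handled by the identity-valuation convention, since $d(e)=\emptyset\subseteq\lambda(i)$ holds vacuously.
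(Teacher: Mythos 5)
Your proof is correct and follows essentially the same route as the paper's: the paper's one-line argument likewise observes that $s_{ij}=\lambda(i)\cap\lambda(j)\subseteq\lambda(i)$ by definition of the separator and that $d(\psi_{i})\subseteq\lambda(i)$ is forced by the valuation-assignment property. Your version merely spells out the union over $a^{-1}(i)$ via axiom A2 and the empty-product convention, which the paper leaves implicit.
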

\begin{proof}
The proof is immediate since $s_{ij}=\lambda(i)\cap\lambda(j)\subseteq\lambda(i)$
and $d(\psi_{i})\subseteq\lambda(i)$ is required for $a$ to be a
valuation assignment. 
\end{proof}
Now, Algorithm \ref{alg:MinimalLambdas} provides a procedure to assess
a minimally labeled covering join tree provided a covering join tree
and a valuation assignment $a.$

\begin{algorithm}
{\footnotesize{}\begin{algorithmic}[1]
\ForAll{nodes $i$  of $\mathcal T$}
	\State $\alpha(i) := \bigcup_{j\in a^{-1}(i)}d(\phi_j)$
	\State $\beta(i) := \alpha(i)$
	\State $\gamma(i) := \emptyset$
\EndFor
\ForAll{nodes $i$ of $\mathcal T$ except the root in an upward order }
	\State $\alpha(p_i) := \alpha(p_i) \cup \alpha(i)$
    \State $\beta(p_i) := \beta(p_i) \cup (\gamma(p_i) \cap \alpha(i))$
	\State $\gamma(p_i) := \gamma(i) \cup \alpha(i)$
\EndFor
\State $\lambda(r) := \beta(r)$
\ForAll{nodes $i$ of $\mathcal T$ except the root in an downward order }
	\State $\lambda(i) := \beta(i) \cup (\lambda(p_i) \cap \alpha(i))$
\EndFor
\State \Return $\boldsymbol{\lambda}$;
\end{algorithmic}}{\footnotesize \par}

\protect\caption{\label{alg:MinimalLambdas}\noun{MinimalLambdas} algorithm}
\end{algorithm}

\begin{lem}
\noun{MinimalLambdas }asseses a minimally labeled covering join tree.
Furthermore \noun{MinimalLambdas }only requires time and space $O(|V||U|),$
where $V$ is the set of nodes of the join tree and $U$ is the set
of variables of the problem.\end{lem}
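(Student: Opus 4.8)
The plan is to prove two essentially independent claims: (i) the returned labelling $\lambda$ makes $(V,E,\lambda,U)$ a covering join tree satisfying the minimality equation \ref{eq:MinimalLabeling}, and (ii) the whole procedure runs in time and space $O(|V||U|)$. The backbone of the correctness argument is a purely combinatorial description of the intended minimal labelling. For a variable $x\in U$ write $S_x=\{j\in V\mid x\in d(\psi_j)\}$ for the set of nodes whose assigned valuation mentions $x$, and let $T_x$ be the smallest connected subtree of $(V,E)$ containing $S_x$. Covering forces $S_x\subseteq\{i\mid x\in\lambda(i)\}$ and running intersection forces this set to be connected, so $T_x\subseteq\{i\mid x\in\lambda(i)\}$; Lemma \ref{lem:LambdaBiggerThanDomain} shows that no admissible labelling can be strictly smaller, so the minimal labelling is exactly $\lambda(i)=\{x\in U\mid i\in T_x\}$. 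I would first record this characterisation, together with the notation $A(i)=\bigcup_{j\in de(i)\cup\{i\}}d(\psi_j)$ and the observation that $i\in T_x$ iff $x\in d(\psi_i)$ or $S_x$ meets at least two of the $|ne(i)|$ connected components of $V\setminus\{i\}$.

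Next I would establish the upward-pass invariant by induction along the upward order. The first line gives $\alpha(i)=A(i)$, the set of variables occurring in the subtree rooted at $i$; this is immediate since every child is finalised before its parent. The harder invariant is that $\beta(i)$ equals $d(\psi_i)$ together with every variable occurring in two distinct child-subtrees of $i$. Here $\gamma(i)$ plays the role of accumulating the subtree-domains of the children of $i$ processed so far, so that the intersection $\gamma(p_i)\cap\alpha(i)$ taken before $\gamma$ is updated isolates exactly the variables shared between $de(i)\cup\{i\}$ and a previously visited sibling subtree; unioning these over all children produces precisely the ``downward-forced'' part of the label. This is the step I expect to be the main obstacle, as it requires a careful bookkeeping argument that every pair of distinct child-subtrees is compared exactly once and that no spurious variable is ever inserted into $\beta$.

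I would then treat the downward pass, proving that the assignment $\lambda(i)=\beta(i)\cup(\lambda(p_i)\cap\alpha(i))$ reconstructs the full minimal label $\{x\mid i\in T_x\}$. The point is that $\beta(i)$ already accounts for $x\in d(\psi_i)$ and for variables whose occurrences straddle two children of $i$, while $\lambda(p_i)\cap\alpha(i)$ adds exactly those $x$ whose occurrences straddle the edge $\{i,p_i\}$: if $x\in A(i)$ and $x\in\lambda(p_i)$ then $T_x$ meets $de(i)\cup\{i\}$ and contains $p_i\in nde(i)$, so $T_x$ uses the edge $\{i,p_i\}$ and hence $i\in T_x$; conversely every such ``upward-shared'' variable lies in both sets, and $\lambda(p_i)$ is already final because $p_i$ precedes $i$ in the downward order (the root case $\lambda(r)=\beta(r)$ being the base, since $nde(r)=\emptyset$). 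From the identity $\lambda(i)=\{x\mid i\in T_x\}$ the three defining facts follow at once: $d(\psi_i)\subseteq\lambda(i)$, so $a$ remains a valuation assignment and the tree is covering; the sets $\{i\mid x\in\lambda(i)\}=T_x$ are connected, which is the running intersection property; and minimality holds because for $x\in\lambda(i)$ with $x\notin d(\psi_i)$ the tree $T_x$ reaches $i$ from at least two directions, so for any excluded neighbour $k$ there remains a neighbour $j\neq k$ with $x\in\lambda(j)$, giving $x\in s_{ij}$ and thereby verifying equation \ref{eq:MinimalLabeling}.

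Finally, the complexity claim is routine. Each of $\alpha,\beta,\gamma,\lambda$ assigns to every node a subset of $U$, which I would store as a length-$|U|$ bit vector, giving $O(|V||U|)$ space. The algorithm performs one upward and one downward traversal, visiting each node once and executing a constant number of union and intersection operations per node, each costing $O(|U|)$ on bit vectors; hence the total running time is $O(|V||U|)$.
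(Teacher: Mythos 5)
Your proposal is correct, and it reaches the result by a genuinely different route in its second half. Both arguments rest on the same upward-pass invariants (the paper's equation \ref{eq:alpha-beta}): $\alpha(i)$ collects the variables of the subtree rooted at $i$, and $\beta(i)$ adds to $d(\psi_i)$ every variable shared by two distinct child subtrees. (Like the paper's proof, your bookkeeping for $\beta$ tacitly reads line 9 of Algorithm \ref{alg:MinimalLambdas} as the accumulation $\gamma(p_i):=\gamma(p_i)\cup\alpha(i)$; under the literal text $\gamma(p_i):=\gamma(i)\cup\alpha(i)$ a parent with three or more children would only ever compare consecutive child subtrees, so both proofs depend on the same intended reading.) After that point the proofs diverge. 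The paper never introduces your per-variable objects: it proves the running intersection property directly, by splitting an arbitrary path into an ascending and a descending segment and running a case analysis on the three possible shapes, using $\beta(i)\subseteq\lambda(i)\subseteq\alpha(i)$ and $\lambda(p_i)\cap\alpha(i)\subseteq\lambda(i)$; it then proves minimality (equation \ref{eq:MinimalLabeling}) by a separate contradiction argument. You instead prove, by induction along the downward pass, that the output is exactly the Steiner labelling $\lambda(i)=\{x\mid i\in T_x\}$, after which covering, running intersection (connectivity of each $T_x$), and minimality (an interior node of $T_x$ outside $S_x$ has at least two neighbours in $T_x$) are immediate, variable by variable. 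This buys a cleaner conceptual target and a strictly stronger by-product --- the output is the pointwise-smallest admissible labelling on the given tree, which is more than Lemma \ref{lem:LambdaBiggerThanDomain} asserts; indeed that lemma is only a self-consistency statement about separators, so your citation of it for the lower bound is not quite right, but your own covering-plus-running-intersection argument is what actually does that work, so nothing is missing. Finally, your bit-vector accounting supplies the $O(|V||U|)$ time and space bound, a part of the statement on which the paper's proof is entirely silent, so your proposal in fact covers more of the lemma than the paper does.
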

\begin{proof}
We will start proving that \noun{MinimalLambdas} asseses a covering
join tree. Let $\phi=\phi_{1},\dots,\phi_{n}$ be a valuation and
let $\mathcal{\mathcal{T}}=(V,E,\lambda,U)$ be a tree where the \noun{MinimalLambdas}
algorithm has been run. After the second loop we have

{\footnotesize{}
\begin{eqnarray}
\begin{array}{c}
\alpha(i)=d(\psi_{i})\cup\left(\bigcup_{j\in de(i)}d(\psi_{j})\right)\end{array} & \mbox{ and } & \beta(i)=d(\psi_{i})\cup\left(\bigcup_{\substack{{j,k\in ch(i)}\\
j\neq k
}
}\alpha(j)\cap\alpha(k)\right).\label{eq:alpha-beta}
\end{eqnarray}
}Notice that $\beta(i)\subseteq\alpha(i)$ and as a consequence $\lambda(i)=\beta(i)\cup\left(\lambda(p_{i})\cap\alpha(i)\right)\subseteq\alpha(i)\cup\left(\lambda(p_{i})\cap\alpha(i)\right)\subseteq\alpha(i).$
Also notice that for any $i\in de(j)$, $\alpha(i)\subseteq\alpha(j).$
After the third loop we get $\lambda(i)=\beta(i)\cup(\lambda(p_{i})\cap\alpha(i)).$
Thus, $\lambda(p_{i})\cap\alpha(i)\subseteq\lambda(i).$

Next we will prove that the runing intersection property is satisfied.
Let $n_{1},n_{2},\dots,n_{m}$ be the unique path between two given
nodes $n_{1},n_{m}\in V$. We want to see that $\lambda(n_{1})\cap\lambda(n_{m})\subseteq\lambda(n_{i}),$
for $1<i<m$. Notice that if $m\leq2$ it is trivially true. Therefore
we will suppose $m\geq3$. As long as $\mathcal{T}$ is a tree, the
previous path can be seen as the composition of two different paths,
one ascending path which grows up from $n_{1}$ up to $n_{i}$ with
$1\leq i\leq m$, and one descending path from $n_{i}$ to $n_{m}$.
That is $n_{j+1}=p_{n_{j}}$ for $1\leq j\leq i-1$ and $n_{j+1}\in ch(n_{j})$
for $i\leq j\leq m-1.$ Notice that if $n_{1}\neq n_{m}$ at most
one of these subpaths may be empty. Hence there are three possible
configurations for the paths either the descending path is empty,
or the ascending path is empty or no subpath is empty. Equivalently,
either $i=m$ or $i=1$ or $1\lneq i\lneq m$.
\begin{enumerate}
\item Assume that the descending path is empty, so $i=m$. We have that
$\lambda(n_{1})\cap\lambda(n_{m})\subseteq\lambda(n_{1})\subseteq\alpha(n_{1})\subseteq\alpha(n_{m-1})$
and $\lambda(n_{1})\cap\lambda(n_{m})\subseteq\lambda(n_{m})=\lambda(p_{n_{m-1}}).$
In conclusion we obtain $\lambda(n_{1})\cap\lambda(n_{m})\subseteq\alpha(n_{m-1})\cap\lambda(p_{n_{m-1}})\subseteq\lambda(n_{m-1})$,
and we have verificed that $n_{m-1}$ fulfills the condition. Now
since, by induction we have that $\lambda(n_{1})\cap\lambda(n_{m})\subseteq\lambda(n_{j}),$
for $1<j<m.$ 
\item In case the ascending path is empty we can consider the path from
$n_{m}$ to $n_{1}$ and use the previous argument, since $n_{m},n_{m-1},\dots,n_{1}$
is an ascending path.
\item Finally, if no subpath is empty it holds that $1\lneq i\lneq m.$
In this case, we have that $\lambda(n_{1})\subseteq\alpha(n_{1})\subseteq\alpha(n_{i-1})$
and $\lambda(n_{m})\subseteq\alpha(n_{m})\subseteq\alpha(n_{i+1})$.
Since $n_{i-1},n_{i+1}\in ch(n_{i})$ we obtain: $\lambda(n_{1})\cap\lambda(n_{m})\subseteq\alpha(n_{i-1})\cap\alpha(n_{i+1})\subseteq{\displaystyle \bigcup_{\substack{{j,k\in ch(i)}\\
j\neq k
}
}\alpha(j)\cap\alpha(k)\subseteq\beta(i)\subseteq\lambda(i)},$ and thus the condition is fulfilled for $i.$ As long as $n_{1},\dots,n_{i}$
is an ascending path and $n_{i},\dots,n_{m}$ is a descending path,
we can use the previous cases to check that $\lambda(n_{1})\cap\lambda(n_{i})\subseteq\lambda(n_{j})$
for $1\leq j\leq m,$ which concludes the proof since $\lambda(n_{1})\cap\lambda(n_{m})=\lambda(n_{1})\cap\lambda(n_{m})\cap\lambda(n_{i})\subseteq\lambda(n_{1})\cap\lambda(n_{i})\subseteq\lambda(n_{j}).$
\end{enumerate}
We have just shown that $\mathcal{T}$ is a join tree, but as long
as for all $\psi(i)$ it is satisfied $d(\psi_{i})\subseteq\lambda(i)$
we also have that $\mathcal{T}$ is actually a covering join tree.

Next, we will prove that $\mathcal{T}$ is minimally labeled. By lemma
\ref{lem:LambdaBiggerThanDomain} we already know that $\lambda(i)\supseteq d(\psi_{i})\cup\bigcup_{j\in ne(i)\setminus\{k\}}s_{ij}$
for all $i\in V$. We will prove by contradiction that $\lambda(i)=d(\psi_{i})\cup\bigcup_{j\in ne(i)\setminus\{k\}}s_{ij}.$
Assume that for some $i\in V$ there is a variable $x\in\lambda(i)$
and a $k'\in ne(i)$ such that $x\notin d(\psi_{i})\cup\bigcup_{j\in ne(i)\setminus\{k'\}}s_{ij}.$
Since $\lambda(i)=\beta(i)\cup(\lambda(p_{i})\cap\alpha(i))$, we
have that $x\in\beta(i)$ or $x\in\lambda(p_{i})\cap\alpha(i).$ 
\begin{enumerate}
\item If $x\in\beta(i),$ since by assumption $x\notin d(\psi_{i}),$ we
can conclude from equation \ref{eq:alpha-beta} that $x\in\bigcup_{\substack{{j,k\in ch(i)}\\
j\neq k
}
}\alpha(j)\cap\alpha(k)$. Nonetheless, if there are $j,k\in ch(i)$ such that $x\in\lambda(j)$
and $x\in\lambda(k)$, then by the running intersection property $x\in\lambda(i)$
and as a consequence $x\in s_{i,j}$ and $x\in s_{ik}.$ For any possible
value of $k',$ either $s_{ij}$ or $s_{ik}$ will be part of $\bigcup_{j\in ne(i)\setminus\{k'\}}s_{ij}.$,
and thus, we have a contradiction. 
\item If $x\in\lambda(p_{i})\cap\alpha(i)$. We have $x\in\alpha(i)=d(\psi_{i})\cup\left(\bigcup_{j\in de(i)}d(\psi_{j})\right)$
and $x\in\lambda(p_{i})$. Since by assumption $x\notin d(\psi_{i}),$
then $x\in\bigcup_{j\in de(i)}d(\psi_{j}).$ Nevertheless, $x\in\bigcup_{j\in de(i)}d(\psi_{j})$
implies $x\in\bigcup_{j\in de(i)}\lambda(j)$, and by the running
intersection property, there must exist at least one $j\in ch(i)$
such that $x\in\lambda(j)$, in particular $x\in s_{i,j}$, which
also contradicts our hypothesis since $x\in\lambda(p_{i})$ implies
$x\in s_{i,p_{i}}$.
\end{enumerate}
\end{proof}

\subsection{Basic properties of minimally labeled covering join trees}

In the following, let $\mathcal{\mathcal{T}}=(V,E,\lambda,U)$ be
a\emph{ }minimally labeled covering join tree\emph{.} 

Removing any edge $\{i,j$\} on the $\mathcal{T}$, breaks it into
two different trees: $\mathcal{T}_{i}^{-j},$ the one containing $i$
and $\mathcal{T}_{j}^{-i},$the one containing $j.$
\begin{lem}
\label{lem:Tree-Breakup}For any edge $\{i,j\}$ of $\mathcal{T}$, 

\textup{
\begin{equation}
\bigcup_{k\in\mathcal{T}_{i}^{-j}}\lambda(k)=d(\prod_{k\in\mathcal{T}_{i}^{-j}}\psi_{k})\label{eq:lambdaDIsProduct-1-1}
\end{equation}
}\end{lem}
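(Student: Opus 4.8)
The plan is to split the equality into two inclusions after rewriting the right-hand side in a more convenient form. First I would use axiom A2 repeatedly to obtain $d\big(\prod_{k\in\mathcal{T}_i^{-j}}\psi_k\big)=\bigcup_{k\in\mathcal{T}_i^{-j}}d(\psi_k)$, so that the statement becomes $\bigcup_{k\in\mathcal{T}_i^{-j}}\lambda(k)=\bigcup_{k\in\mathcal{T}_i^{-j}}d(\psi_k)$. The inclusion $\supseteq$ is then immediate, since $\mathcal{T}$ is a covering join tree and hence $d(\psi_k)\subseteq\lambda(k)$ for every node $k$ by Definition \ref{def:CoveringJoinTree}.

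The substance of the proof is the inclusion $\subseteq$, and this is exactly where minimal labelling (Assumption \ref{assu:Minimal}) is indispensable. I would first isolate the following fact: if $x\in\lambda(\ell)$ and at most one neighbour of $\ell$ has $x$ in its label, then $x\in d(\psi_\ell)$. To prove it, suppose $x\notin d(\psi_\ell)$; then for every $k\in ne(\ell)$, equation \ref{eq:MinimalLabeling} forces $x\in\bigcup_{j'\in ne(\ell)-\{k\}}s_{\ell j'}$, so some neighbour $j'\neq k$ satisfies $x\in s_{\ell j'}=\lambda(\ell)\cap\lambda(j')$, i.e. $x\in\lambda(j')$. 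As this holds for every choice of $k$, the node $\ell$ must have two distinct neighbours carrying $x$ (and if $\ell$ has a single neighbour the condition is already violated), contradicting the hypothesis.

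With this fact available, I would fix an arbitrary $x\in\lambda(k_0)$ with $k_0\in\mathcal{T}_i^{-j}$ and exhibit a node of $\mathcal{T}_i^{-j}$ whose $\psi$-domain contains $x$. Regarding $\mathcal{T}_i^{-j}$ as rooted at $i$, let $\ell$ be a node of $\mathcal{T}_i^{-j}$ with $x\in\lambda(\ell)$ at maximal distance from $i$; such a node exists because $k_0$ qualifies. Every child of $\ell$ is strictly farther from $i$, so by maximality no child carries $x$; consequently, if $\ell\neq i$ the only neighbour of $\ell$ that may carry $x$ is its parent, and if $\ell=i$ the only candidate is the node $j$ across the removed edge. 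In both cases $\ell$ has at most one neighbour carrying $x$, so the isolated fact yields $x\in d(\psi_\ell)$ with $\ell\in\mathcal{T}_i^{-j}$, establishing $\subseteq$.

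I expect the main obstacle to be precisely this $\subseteq$ inclusion: without minimal labelling a label could legitimately contain variables absent from every $\psi$-domain of the subtree, so the whole argument hinges on using equation \ref{eq:MinimalLabeling} to show that a variable cannot ``hide'' in a label --- it must either appear in some $\psi_\ell$ directly or lie in at least two separators, and the maximal-distance node cannot do the latter.
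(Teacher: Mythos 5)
Your proof is correct, and it takes a genuinely different route from the paper's. The paper roots the subtree at $i$ and inducts on its height: minimal labelling (equation \ref{eq:MinimalLabeling} with $k=j$) rewrites $\lambda(i)$ as $d(\psi_{i})\cup\bigcup_{j'\in ne(i)-\{j\}}s_{ij'}$, each separator $s_{ij'}$ is absorbed into $\bigcup_{k'\in\mathcal{T}_{j'}^{-i}}\lambda(k')$ because $s_{ij'}\subseteq\lambda(j')$, and the induction hypothesis turns each of those unions into $d(\prod_{k'\in\mathcal{T}_{j'}^{-i}}\psi_{k'})$, after which axiom A2 assembles the product. You instead argue variable by variable: you reduce, via A2 and the covering property $d(\psi_{k})\subseteq\lambda(k)$, to the one nontrivial inclusion $\bigcup_{k}\lambda(k)\subseteq\bigcup_{k}d(\psi_{k})$, and settle it with an extremal argument: among the nodes of $\mathcal{T}_{i}^{-j}$ whose label contains $x$, one at maximal distance from $i$ has at most one neighbour (its parent, or $j$ when that node is $i$ itself) carrying $x$, and your local fact (a variable in $\lambda(\ell)\setminus d(\psi_{\ell})$ must, by equation \ref{eq:MinimalLabeling}, occur in the labels of two distinct neighbours of $\ell$) then forces $x\in d(\psi_{\ell})$. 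Both arguments rest entirely on Assumption \ref{assu:Minimal}; what yours buys is a reusable local characterization of minimal labelling and a clean separation showing that the inclusion $\supseteq$ needs only the covering property, while the paper's induction has the advantage of producing the identity subtree by subtree, in exactly the form in which Lemma \ref{lem:Tree-Breakup} is later consumed to prove equations \ref{eq:lambdaDIsProduct} and \ref{eq:lambdaNDIsProduct}. Your handling of the degenerate single-neighbour case (which recovers the paper's base case $\lambda(i)=d(\psi_{i})$ for a leaf) and of the distinction between $\ell=i$ and $\ell\neq i$, where the neighbour $j$ across the removed edge must be accounted for, is sound, so there is no gap.
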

\begin{proof}
We can place $i$ at the root and use induction on the height of the
tree. 

If $i$ is a leaf, then it is trivially true, since both sides are
$d(\psi_{i})$. 

Let $i$ be a node with height $n$ and assume it is true whenever
the height is smaller than $n.$ Each node in $\mathcal{T}_{i}$ lies
in a subtree rooted at one of the children of $i,$ so $\bigcup_{k\in\mathcal{T}_{i}^{-j}}\lambda(k)=\lambda(i)\cup\left(\bigcup_{j'\in ne(i)-\{j\}}\bigcup_{k'\in\mathcal{T}_{j'}^{-i}}\lambda(k')\right)$
Now applying the minimally labeled assumption to $\lambda(i)$ with
$k=j$ we get $\lambda(i)=d(\psi_{i})\cup\bigcup_{j'\in ne(i)-\{j\}}s_{ij'}.$
Thus,$\bigcup_{k\in\mathcal{T}_{i}^{-j}}\lambda(k)=d(\psi_{i})\cup\left(\bigcup_{j'\in ne(i)-\{j\}}s_{ij'}\cup\bigcup_{k'\in\mathcal{T}_{j'}^{-i}}\lambda(k')\right).$
By definition each separator $s_{ij'}\subseteq\lambda(j'),$ and hence
$s_{ij'}\subseteq\bigcup_{k'\in\mathcal{T}_{j'}^{-i}}\lambda(k'),$
so we can remove the $s_{ij'}$ from the previous expression, getting
$\bigcup_{k\in\mathcal{T}_{i}^{-j}}\lambda(k)=d(\psi_{i})\cup\left(\bigcup_{j'\in ne(i)-\{j\}}\bigcup_{k'\in\mathcal{T}_{j'}^{-i}}\lambda(k')\right).$
Now we can apply the induction hypothesis on each children $j$, getting
$\bigcup_{k'\in\mathcal{T}_{j'}^{-i}}\lambda(k')=d(\prod_{k'\in\mathcal{T}_{j'}^{-i}}\psi_{k'})$
and the proof is finished.\end{proof}
\begin{cor}
\label{lem:LambdasInDomain} Let $\phi=\phi_{1}\times\dots\times\phi_{n}$
be a valuation and let $\mathcal{\mathcal{T}}=(V,E,\lambda,U)$ a
minimally labeled covering join tree for this factorization. Then,
$d(\phi)=\bigcup_{i\in V}\lambda(i)$.\end{cor}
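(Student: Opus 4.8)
The plan is to obtain the corollary directly from Lemma \ref{lem:Tree-Breakup} by splitting the tree across a single edge. First I would dispose of the degenerate case in which $V$ consists of a single node $r$: here $\phi=\psi_{r}$ by Definition \ref{def:CoveringJoinTree}, and since $r$ has no neighbours the minimal labeling assumption forces $\lambda(r)=d(\psi_{r})=d(\phi)$, so the claim is immediate.

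Now assume $\mathcal{T}$ has at least two nodes and fix any edge $\{i,j\}\in E$. Removing this edge partitions the node set as a disjoint union $V=\mathcal{T}_{i}^{-j}\cup\mathcal{T}_{j}^{-i}$, so that
\[
\bigcup_{k\in V}\lambda(k)=\Big(\bigcup_{k\in\mathcal{T}_{i}^{-j}}\lambda(k)\Big)\cup\Big(\bigcup_{k\in\mathcal{T}_{j}^{-i}}\lambda(k)\Big).
\]
I would then apply Lemma \ref{lem:Tree-Breakup} to each of the two subtrees, rewriting each union of labels as the domain of the corresponding partial product, which yields $\bigcup_{k\in V}\lambda(k)=d\big(\prod_{k\in\mathcal{T}_{i}^{-j}}\psi_{k}\big)\cup d\big(\prod_{k\in\mathcal{T}_{j}^{-i}}\psi_{k}\big)$.

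Finally, axiom A2 gives $d(\alpha\times\beta)=d(\alpha)\cup d(\beta)$, so the right-hand side equals $d\big(\prod_{k\in V}\psi_{k}\big)$; and since $\phi=\prod_{k\in V}\psi_{k}$ by Definition \ref{def:CoveringJoinTree}, this is exactly $d(\phi)$. Combining these identities gives $d(\phi)=\bigcup_{i\in V}\lambda(i)$, as desired. There is no genuine obstacle here: the only point requiring care is that the two subtrees arising from the removed edge really do partition $V$, which holds precisely because $\mathcal{T}$ is a tree and hence deleting any edge disconnects it into exactly two components.
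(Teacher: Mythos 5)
Your proof is correct, and it takes a genuinely different (and cleaner) route than the paper's. The paper proves this corollary by re-running the induction on tree height, ``parallel to'' the proof of Lemma \ref{lem:Tree-Breakup}; you instead use Lemma \ref{lem:Tree-Breakup} as a black box: delete one edge $\{i,j\}$, observe that $V$ is the disjoint union of the two components $\mathcal{T}_{i}^{-j}$ and $\mathcal{T}_{j}^{-i}$, convert each union of labels into the domain of the corresponding partial product via the lemma (applied once to each side of the edge), and merge the two domains with axiom A2 together with $\phi=\prod_{k\in V}\psi_{k}$ from Definition \ref{def:CoveringJoinTree}. What this buys is a derivation with no fresh induction at all --- the inductive work stays encapsulated in the cited lemma --- which is exactly what one expects of a statement labeled as a corollary; the paper's route repeats essentially the same inductive bookkeeping a second time.

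One small caveat on your single-node case: as literally stated, Assumption \ref{assu:Minimal} quantifies over $k\in ne(i)$, so for an isolated root $r$ with $ne(r)=\emptyset$ it is vacuous and does not, by its letter, force $\lambda(r)=d(\psi_{r})$; it only gives $d(\psi_{r})\subseteq\lambda(r)$ from the covering property. Under the intended reading of minimal labeling (no unnecessary variables, and consistent with Algorithm \ref{alg:MinimalLambdas}, which outputs $\lambda(r)=\beta(r)=d(\psi_{r})$ for a one-node tree) your conclusion holds, and the paper's own base case would face exactly the same issue, so this is a defect of the assumption's phrasing rather than of your argument --- but it is worth flagging that your degenerate case rests on that intended reading.
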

\begin{proof}
By induction on the height of the tree, parallel to the one of the
previous Lemma. \end{proof}
\begin{lem}
For any node $i$ of $V,$

\textup{
\begin{equation}
\lambda^{de}(i)=d(\prod_{j\in de(i)}\psi_{j})\label{eq:lambdaDIsProduct}
\end{equation}
}\end{lem}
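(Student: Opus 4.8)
The plan is to deduce this lemma from the Tree-Breakup Lemma (Lemma~\ref{lem:Tree-Breakup}, equation~\ref{eq:lambdaDIsProduct-1-1}) by decomposing the descendants of $i$ into the subtrees hanging from its children. First I would observe that, as node sets, $de(i)$ is the disjoint union over the children $c\in ch(i)$ of the subtrees rooted at those children: every descendant of $i$ is either a child of $i$ or a descendant of one of its children, and the components coming from distinct children are disjoint. Each such subtree is precisely $\mathcal{T}_{c}^{-i}$, the component containing $c$ once the edge $\{c,i\}$ is removed, and $\mathcal{T}_{c}^{-i}=\{c\}\cup de(c)$. Consequently $\lambda^{de}(i)=\bigcup_{j\in de(i)}\lambda(j)=\bigcup_{c\in ch(i)}\bigcup_{k\in\mathcal{T}_{c}^{-i}}\lambda(k)$.

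Next, for each child $c$ I would apply Lemma~\ref{lem:Tree-Breakup} to the edge $\{c,i\}$, taking $c$ in the role of $i$ and $i$ in the role of $j$; this yields $\bigcup_{k\in\mathcal{T}_{c}^{-i}}\lambda(k)=d(\prod_{k\in\mathcal{T}_{c}^{-i}}\psi_{k})$. Using axiom A2 to turn the label of a product into the union of the labels gives $d(\prod_{k\in\mathcal{T}_{c}^{-i}}\psi_{k})=\bigcup_{k\in\mathcal{T}_{c}^{-i}}d(\psi_{k})$. Substituting back and re-collecting the disjoint union over children produces $\lambda^{de}(i)=\bigcup_{j\in de(i)}d(\psi_{j})$, which by A2 once more equals $d(\prod_{j\in de(i)}\psi_{j})$, as desired.

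The only point requiring care is the boundary case where $i$ is a leaf, so that $ch(i)=de(i)=\emptyset$ and both unions are empty; there both sides reduce to $\emptyset$ under the convention that the empty product is the identity valuation $e$ with $d(e)=\emptyset$, so the statement holds trivially. I do not expect a genuine obstacle: the substantive work has already been carried out in Lemma~\ref{lem:Tree-Breakup}, and the present statement is essentially its specialization to the subtrees rooted at the children of $i$, with the children's contributions merged. An alternative route would apply Lemma~\ref{lem:Tree-Breakup} directly to the edge $\{i,p_{i}\}$, but that forces a separate treatment of the root (which has no parent edge) and does not cleanly separate $\lambda^{de}(i)$ from $\lambda(i)$, so I prefer the child-decomposition argument above.
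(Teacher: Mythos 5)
Your proof is correct and takes essentially the same route as the paper's: both decompose $de(i)$ into the subtrees $\mathcal{T}_{c}^{-i}$ hanging from the children of $i$, apply Lemma \ref{lem:Tree-Breakup} to each such subtree, and merge the results using axiom A2. The only cosmetic differences are that you pass through the intermediate expansion $\bigcup_{j\in de(i)}d(\psi_{j})$ and treat the leaf case explicitly, both of which the paper leaves implicit.
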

\begin{proof}
Every descendant of $i$ lies on the subtree of one of its childs.
Thus, $\lambda^{de}(i)=\bigcup_{j\in ch(i)}\bigcup_{k\in\mathcal{T}_{j}^{-i}}\lambda(k)$
and by direct application of Lemma \ref{lem:Tree-Breakup} we get
$\lambda^{de}(i)=\bigcup_{j\in ch(i)}d(\prod_{k\in\mathcal{T}_{j}^{-i}}\psi_{k}))=d(\prod_{j\in ch(i)}\prod_{k\in\mathcal{T}_{j}^{-i}}\psi_{k}))=d(\prod_{j\in de(i)}\psi_{j}).$ \end{proof}
\begin{lem}
For any node $i$ of $V,$\textup{
\begin{equation}
\lambda^{nde}(i)=d(\prod_{j\in nde(i)}\psi_{j})\label{eq:lambdaNDIsProduct}
\end{equation}
}\end{lem}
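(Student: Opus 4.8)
The plan is to derive this statement directly from Lemma \ref{lem:Tree-Breakup}, exactly mirroring how the companion equation \ref{eq:lambdaDIsProduct} for $\lambda^{de}(i)$ was obtained. The key observation is that the set of non-descendants of a node is precisely the node set of one of the two trees produced by cutting a single edge, so no induction is needed beyond what Lemma \ref{lem:Tree-Breakup} already encapsulates.

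First I would dispose of the case $i=r$ separately. When $i$ is the root, $de(i)\cup\{i\}=V$, so $nde(i)=V\setminus(de(i)\cup\{i\})=\emptyset$. Then the left-hand side is the empty union $\lambda^{nde}(i)=\emptyset$, and the right-hand side is $d(\prod_{j\in\emptyset}\psi_{j})=d(e)=\emptyset$, so the equality holds trivially.

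For a non-root node $i$, I would consider the edge $\{i,p_{i}\}$ joining $i$ to its parent. Removing this edge splits $\mathcal{T}$ into the subtree $\mathcal{T}_{i}^{-p_{i}}$ rooted at $i$, whose node set is $de(i)\cup\{i\}$, and the complementary tree $\mathcal{T}_{p_{i}}^{-i}$ containing $p_{i}$. Since the node set of $\mathcal{T}_{p_{i}}^{-i}$ is exactly $V\setminus(de(i)\cup\{i\})=nde(i)$, applying Lemma \ref{lem:Tree-Breakup} to this edge (with the roles of $i$ and $j$ taken by $p_{i}$ and $i$) yields
\[
\lambda^{nde}(i)=\bigcup_{k\in\mathcal{T}_{p_{i}}^{-i}}\lambda(k)=d\Big(\prod_{k\in\mathcal{T}_{p_{i}}^{-i}}\psi_{k}\Big)=d\Big(\prod_{j\in nde(i)}\psi_{j}\Big),
\]
which is the desired identity.

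I do not expect a genuine obstacle here; the only point requiring care is the bookkeeping identification of the node set of $\mathcal{T}_{p_{i}}^{-i}$ with $nde(i)$, which is immediate from the definition $nde(i)=V\setminus(de(i)\cup\{i\})$, together with the separate handling of the root so that the empty product (the identity $e$, with $d(e)=\emptyset$) matches the empty union.
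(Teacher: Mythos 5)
Your proof is correct and follows essentially the same route as the paper: both apply Lemma \ref{lem:Tree-Breakup} to the edge $\{i,p_{i}\}$ and identify the node set of $\mathcal{T}_{p_{i}}^{-i}$ with $nde(i)$. Your explicit treatment of the root case (where no parent edge exists, so both sides reduce to $\emptyset$ via the empty union and $d(e)=\emptyset$) is a small but welcome refinement that the paper's one-line proof leaves implicit.
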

\begin{proof}
Directly applying Lemma \ref{lem:Tree-Breakup} to the link $\{p_{i},j\},$
since the set of nodes in $\mathcal{T}_{p_{i}}^{-i}$ is exactly $nde(i).$ 
\end{proof}

\section{Piecewise and projective extensibility\label{app:Piecewise-and-projective}}

In this section we concentrate on proving proposition \ref{prop:completabilities}.
Let us start by recalling it. \completabilities*

We will provide an example of valuation algebras and extension system
in each of the four categories. 

A simple example of valuation algebra and extension system such that
none of the completabilities are satisfied is the one provided in
counterexample \ref{cnt:Boolean2}. As for the fourth category, any
valuation algebra induced by the semiring $(\mathbb{R},\max,\cdot)$
satisfies both piecewise and projective extensibility. The valuation
algebra presented in counterexample \ref{cnt:NotSquareOrdered} satisfies
piecewise completability but does not satisfy projective completability.
Next, we provide an example of valuation algebra satisfying projective
completability but not piecewise completability.

Let $U=\{x,y\}$ be a set with two variables. Let $D_{x}=D_{y}=\{0,1\}$
and $\Omega$ the set of all tuples. We have that $\langle U,\Omega\rangle$
are a variable system. Consider the valuation algebra induced by the
semiring $(\mathbb{R},\max,+)$. Let $\phi_{1}:\Omega_{X}\rightarrow\mathbb{R}$,
and $\phi_{2}:\Omega_{Y}\rightarrow\mathbb{R},$ be two valuations
defined as 
\[
\begin{array}{ccc}
\phi_{1}((x\mapsto0))=2 & \qquad & \phi_{2}((y\mapsto0))=2\\
\phi_{1}((x\mapsto1))=1 & \qquad & \phi_{2}((y\mapsto1))=1
\end{array}
\]
Taking $\Psi=\{\phi_{1}^{a}\times\phi_{2}^{b}\times(\phi_{1}^{\downarrow\emptyset})^{c}\times(\phi_{2}^{\downarrow\emptyset})^{d}\}$,
it is easy to prove that $(\Psi,U)$ fulfils the axioms of a valuation
algebra. 

Next, we have to define the extension sets in $(\Psi,U')$. We will
build a new extension system $\mathcal{\overline{W}}$ in the following
way:
\begin{itemize}
\item For $\phi_{1}$ we define its extensible solutions $\overline{W}_{\phi_{1}}^{\emptyset}(\diamond)=\overline{c}_{\phi_{1}}=\{(x\mapsto0),(x\mapsto1)\}$ 
\item For $\phi_{2}$ we define $\overline{W}_{\phi_{1}}^{\emptyset}(\diamond)=\overline{c}_{\phi_{2}}=\{(y\mapsto0),(y\mapsto1)\}.$
\item For any other valuation $\psi\in\Psi$, with $d(\psi)=X$ we define
$\overline{W}_{\psi}^{\emptyset}(\diamond)=\overline{c}_{\psi}=\{(x\mapsto0)\}.$
\item For any other valuation $\psi\in\Psi$, with $d(\psi)=Y$ we define
$\overline{W}_{\psi}^{\emptyset}(\diamond)=\overline{c}_{\psi}=\{(y\mapsto0)\}.$ 
\item For any other valuation $\psi\in\Psi$, with $d(\psi)=X\cup Y$ we
define $\overline{W}_{\psi}^{\emptyset}(\diamond)=\overline{c}_{\psi}=\{((x,y)\mapsto(0,0))\}.$ 
\end{itemize}
This definition guarantees that $\overline{\mathcal{W}}$ is an extension
system on $(\Psi,U).$ 

We will now see that the valuation algebra $(\Psi,U)$ with extension
system $\overline{\mathcal{W}}$ satisfies projectitve extensibility
but does not satisfy piecewise extensibility. 

For any valuation with domain $X$ it is immediate to prove that it
is projective extensible since there is no domain $\emptyset\subsetneq D\subsetneq X$.
Same holds for any valuation with domain $Y.$ 

For any valuation $\psi$, such that $d(\psi)=X\cup Y$ we have $\overline{c}_{\psi}=\{((x,y)\mapsto(0,0))\}.$
Additionaly it holds $\overline{c}_{\psi^{\downarrow X}}=\{(x\mapsto0)\}$
and $\overline{c}_{\psi^{\downarrow Y}}=\{(y\mapsto0)\}$. In particular
we have that$\psi$ is projective extenible.

We have just shown that all the valuations in $(\Psi,U)$ are projective
extensible. Hence we only have to find a valuation which is not piecewise
extensible. Let $\phi=\phi_{1}\times\phi_{2}$. Since $\overline{c}_{\phi^{\downarrow X}}=\{(x\mapsto0)\}$
and $\overline{c}_{\phi_{2}}=\{(y\mapsto0),(y\mapsto1)\}$ we have
\[
CO(c_{\phi^{\downarrow X}},\phi_{2})=\{(x,y)\rightarrow(0,0),(x,y)\rightarrow(0,1)\}\not\subseteq\bar{c}_{\phi}=\{((x,y)\mapsto(0,0))\}
\]

Hence $\phi$ is not piecewise extensible. In particular, all the
valuations in $(\Psi,U)$ with extension system $\overline{\mathcal{W}}$
are projective extensible but not all of them are projective extensible.
Indeed, it can be seen that the only piecewise extensible valuations
are $\phi_{1}$ and $\phi_{2}.$

\section{Some selective semirings properties}
\begin{defn}
Let $(R,+,\cdot)$ be a semiring. If for each $a\in R,$ $a+a=a,$
the semiring is \emph{idempotent.}\end{defn}
\begin{cor}
\label{cor:SelectiveSemiring}Let $(R,+,\cdot)$ be a commutative
semiring. Then $(R,+,\cdot)$ is selective if, and only if, $(R,+,\cdot)$
is totally ordered and idempotent.\end{cor}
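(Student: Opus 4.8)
The plan is to prove the two directions of the equivalence separately, using throughout the \emph{natural order} associated with the additive structure, namely $a\leq b \Longleftrightarrow a+b=b$, exactly as introduced just above in the selective case. The key observation is that this same relation is the canonical semilattice order whenever $+$ is idempotent, so both hypotheses of the corollary can be phrased in terms of it.

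First I would prove that selectivity implies idempotency together with total ordering. Idempotency is immediate: applying the selective property to the pair $(a,a)$ forces $a+a=a$, since both alternatives coincide. For the order, I would check that $\leq$ is a partial order — reflexivity is idempotency, while antisymmetry and transitivity follow from associativity and commutativity of $+$ (e.g. if $a+b=b$ and $b+c=c$ then $a+c=a+(b+c)=(a+b)+c=b+c=c$) — and that it is total: for any $a,b$ selectivity yields $a+b=a$ or $a+b=b$, that is $b\leq a$ or $a\leq b$. Alternatively this totality is precisely the Gondran result already cited above, so one may simply invoke it.

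For the converse I would assume $R$ idempotent and totally ordered and derive selectivity directly. Given arbitrary $a,b\in R$, totality says they are comparable under $\leq$. If $a\leq b$ then by definition $a+b=b$; if instead $b\leq a$ then $b+a=a$, which by commutativity reads $a+b=a$. In either case one of the two selective alternatives holds, so $R$ is selective.

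The main obstacle here is not computational but interpretive: it lies in pinning down that ``totally ordered'' refers precisely to the canonical order $a\leq b\Longleftrightarrow a+b=b$ induced by the idempotent addition, rather than to some externally imposed order. Once this identification is made explicit — and it is the only sensible reading, since idempotency is exactly what guarantees reflexivity of this relation — every step reduces to unwinding the definition of $\leq$, and no genuine difficulty remains.
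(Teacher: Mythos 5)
Your proposal is correct, and both directions go through, but it differs from the paper's proof in one substantive choice: which order ``totally ordered'' refers to. You read $a\leq b$ as $a+b=b$ (the semilattice order), under which the ``if'' direction becomes pure definition-unwinding: totality immediately yields $a+b=a$ or $a+b=b$. The paper instead reads $a\leq b$ as the existence of $c\in R$ with $a+c=b$ (the canonical difference preorder, which makes sense and is reflexive in any commutative semiring, idempotent or not), and the entire content of its ``if'' direction is the computation $a+b=a+(a+c)=(a+a)+c=a+c=b$, i.e.\ showing that under idempotency the existential order collapses to the absorption order. So what you dismiss as a purely interpretive matter is exactly where the paper puts its mathematical work: its one-line computation is the proof that your reading and its reading coincide once $+$ is idempotent, which is what makes the corollary insensitive to the choice. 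Your version buys a shorter argument at the price of committing to the absorption relation as the meaning of ``totally ordered'' (note that without idempotency that relation is not even reflexive, so strictly speaking your reading only makes sense jointly with the idempotency hypothesis, whereas the paper's preorder is defined unconditionally). For the converse direction the two proofs agree: idempotency from selectivity applied to $(a,a)$, and totality either by direct verification, as you do, or by citing Proposition 3.4.7 of Gondran and Minoux, as the paper does and you also offer as an alternative.
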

\begin{proof}
Assume now that $(R,+,\cdot)$ is idempotent and totally ordered and
take $a,b\in R$. Without loss of generality we can assume $a\leq b$,
i.e. there is $c\in R$ such that $a+c=b$. Therefore $a+b=a+(a+c)=(a+a)+c=a+c=b.$
This proves the if part.

To prove the only if part, note that any selective semiring is idempotent.
Moreover, we have already seen that as a consequence of Proposition
3.4.7 in \cite{Gondran2008}, any selective semiring is totally ordered. \end{proof}
\begin{defn}
A selective semiring is \emph{strict monotonic} if whenever $c\neq0,$
$a<b$ implies that $a\cdot c<b\cdot c.$\emph{ }

A selective semiring is \emph{multiplicatively cancellative} if whenever
$c\neq0,$ $a\cdot c=b\cdot c$ if and only if \textbf{$a=b.$}\end{defn}
\begin{prop}
\label{prop:Strict-Iff-Cancellative}Let $(R,+,\cdot)$ be a selective
semiring. Then $(R,+,\cdot)$ is strict monotonic if and only if $(R,+,\cdot)$
is multiplicatively cancellative.\end{prop}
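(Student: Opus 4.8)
The plan is to prove both implications by first establishing a weak (non-strict) monotonicity property that holds in \emph{every} selective semiring, and then using the totality of the order $\leq$ to pass between strict and non-strict comparisons.

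First I would record that multiplication is always monotonic with respect to $\leq$, independently of either hypothesis. Indeed, if $a\leq b$ then $a+b=b$ by the definition of the order; multiplying both sides by $c$ and distributing gives $a\cdot c + b\cdot c = b\cdot c$, which is precisely $a\cdot c\leq b\cdot c$. This observation, a direct consequence of distributivity, is the common ingredient of both directions, and it is the only step that requires a small computation.

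For the forward direction, assuming strict monotonicity, the only nontrivial half of cancellativity is to deduce $a=b$ from $c\neq 0$ and $a\cdot c=b\cdot c$ (the implication $a=b\Rightarrow a\cdot c=b\cdot c$ being immediate). I would argue by contradiction: if $a\neq b$, then since $\leq$ is a total order we may assume without loss of generality $a<b$, and strict monotonicity forces $a\cdot c < b\cdot c$, contradicting the assumed equality. Hence $a=b$.

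For the converse, assuming multiplicative cancellativity, I would take $c\neq 0$ and $a<b$ and show $a\cdot c<b\cdot c$. The weak monotonicity established above already yields $a\cdot c\leq b\cdot c$, so it suffices to exclude equality; but $a\cdot c=b\cdot c$ together with $c\neq 0$ would give $a=b$ by cancellativity, contradicting $a<b$. Since $\leq$ is total, $a\cdot c\leq b\cdot c$ combined with $a\cdot c\neq b\cdot c$ yields the strict inequality. The main (and only mildly delicate) point is the weak monotonicity step; once it is in place, both directions reduce to a short argument by contradiction that exploits the totality of $\leq$.
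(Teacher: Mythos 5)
Your proof is correct and takes essentially the same route as the paper's: the weak monotonicity step (from $a\leq b$, i.e.\ $a+b=b$, multiply by $c$ and distribute to get $a\cdot c+b\cdot c=b\cdot c$, i.e.\ $a\cdot c\leq b\cdot c$) is exactly the paper's computation in the cancellative-implies-strict-monotonic direction, and both directions then finish by a short contradiction argument exploiting the totality of $\leq$. The only (cosmetic) difference is that the paper proves the strict-monotonic-implies-cancellative direction under the hypothesis $a\cdot c\neq 0$, while you work directly with $c\neq 0$ as in the stated definition, which is if anything slightly more faithful.
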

\begin{proof}
Assume that $(R,+,\cdot)$ is multiplicatively cancellative. Given
$a,b,c\in R$ with $c\neq0$ we want to see that $a<b\Rightarrow a\cdot c<b\cdot c$.
Since $a<b$ we have that $b=a+b$. By multiplying by $c$ at both
sides of the equality we get $b\cdot c=(a+b)\cdot c=a\cdot c+b\cdot c$.
Hence, there exist $d=b\cdot c\in R$ such that $a\cdot c+d=b\cdot c$.
By definition of the canonical order induced by $+$ we have $a\cdot c\leq b\cdot c.$
Since we have multiplicative cancellativity $a\cdot c=b\cdot c$ implies
$a=b$ which is a contractiction. Hence $a\cdot c\neq b\cdot c$.
In particular $a\cdot c<b\cdot c.$

Assume that $(R,+,\cdot)$ is strict monotonic. Given $a,b,c\in R$
with $a\cdot c\neq0$we want to see that $a=b\Leftrightarrow a\cdot c=b\cdot c$.
Notice that $a=b$ always implies $a\cdot c=b\cdot c,$so we only
have to prove the inverse implication. Assume $a\cdot c=b\cdot c$
holds. Since the semiring is totally ordered we have either $a\leq b$
or $b\leq a.$ Since $b\cdot c=a\cdot c\neq0$ we can assume without
loss of generality that $a\leq b.$ If $a\lneq b$ then by strict
monotonicity we have $a\cdot c\lneq b\cdot c$ which is a contradiction.
Hence $a=b.$
\end{proof}

\pagebreak

\bibliographystyle{plain}
\bibliography{library}

\end{document}